\pdfoutput=1
\documentclass[10pt]{article}
\usepackage[margin=1in]{geometry}
\usepackage{yub}
\usepackage[compress,numbers]{natbib}
\usepackage{parskip}
\usepackage{enumitem}
\usepackage{tikz}
\usepackage{hyperref}       %
\usepackage{verbatim}
\usepackage{algorithm}
\usepackage[noend]{algorithmic}
\usepackage{subcaption}
\usepackage{graphicx}

\allowdisplaybreaks

\usepackage{makecell}
\definecolor{light-gray}{gray}{0.85}
\usepackage{colortbl}

\newcommand{\cO}{\mc{O}}

\newcommand{\tO}{\wt{\mc{O}}}

\renewcommand{\epsilon}{\eps}

\renewcommand{\hat}{\widehat}

\renewcommand{\bar}{\overline}

\def\cX{{\mathcal X}}

\def\cF{{\mathcal F}}

\def\id{\mathbf{I}}

\mathchardef\mhyphen="2D

\newenvironment{talign*}
 {\csname align*\endcsname}
 {\endalign}

\newcommand{\loss}[1]{\ell\paren{#1}}
\newcommand{\populoss}[1]{L_D(#1)}

\newcommand{\sequenceX}[1]{\bx^{#1}_{0:N}}

\newcommand{\tfwm}{{f}_{M}^{\bW}}
\newcommand{\tfwmb}{{f}_{M}^{\bW,\bW_0}}
\newcommand{\fnclass}{\mathcal{V}_M}
\newcommand{\dn}{\frac{1}{n}}
\newcommand{\sumn}{\sum_{j=1}^n}
\newcommand{\rkhsbound}{B\paren{f_\star}} 
\newcommand{\biasbound}{B(g_\star, L)}
\newcommand{\tflp}{f_M^{\textsc{MLP}}}
\newcommand{\cV}{{\mathcal V}}
\newcommand{\ltwob}[1]{\Big\|#1\Big\|_2}
\newcommand{\mlpbound}{B_{\rm MLP}(f_\star)}

\newcommand{\feasy}{{\wt{g}_{\star}}}
\newcommand{\fhard}{{g_{\star}}}
\newcommand{\Fr}{{\sf Fr}}
\newcommand{\RF}{{\tt RF}}
\newcommand{\RFA}{{\tt RFA}}
\newcommand{\RFMLP}{{\tt RFMLP}}
\newcommand{\BRFA}{{\tt BRFA}}
\newcommand{\Poly}{{\rm Poly}}
\newcommand{\Unif}{{\sf Unif}}
\newcommand{\mbf}{{\mathbf{f}}}
\newcommand{\bG}{{\mathbf{G}}}

\newcommand{\sfP}{{\mathsf P}}

\newcommand{\Bvone}{{K_{1}}}
\newcommand{\Bvtwo}{{K_{2}}}

\newcommand{\de}{{\rm d}}
\newcommand{\cD}{{\mathcal D}}
\newcommand{\cG}{{\mathcal G}}

\def\vm{{\bm{m}}}

\def\bA{{\mathbf A}}

\def\bD{{\mathbf D}}

\def\bG{{\mathbf G}}

\def\bI{{\mathbf I}}
\def\bK{{\mathbf K}}

\def\bQ{{\mathbf Q}}
\def\bS{{\mathbf S}}

\def\bV{{\mathbf V}}
\def\bW{{\mathbf W}}

\def\bbeta{{\boldsymbol \beta}}
\def\bsigma{{\boldsymbol \sigma}}

\def\be{{\mathbf e}}

\def\bk{{\mathbf k}}
\def\bq{{\mathbf q}}

\def\bu{{\mathbf u}}
\def\bv{{\mathbf v}}
\def\bw{{\mathbf w}}
\def\bx{{\mathbf x}}
\def\by{{\mathbf y}}

\newcommand{\barsig}{\sigma}

\newcommand{\bzero}{{\mathbf 0}}

\def\bX{{\mathbf X}}
\def\tbX{\tbx_0\tbx_{i}^\top}
\def\tbXp{\tbx_0^\prime(\tbx_{j}^\prime)^\top}
\def\tbx{{\wt{\mathbf x}}}
\def\bId{{\bI_{d\times d}}}
\def\fwz{f^{\bW_0}}
\def\He{{\rm He}}
\def\Px{{P}}

\hypersetup{
    colorlinks,
    linkcolor={blue!70!black},
    citecolor={blue!70!black},
}
\colorlet{linkequation}{blue}

\usepackage{cleveref}
\crefformat{equation}{#2(#1)#3}
\Crefformat{equation}{#2(#1)#3}

\def\shownotes{1}  %
\ifnum\shownotes=1
\newcommand{\authnote}[2]{{\scriptsize $\ll$\textsf{#1 notes: #2}$\gg$}}
\else
\newcommand{\authnote}[2]{}
\fi
\newcommand{\yub}[1]{{\color{red}\authnote{Yu}{#1}}}

\newcommand{\hengyu}[1]{{\color{green}\authnote{Hengyu}{#1}}}

\title{What Can a Single Attention Layer Learn? A Study Through the Random Features Lens}

\author{
Hengyu Fu\thanks{Equal contributions.}\hspace{.35em}\thanks{Peking University. Email: \texttt{2100010881@stu.pku.edu.cn}}
\and
Tianyu Guo\footnotemark[1]\hspace{.35em}\thanks{UC Berkeley. Email: \texttt{\{tianyu\_guo,songmei\}@berkeley.edu}}
\and
Yu Bai\thanks{Salesforce AI Research. Email: \texttt{yu.bai@salesforce.com}}
\and
Song Mei\footnotemark[3]
}

\begin{document}

\maketitle

\begin{abstract}

Attention layers---which map a sequence of inputs to a sequence of outputs---are core building blocks of the Transformer architecture which has achieved significant breakthroughs in modern artificial intelligence. This paper presents a rigorous theoretical study on the learning and generalization of a single multi-head attention layer, with a sequence of key vectors and a separate query vector as input. We consider the \emph{random feature} setting where the attention layer has a large number of heads, with randomly sampled frozen query and key matrices, and trainable value matrices. We show that such a random-feature attention layer can express a broad class of target functions that are permutation invariant to the key vectors.  We further provide quantitative excess risk bounds for learning these target functions from finite samples, using random feature attention with finitely many heads.

Our results feature several implications unique to the attention structure compared with existing random features theory for neural networks, such as (1) Advantages in the sample complexity over standard two-layer random-feature networks;  (2) Concrete and natural classes of functions that can be learned efficiently by a random-feature attention layer; and (3) The effect of the sampling distribution of the \emph{query-key} weight matrix (the product of the query and key matrix), where Gaussian random weights with a non-zero mean result in better sample complexities over the zero-mean counterpart for learning certain natural target functions. Experiments on simulated data corroborate our theoretical findings and further illustrate the interplay between the sample size and the complexity of the target function. 
\end{abstract}

\section{Introduction}

The transformer architecture~\citep{vaswani2017attention} has achieved remarkable recent successes in many areas of artificial intelligence (AI) such as vision, language, speech, graph processing, reinforcement learning, and more recently general AI capabilities~\citep{devlin2018bert,dong2018speech,dosovitskiy2020image,brown2020language,radford2021learning,ying2021transformers,chen2021decision,reed2022generalist,openai2023gpt,bubeck2023sparks}. A central building block in transformers is the \emph{attention layers}~\citep{bahdanau2014neural}---sequence-to-sequence mappings that allow each token within the input sequence to ``attend to'' other tokens that are most relevant to the present token, and produce outputs based on those tokens. Attention layers implement this mechanism in a compact way that allows them to handle sequences of arbitrary length using a fixed set of parameters, a crucial reason behind their success in handling long input sequences.

Despite its wide applicability, the theoretical properties of attention layers are less well understood. While multi-layer attention networks (transformers) have been shown to be universal approximators for certain classes of functions, such as equivariant sequence-to-sequence functions~\citep{yun2019transformers}, their results only focus on the expressive power and do not account for learning from finite samples. Another line of work derives generalization bounds for learning with \emph{multi-layer} transformers in terms of the number of layers, heads, and weight norms~\citep{wei2022statistically,edelman2022inductive}, yet the results are either instantiated on specific target functions such as sparse boolean functions~\citep{edelman2022inductive}, or generic but arguably elusive function classes such as Turing machines~\citep{wei2022statistically}. Understandings about the more basic building block---a \emph{single} attention layer--- remain largely open. This is in stark contrast with the situation for fully connected neural networks, where there is by now a decent understanding of the learning and generalization in the important basic case of \emph{two-layer} neural networks on generic and natural function classes (e.g.,~\citep{bach2017breaking,li2018learning,arora2019fine,ghorbani2021linearized} and many other results along the line). This motivates the following open question: 
\begin{center}
    \emph{What function classes can be learned by a \textbf{single attention layer} with benign sample complexities?}
\end{center}

This work makes progress on this problem by studying the learning and generalization with a single attention layer in the \emph{random feature} setting~\citep{rahimi2007random,rahimi2008weighted,daniely2017sgd,yehudai2019power}, in which the query and key matrices are frozen at their random initialization, and the value matrices remain to be learnable parameters. Motivated by the attention structure in practical architectures, we consider attention layers that take in a single \emph{query token} $\bx_0\in\R^d$ and $N$ \emph{key tokens} $\sets{\bx_i}_{i\in[N]}$ as the input, and produce a scalar-valued output---A simplified setting capturing the essence (the interaction between the query and keys) of attention models.
We study the sample complexity of learning certain target functions (of $\bx_{0:N}$) using an attention layer with a large but finite number of heads, and finitely many samples. %

Our contributions are summarized as follows.
\begin{itemize}[leftmargin=2em,topsep=0pt]
    \item We show that a Random Feature Attention layer (the \RFA{} model) with a sufficiently large number of heads can express a broad class of target functions that are averages over a generic function of two tokens, which are in particular permutation invariant with respect to the key tokens (Section~\ref{sec:expressivity of rf}). We give several natural examples of target functions in this class (Section~\ref{sec::examples and comparison}) with concrete bounds on the number of heads and weight norms.
    \item We derive an $\tO(\sqrt{\rkhsbound/n})$ excess risk bound for learning with the \RFA{} model with sufficiently many heads, where $\rkhsbound$ is an inherent complexity measure of the target function $f_\star$ and $n$ is the sample size (Section~\ref{section:sample complexity}). When instantiated on the aforementioned examples, the bounds only depend on the input dimension and not the number of key tokens, improving over a naive two-layer random feature neural network model (\RFMLP{}). Such improvement is expected due to the permutation invariance structure of target functions, aligning with the attention mechanism. 
    \item Towards moving beyond standard random feature settings, we study a \emph{biased} \RFA{} model where the query-key matrices (product of transposed query matrices and key matrices) are drawn from a distribution with a non-zero mean (more precisely the identity matrix as the mean), motivated by a similar observation on learned attention layers in practice. We show that this model achieves provably superior sample complexity than the standard zero-mean \RFA{} for learning certain functions of the \emph{correlations} between the query token and key tokens (Section~\ref{section:bias attention}).
    \item Experiments on simulated data verify our theoretical findings in realistic settings of learning from finite samples using a \RFA{} layer with a mild number of heads, and characterize the interplay between the complexity of the target function and the sample size (Section~\ref{sec:experiments}).
\end{itemize}

\subsection{Related work}

\paragraph{Transformers } The Transformer architecture, initially proposed by \cite{vaswani2017attention}, brought about a revolutionary change in natural language processing and has been widely adopted in large language models such as GPT and BERT \cite{radford2018improving, devlin2018bert, brown2020language}. At the core of transformers lie the \emph{attention layers}, which were originally introduced as neural network modules for machine translation tasks \citep{bahdanau2014neural, kim2017structured, parikh2016decomposable}.  

A recent line of work investigated the capabilities of transformers by viewing transformers to be function approximators \cite{yun2019transformers} or computational models \cite{wei2022statistically, perez2019turing, yao2021self, bhattamishra2020computational, liu2022transformers}, and using transformers to perform synthetic reasoning tasks \cite{zhang2022unveiling}. Among these works, the closest to our work is \cite{yun2019transformers}, which shows that multi-layer transformers can approximate any permutation-equivariant sequence-to-sequence function, and any function if positional encodings are added.  Our paper instead uses a single attention layer to approximate sequence-to-scaler functions and focuses on the generalization property with quantitative bounds.

In terms of generalization properties of transformers, \cite{edelman2022inductive} analyzed the generalization bound of a single attention network through the Rademacher complexity and showed that a single self-attention head could efficiently represent a sparse function of the input sequence. Our paper also studies the generalization bound of a single attention network, but from the different perspective of kernel methods. The kernel limit of transformers was derived in \cite{hron2020infinite, yang2020tensor}, which shows that multi-head attention architectures behave as Gaussian processes as the number of heads tends to infinity. However, they do not study the representation power of the limiting kernel. 

Besides approximation and generalization capabilities, recent work also studied the limitations \cite{hahn2020theoretical, bhattamishra2020ability}, internal working mechanisms \cite{elhage2021mathematical, snell2021approximating, weiss2021thinking, olsson2022context}, and in-context learning capabilities \cite{brown2020language, xie2021explanation, garg2022can, von2022transformers, akyurek2022learning, dai2022can, giannou2023looped, li2023transformers} of Transformer models.

\paragraph{Theorey of random features and neural tangent kernels}
A recent line of work \cite{daniely2017sgd, li2018learning, du2018gradient, du2019gradient, allen2019convergence, arora2019fine, zou2020gradient,oymak2020toward, chizat2019lazy} studied the training dynamics of overparametrized neural networks under certain random initialization, and showed that it converges to a kernel estimator, which corresponds to the “neural tangent kernel” (NTK). These works suggested that one could use kernel or random-feature models \cite{rahimi2007random} to study the properties of deep neural networks. 

For NTK of MLPs and their corresponding random-feature models, there is a vast number of literature that studies their approximation power \cite{barron1993universal, pinkus1999approximation, bach2017breaking}, as well as their generalization properties \cite{bartlett2002rademacher, caponnetto2007optimal, weinan2019comparative, weinan2019barron, liang2020just, liang2019risk,rahimi2008weighted,rudi2017generalization, yehudai2019power,mei2022generalization,mei2022generalization2, ghorbani2021linearized, ghorbani2020neural}. More recently, a line of work studies the NTK beyond MLPs, including convolution networks \cite{li2019enhanced, bietti2019inductive, mei2021learning, misiakiewicz2021learning, bietti2021sample, bietti2022approximation}, residual networks \cite{huang2020deep, tirer2022kernel, allen2019can}, graph networks \cite{xu2020neural, jegelka2022theory}, and transformers \cite{hron2020infinite, yang2020tensor}. 

Although the kernel approach is a powerful tool for studying neural networks, it received criticism since it does not capture the feature learning of neural networks. Going beyond the kernel regime, a series of works used the mean field method to establish the evolution of the network parameters via a Wasserstein gradient flow \cite{mei2018mean, bach2021gradient, chizat2018global, rotskoff2018neural}. 
Several other mechanisms have been proven to obtain superior results over the NTK, including the Quadratic NTK~\citep{allen2019learning,bai2019beyond,chen2020towards,nichani2022identifying}, regularization~\citep{wei2019regularization}, Neural Tangent Hierarchy~\cite{dyer2019asymptotics,huang2020dynamics}, representation learning~\citep{damian2022neural}, and staircase-like mechanisms~\citep{abbe2022merged,abbe2023sgd}. %

\section{Preliminaries}

\newcommand{\Qm}{\bQ_m}
\newcommand{\Km}{\bK_m}
\renewcommand{\vm}{\bv_m}

We consider a sequence of $N + 1$ input tokens $\sequenceX{} = (\bx_0, \{\bx_i \}_{i \in [N]}) \in \cX = (\R^d)^{N+1}$, where each $\{ \bx_i \}_{i \in [N]} \subseteq \R^d$ represents a sequence of \emph{key vectors}, and $\bx_0$ represents the \emph{query vector}. Throughout the paper, we consider scalar-valued attention models, which take a sequence as input and give a scalar output in $\R$. This model simplifies---but preserves the essence of---a full sequence-to-sequence self-attention on $\bx_{1:N}$, by taking the query token to be $\bx_0$ instead of one of $\bx_i$, $i\in[N]$.

\paragraph{Attention layer}
We consider a scalar-valued, $M$-head, multiplicative  attention layer that takes $\sequenceX{} = (\bx_0, \{\bx_i \}_{i \in [N]})$ as the input. The attention layer first applies affine (linear with bias) transformations to the input vectors to obtain \{query, keys, values\} at each head $m\in[M]$:
\begin{equation}\label{eqn:qkv-vectors}
\begin{aligned}
    & \bq_{m,0} = \Qm [\bx_0; 1] \eqdef \Qm \tbx_0 \in \R^d, ~~~~~~~~~\bk_{m,i} = \Km [\bx_i; 1] \eqdef \Km \tbx_i \in \R^d, \\
    & v_{m,i} = \vm^\top [\bx_i; 1] = \vm^\top \tbx_i \in \R,~~~i\in[N],
\end{aligned}
\end{equation}
where $\Qm,\Km\in\R^{(d+1)\times d}$, $\vm\in\R^{d+1}$ are the parameters of the attention layer, and $\tbx_i\defeq [\bx_i; 1]$ for a more compact display. Then, it computes the output value by an attention mechanism %
\begin{align}
\label{eqn:attention}
f(\sequenceX{}) =  \sum_{m=1}^M \frac{1}{N} 
 \sum_{i = 1}^N f_{m,i}(\bx_0, \bx_i), ~~~~~~ f_{m, i}(\bx_0, \bx_i) = \sigma(\<\bq_{m,0}, \bk_{m,i}\>)\cdot v_{m,i} \in \R.
\end{align}
Above, $\sigma:\R\to\R$ is an activation function applied entry-wisely to each attention score $\<\bq_{m,0}, \bk_{m,i}\>$.  We choose $\sigma$ to be the ReLU activation $\sigma(t)=\max\sets{t,0}$ throughout this paper. Notice that this choice of the attention non-linearity is different from standard transformers \cite{vaswani2017attention} with softmax-attention. We choose to study ReLU-attention for theoretical convenience, and this replacement does not change the essence of the attention mechanism. Such a choice is also recently explored in the literature \cite{shen2023study}, which shows that transformers with ReLU-attention perform as well as standard softmax-attention transformers in certain NLP tasks. %

Simplifying the expression, we reparametrize the attention layer \eqref{eqn:qkv-vectors} and \eqref{eqn:attention} using parameters $\sets{(\bW_m, \bv_m)}_{m\in[M]}\subseteq \R^{(d+1)\times(d+1)}\times \R^{d+1}$:
\begin{align}
f_{i, m}(\sequenceX{}) =&~ \sigma\Big( \tbx_0^\top \Qm\Km \tbx_i \Big) \cdot \<\bv_m, \tbx_i\> 
= \sigma\paren{ \<\bW_m, \tbx_0 \tbx_i ^\top\> } \cdot \<\bv_m, \tbx_i\>. \label{eqn:attention-reparam}
\end{align}
For technical convenience, we assume all input tokens have unit norm throughout the rest of the paper: $\ltwos{\bx_i}\equiv 1$ so that $\ltwos{\tbx_i}\equiv \sqrt{2}$, for all $i\in\sets{0} \cup [N]$.

\paragraph{Random-feature attention models}
We consider a random-feature version\footnote{A different and closely related model is the Neural Tangent Kernel~\citep{jacot2018neural,du2019gradient}, which is however similar in essence to the random feature model in terms of the sample complexity of learning, e.g.~\citep{ghorbani2021linearized}.
} of the multiplicative attention mechanism~\eqref{eqn:attention-reparam}, where the weight matrices $\sets{\bW_m}_{m\in[M]}$ have i.i.d. Gaussian entries\footnote{
Another feasible choice for~\eqref{eqn:w-distribution} is to sample the key matrix and the query matrix separately with independent Gaussian entries, which however will produce a mean-zero product matrix similar to~\eqref{eqn:w-distribution} in many aspects.
}:
\begin{align}
\label{eqn:w-distribution}
    (\bW_m)_{ij} \simiid \normal(0, 1/4),~~~(m,i,j)\in[M]\times[d+1]^2.
\end{align}
The variance is chosen to be $1/4$ without loss of generality: this choice of variance is such that $\langle\bW_m, \tbx_0 \tbx_i ^\top \rangle \sim\normal(0,1)$ has a unit variance. The weight matrices $\sets{\bW_m}_{m \in [M]}$ are then held to be fixed during the entire learning process, whereas the value vectors $\sets{\bv_m}_{m\in[M]}$ are the learnable parameters. %
The random-feature attention model with input $\sequenceX{}$ %
is thus given by
\begin{align}
\textstyle    \tfwm(\sequenceX{}; \bV) = \sum_{m=1}^M \frac{1}{N}\sum_{i=1}^N \sigma\big(\<\bW_m, \tbx_0 \tbx_i ^\top\>\big) \<\bv_m, \tbx_i\>. \label{eqn:RF_attention}
\end{align}
Notice that random-feature attention model is linear in the parameter $\bV$, so training this model with a convex loss function gives a convex optimization problem.

\paragraph{Additional notation}
For any $\bx\in\R^{d_1}$ and $\by\in\R^{d_2}$, let $\bx\otimes \by\in\R^{d_1\times d_2}$ denote their tensor product (outer product), and $\bx^{\otimes n}\defeq \bx\otimes \dots \otimes \bx$ denote the $n$-fold self tensor product of $\bx$. For a tensor $\bA$, we use $\lfro{\bA}$ to denote its Frobenius norm. For a function $f: \cX \to \R$, we use $\| f \|_{\infty}$ to denote its $L^\infty$ norm. We use $\cO(\cdot)$ (resp. $\Theta(\cdot)$) for standard Big-O (resp. Big-Theta) relations. We use $\tO(\cdot)$ for hiding the multiplicative terms that are logarithmic in problem parameters, including $(M, d, n, N, \delta^{-1})$. We use $\Poly(p)$ to denote a polynomial of $p$ that is less than $p^C$ for some universal constant $0 < C < \infty$.

\section{Learning with random-feature attention models}\label{section:RKHS analysis}

In this section, we study the expressivity and generalization of random-feature attention models. We will consider a broad class of target functions that can be well approximated and is efficiently learnable by random-feature attention models. 

\subsection{Expressivity of random-feature attention}
\label{sec:expressivity of rf}

Consider a broad class of target functions $f_\star:\cX\to\R$ that takes form
\begin{align}
\label{eqn:target-function}
\textstyle    f_\star(\sequenceX{}) = \frac{1}{N} \sum_{i=1}^N F(\bx_0,\bx_i). 
\end{align}
Assume that there exists symmetric tensors $\{ \mathbf{f}_{rs} \in \R^{d^{r + s}}\}_{r, s\ge 0}$ such that $F: \R^{2d} \to \R$ admits representation
\begin{equation}\label{eqn:target-expansion}
\textstyle F(\bx_0,\bx_i) = \sum_{r,s \ge 0}^\infty \<\bx_0^{\otimes r}\otimes \bx_i^{\otimes s}, \mathbf{f}_{rs}\>.
\end{equation}

Note that such an expression allows $F(\bx_0,\bx_i)$ to be any general nonlinear function that admits convergent Taylor expansions. In particular, any polynomials of $[\bx_0, \bx_i]$ (e.g., $\bbeta^\top \bx_0$, $\bbeta^\top\bx_i$, and $\<\bx_0,\bS\bx_i\>$ for some $\bbeta \in \R^d$ and $\bS \in \R^{d^2}$) are within this function class. We will discuss more specific target functions in Section \ref{sec::examples and comparison}. %

\begin{theorem}[Expressivity of \RFA~model]
\label{thm:finite-width}

Suppose function $f_\star:\cX\to\R$ takes form~\eqref{eqn:target-function}. Then for any input distribution $\Px$ on $\cX$, with probability at least $1-\delta$ (over $\sets{\bW_m}_{m\in[M]}$ sampled from~\eqref{eqn:w-distribution}), there exists an $M$-head \RFA~model (\ref{eqn:RF_attention}) with coefficients $\bV = \sets{\bv_m}_{m\in[M]}\subseteq \R^{d+1}$ that approximates $f_\star$ in $L^2(\Px)$ up to error
\begin{align}
    \E_{\bx_{0:N}\sim \Px}\brac{ \paren{ f_\star(\bx_{0:N}) -\tfwm(\bx_{0:N}; \bV) }^2 } \le \cO\bigg(\frac{ (d^2 + \log M) \rkhsbound \delta^{-1}}{ M} \bigg). \label{eqn:approx without Id}
\end{align}
In addition, the norms of the weight of this random-feature attention model are bounded as
\begin{align}
    \sum_{m=1}^M \ltwo{\bv_m} \le \cO \bigg( \sqrt{\rkhsbound} + \sqrt{\frac{ \rkhsbound \delta^{-1}}{M}} \bigg),~~~~~~ \sum_{m=1}^M \ltwo{\bv_m}^2 \le \cO \bigg( \frac{ \rkhsbound \delta^{-1}}{M} \bigg). \label{eqn:norm bound of v without Id}
\end{align}
Here $\rkhsbound$ is a complexity measure of $f_\star$ defined as 
\begin{align}
\label{eqn:bfstar}
\textstyle \rkhsbound = \sum_{k=0}^\infty C_{k} \sum_{\max\{r,s\}=k} \lfro{\mbf_{rs}}^2, ~~~~~~~~~ C_k = k^{4.5}4^k\vee 1.
\end{align}
In case where $f_\star$ admits multiple representations of the form~\eqref{eqn:target-expansion}, $\rkhsbound$ is the infimum of the right-hand-side over all such representations. 
\end{theorem}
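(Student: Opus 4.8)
The plan is to establish this via a standard random-feature approximation argument, exploiting the linearity of the \RFA{} model in $\bV$ together with the fact that each head computes a random nonlinear feature $\sigma(\<\bW_m, \tbx_0\tbx_i^\top\>)$. The key idea is to realize the target $f_\star$ as an \emph{expectation} over the random weight distribution, i.e.\ to write the inner function $F(\bx_0,\bx_i)$ in the form $\E_{\bW}[\sigma(\<\bW, \tbx_0\tbx_i^\top\>)\cdot \<\bv(\bW),\tbx_i\>]$ for a suitable deterministic coefficient field $\bv(\cdot)$. Once such an integral representation holds, the finite-head model is simply a Monte Carlo approximation, and the $\cO(\cdot/M)$ rate together with the norm bounds will follow from a variance computation.

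\textbf{First I would construct the integral representation.} Since $F$ admits the Taylor-type expansion~\eqref{eqn:target-expansion}, it suffices to build, for each monomial tensor $\mbf_{rs}$, a coefficient field $\bv_{rs}(\bW)$ such that $\E_{\bW}[\sigma(\<\bW,\tbx_0\tbx_i^\top\>)\<\bv_{rs}(\bW),\tbx_i\>] = \<\bx_0^{\otimes r}\otimes\bx_i^{\otimes s},\mbf_{rs}\>$, and then sum these up (using the $\max\{r,s\}=k$ grouping in~\eqref{eqn:bfstar}). The natural tool is the Hermite expansion of the ReLU activation: since $\<\bW,\tbx_0\tbx_i^\top\>$ is a unit-variance Gaussian, $\sigma$ expands in Hermite polynomials $\He_k$ with known nonzero coefficients, and one inverts the map from Hermite coefficients to the target monomials. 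The growth of the constant $C_k = k^{4.5}4^k\vee 1$ should emerge precisely from the decay of the ReLU Hermite coefficients (which scale like $k^{-5/4}$ up to constants, giving inverse weights $\sim k^{5/2}$, with the extra factors and the $4^k$ coming from the $\sqrt{2}$-norm of $\tbx_i$ and the tensor-power bookkeeping). The $\<\bv,\tbx_i\>$ factor, being linear in $\tbx_i$, is what forces the asymmetry between $r$ and $s$ and lets one absorb one extra power of $\bx_i$ for free into the value vector.

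\textbf{Next I would carry out the Monte Carlo / concentration step.} Writing $f_\star(\bx_{0:N}) = \E_{\bW}[g(\bW;\bx_{0:N})]$ where $g$ is the single-head contribution with the population coefficient field, I define $\bv_m \defeq M^{-1}\bv_{\star}(\bW_m)$ so that $\tfwm(\cdot;\bV)$ is an unbiased estimator of $f_\star$. The $L^2(\Px)$ error is then the variance of the empirical average over $M$ i.i.d.\ heads, which is $\tfrac{1}{M}\E_{\bW}\Var$-type quantity; bounding the second moment of a single head by $\rkhsbound$ (this is where the definition~\eqref{eqn:bfstar} is calibrated) yields the $\cO((d^2+\log M)\rkhsbound\delta^{-1}/M)$ rate, with the $\delta^{-1}$ from Markov/Chebyshev on the nonnegative error and the $(d^2+\log M)$ from a union or truncation step controlling the $L^\infty$ tail of a single head (a head's magnitude scales with $\|\bv_\star(\bW)\|$, which grows with $\|\bW\|_{\rm op}\sim d$). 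The norm bounds~\eqref{eqn:norm bound of v without Id} follow from the same second-moment estimate plus a high-probability bound on $\sum_m \ltwo{\bv_m}$ via its mean and fluctuation.

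\textbf{The main obstacle I anticipate is the integral-representation step}, specifically verifying that the inverse Hermite construction produces a coefficient field $\bv_\star(\bW)$ with \emph{finite second moment} bounded by the stated $\rkhsbound$, uniformly in the degree $k$. The danger is that inverting small Hermite coefficients blows up the norm of $\bv_\star(\bW)$, and one must track this blow-up carefully across all degrees and show it is exactly compensated by the weights $C_k$; getting the polynomial factor $k^{4.5}$ (rather than something exponentially worse) and confirming the series $\sum_k C_k \sum_{\max\{r,s\}=k}\lfro{\mbf_{rs}}^2$ converges is the delicate quantitative heart of the argument. A secondary subtlety is handling the permutation-invariant averaging $\tfrac1N\sum_i$: because every head already shares the same $\bW_m$ across all key indices $i$ and the target is itself an average of the same $F$ over $i$, the averaging passes through cleanly and does \emph{not} introduce an $N$-dependence in $\rkhsbound$—but I would want to confirm that the variance bound likewise remains $N$-independent, which is the feature that ultimately drives the claimed sample-complexity advantage over the \RFMLP{} baseline.
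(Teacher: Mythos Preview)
Your proposal is correct and matches the paper's proof essentially step for step: the paper likewise factors the argument into (i) an infinite-width representation of $f_\star$ with RKHS norm bounded by $\rkhsbound$ (their Lemma~\ref{lem:RKHS_norm_target_function_Theorem1}), and (ii) a Monte--Carlo/truncation step passing to finitely many heads (their Lemma~\ref{lem::rf and finite neuron}), with the $(d^2+\log M)$ factor arising exactly from a truncation at $\lfro{\bW}\le d+1+C\sqrt{\log M}$ and the $\delta^{-1}$ from Markov on the nonnegative $L^2$ error. The only cosmetic difference is that in step (i) the paper computes the kernel via the closed-form arc-cosine formula $\tfrac{1}{2\pi}\big(u(\pi/2-\arccos u)+\sqrt{1-u^2}\big)$ and reads off its Taylor coefficients $c_\ell=O(\ell^{-5/2})$, then invokes the feature-map equivalence of the RKHS, rather than inverting a Hermite expansion of $\sigma$ term by term as you propose; the two routes are equivalent and yield the same $4^k k^{O(1)}$ growth in $C_k$.
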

The proof of Theorem \ref{thm:finite-width} is contained in Appendix \ref{sec:proof-finite-width}. Our proof relies on standard analyses of infinite-width random feature model with ReLU-Gaussian kernel, combined with a sampling argument to obtain approximation with finite-width.

This theorem is applicable to general functions with a finite $\rkhsbound$ norm. The $4^k$ scaling of $C_k$ in the summand of equation (\ref{eqn:bfstar}) seemingly confines the target function class to those with exponentially fast decaying $\| \mbf_{rs} \|_{\Fr}$, which suggests a relatively narrow target function class. However, as we will demonstrate in the forthcoming examples, this class includes a diverse range of functions.

\subsection{Generalization and sample complexity of learning}
\label{section:sample complexity}

Given $n$ samples $\sets{\sequenceX{(j)},y_j}_{j\in[n]} \simiid \sfP$, where $\sequenceX{(j)} = \{ \bx_i^{(j)} \}_{0 \le i \le N}$ is the $j$-th token sequence with length $N+1$, and $y_j$ is the label corresponding to the $i$-th token sequence. Assume that we are given a loss function $\ell\paren{\hat y,y}$ that is 1-Lipschitz in $\hat y$, and $\ell(0,y)\le 1$ for any $y$. The population risk is then given by $\populoss{f} = \E_{(\sequenceX{}, y) \sim \sfP}\brac{\loss{f(\sequenceX{}),y}}$. We consider the empirical risk minimization (ERM) over the \RFA~model \eqref{eqn:RF_attention}, 
\begin{equation}
\textstyle \hat\bV = \argmin_{\bV \in \mathcal{V}_M} \hat{L}_D(f_{M}^{\bW}(\cdot;\bV)), ~~~~~~~~~ \hat{L}_D(f) = \dn \sum_{j=1}^n \ell(f(\sequenceX{(j)}),y_j), \label{equ::ERM fomula}
\end{equation}
where the constrained class $\mathcal{V}_M$ is given by
\begin{equation}
\textstyle \cV_M = \left\{\bV = \set{\bv_m}_{m=1}^M : \,\, \sum_{m=1}^M \ltwo{\bv_m} \le \Bvone , \sum_{m=1}^M \ltwo{\bv_m}^2 \le \Bvtwo /M\right\}, \label{equ::ERM constraint}
\end{equation}
with $\Bvone$ and $\Bvtwo$ being two constants. 
Theorem \ref{thm::sample complexity} below provides the excess risk bound for the empirical risk minimizer.  
\begin{theorem}
\label{thm::sample complexity}
Assume $M > \delta^{-1}$ and $n > \log(dM)$. 
Let $f_\star$ be the minimizer of the population risk $\populoss{f}$ within the target function class \eqref{eqn:target-function} \eqref{eqn:target-expansion}. Let $\hat{f}^{\bW}_{M} = \tfwm(\cdot;\hat\bV)$ be the empirical risk minimizer given by \eqref{equ::ERM fomula}, where in \eqref{equ::ERM constraint} we choose $\Bvone = C \sqrt{B\paren{f_\star}}$ and $\Bvtwo = C \rkhsbound \delta^{-1}$, with $C$ being a constant. Then for any joint distribution $\sfP$, with probability at least $1-\delta$ over $\sets{\bW_m}_{m\in[M]}$ sampled according to \eqref{eqn:w-distribution} and $\sets{(\sequenceX{(j)},y_j)}_{j\in[n]}\sim_{iid} \sfP$, the excess risk is bounded by
\begin{equation}\label{equ::sample complexity}
\begin{aligned}
    L_D(\hat{f}^{\bW}_{M}) -  L_D(f_\star) \le 
    \tO \paren{ \sqrt{\rkhsbound} \bigg[\sqrt{\frac{1}{{n}}} + \sqrt{\frac{d^2\delta^{-1}}{M}} \bigg] }. 
\end{aligned}
\end{equation}
\end{theorem}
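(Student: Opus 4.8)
The plan is to control the excess risk by the usual decomposition into an approximation term and an estimation (generalization) term, using Theorem~\ref{thm:finite-width} to supply a good reference function that already lies inside the constraint set $\cV_M$. I would first apply Theorem~\ref{thm:finite-width} to the population minimizer $f_\star$ to obtain an $M$-head \RFA{} model $\wt f=\tfwm(\cdot;\wt\bV)$ satisfying the $L^2(\Px)$ bound \eqref{eqn:approx without Id} together with the weight bounds \eqref{eqn:norm bound of v without Id}. Since $M>\delta^{-1}$, the second summand in each bound of \eqref{eqn:norm bound of v without Id} is dominated, giving $\sum_m\ltwo{\wt\bv_m}\le\cO(\sqrt{\rkhsbound})$ and $\sum_m\ltwo{\wt\bv_m}^2\le\cO(\rkhsbound\delta^{-1}/M)$; hence $\wt\bV\in\cV_M$ once the constants $\Bvone=C\sqrt{\rkhsbound}$ and $\Bvtwo=C\rkhsbound\delta^{-1}$ are chosen with $C$ large enough. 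I would then write
\[
L_D(\hat f^\bW_M)-L_D(f_\star)=\underbrace{\big(L_D(\hat f^\bW_M)-L_D(\wt f)\big)}_{\text{(I)}}+\underbrace{\big(L_D(\wt f)-L_D(f_\star)\big)}_{\text{(II)}}.
\]

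For the approximation term (II), the $1$-Lipschitzness of $\ell(\cdot,y)$ and Jensen's inequality give $L_D(\wt f)-L_D(f_\star)\le\E|\wt f-f_\star|\le(\E[(\wt f-f_\star)^2])^{1/2}$, and plugging in \eqref{eqn:approx without Id} yields $\tO(\sqrt{\rkhsbound\,d^2\delta^{-1}/M})$, which is exactly the second term of \eqref{equ::sample complexity}.

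For the estimation term (I), since $\hat f^\bW_M$ is the empirical minimizer over $\cV_M$ and $\wt f\in\cV_M$, one has $L_D(\hat f^\bW_M)-L_D(\wt f)\le 2\sup_{\bV\in\cV_M}|L_D(\tfwm(\cdot;\bV))-\hat L_D(\tfwm(\cdot;\bV))|$. Symmetrization and the Ledoux--Talagrand contraction inequality (valid since $\ell$ is $1$-Lipschitz) bound this supremum, up to an additive $\tO(B_\infty\sqrt{\log(1/\delta)/n})$, by the empirical Rademacher complexity of $\cF_M=\{\tfwm(\cdot;\bV):\bV\in\cV_M\}$, where $B_\infty$ bounds $|\ell(\tfwm(\cdot;\bV),\cdot)|$ over the class. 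Writing $\tfwm(\sequenceX{};\bV)=\sum_m\<\bv_m,\phi_m\>$ with feature $\phi_m(\sequenceX{})=\frac1N\sum_{i}\sigma(\<\bW_m,\tbx_0\tbx_i^\top\>)\tbx_i$, Cauchy--Schwarz together with the $\ell^1$ constraint $\sum_m\ltwo{\bv_m}\le\Bvone$ of \eqref{equ::ERM constraint} gives
\[
\widehat{\mathcal R}_n(\cF_M)\le \Bvone\cdot\E_{\xi}\Big[\max_{m\in[M]}\ltwo{\tfrac1n\sum_{j=1}^n\xi_j\,\phi_m^{(j)}}\Big]\le \tO\Big(\Bvone\,B_\phi/\sqrt n\Big),
\]
where $B_\phi\defeq\max_{m,j}\ltwo{\phi_m^{(j)}}$, using $\E_\xi\ltwo{\frac1n\sum_j\xi_j\phi_m^{(j)}}\le B_\phi/\sqrt n$ and paying only a $\sqrt{\log M}$ factor for the maximum over heads.

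The crux, and the step I expect to be the main obstacle, is showing that the feature norm is dimension-free, $B_\phi=\tO(1)$. A worst-case bound $|\<\bW_m,\tbx_0\tbx_i^\top\>|\le\|\bW_m\|_{\rm op}\ltwo{\tbx_0}\ltwo{\tbx_i}=\tO(\sqrt d)$ would inject a spurious $\sqrt d$ into term (I); instead I would use that, for fixed data, each pre-activation $\<\bW_m,\tbx_0\tbx_i^\top\>$ is marginally $\normal(0,1)$ under \eqref{eqn:w-distribution} (since $\lfro{\tbx_0\tbx_i^\top}^2=4$ and the entrywise variance is $1/4$). A union bound over $(m,i,j)\in[M]\times[N]\times[n]$ then yields $\max_{m,i,j}|\<\bW_m,\tbx_0\tbx_i^\top\>|\le\tO(1)$ with probability $\ge1-\delta$ over $\bW$, whence $\ltwo{\phi_m^{(j)}}\le\frac{\sqrt2}N\sum_i\sigma(\<\bW_m,\tbx_0\tbx_i^\top\>)\le\sqrt2\,\tO(1)=\tO(1)$. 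Consequently $\Bvone B_\phi=\tO(\sqrt{\rkhsbound})$, and since $B_\infty\le 1+\Bvone B_\phi=\tO(\sqrt{\rkhsbound})$ the additive bounded-loss term is also $\tO(\sqrt{\rkhsbound/n})$; term (I) is therefore $\tO(\sqrt{\rkhsbound/n})$. Adding (I) and (II) gives \eqref{equ::sample complexity}. The essential point is exactly this dimension-free feature bound: controlling all $MNn$ pre-activations simultaneously through their marginal Gaussianity, rather than via operator norms, is what keeps the $\sqrt{\rkhsbound/n}$ term free of both the dimension $d$ and the sequence length $N$.
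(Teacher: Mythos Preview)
Your decomposition and your Rademacher complexity bound are both correct in spirit and close to the paper's argument. The genuine gap is in the concentration step. You write that symmetrization and contraction bound the supremum ``up to an additive $\tO(B_\infty\sqrt{\log(1/\delta)/n})$'', and then set $B_\infty\le 1+\Bvone B_\phi$ with $B_\phi=\tO(1)$ coming from your union bound over the $MNn$ pre-activations on the \emph{sampled} data. But the bounded-differences inequality that produces that additive term requires $|\ell(\tfwm(\cdot;\bV),\cdot)|\le B_\infty$ uniformly over \emph{all} data points the swap might introduce, not just the observed ones. For a fixed $\bW_m$ and an arbitrary unit-norm input, $|\langle\bW_m,\tbx_0\tbx_i^\top\rangle|$ can be as large as $2\|\bW_m\|_{\rm op}=\Theta(\sqrt d)$ with high probability, so the honest uniform bound is $B_\infty=\Theta(\Bvone\sqrt d)$. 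Your concentration term therefore becomes $\tO(\sqrt{d\,\rkhsbound/n})$, which is off from the claim by a $\sqrt d$ factor that $\tO$ does not hide. A further symptom: you never use the $\ell^2$ constraint $\Bvtwo$ in the estimation part, whereas the paper needs it precisely to avoid this blow-up.

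The paper circumvents the unboundedness by a three-way split that interleaves expectations and concentrations: it concentrates the empirical risk around $\E_{\bW}[\hat L_D]$ by Gaussian concentration in $\bW_{1:M}$ (the Lipschitz constant there is $\sqrt{\Bvtwo/M}$, which is where $\Bvtwo$ enters), then concentrates $\E_{\bW}[\hat L_D]$ over the data by bounded differences (now dimension-free because $\E_{\bW}|\sigma(\langle\bW,\tbx_0\tbx_i^\top\rangle)|=\cO(1)$ for \emph{every} unit-norm input), and finally handles the analogous fluctuation of the population risk over $\bW$ by the same Gaussian concentration. In short, averaging over $\bW$ \emph{before} applying bounded differences in the data is exactly what makes the $1/\sqrt n$ term free of $d$; your route of conditioning on $\bW$ first cannot achieve this.
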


The proof of Theorem \ref{thm::sample complexity} is contained in Appendix \ref{sec::proof of sample complexity}. The proof mostly uses the Rademacher complexity bound for the supremum of empirical process. The main non-trivial technical challenge lies in showing the concentration of $\sup_{f \in \cV_M} |\hat{L}_D(f) - L_D(f)|$, which cannot be simply controlled due to the unboundedness of the infinity norm of functions in the target function class $\cV_M$. 
We dealt with this subtlety by a carefully decomposition of $ \sup_{f \in \cV_M} |\hat{L}_D(f) - L_D(f)|$. The seemingly unnatural constraint set (\ref{equ::ERM constraint}) is used in bounding different terms in this decomposition.

\subsection{Examples and comparison}
\label{sec::examples and comparison}

We next give the sample complexity for learning several examples of target functions using the random-feature attention model. We will compare its sample complexity for learning these functions with that of the standard random-feature model \cite{rahimi2007random} (thereafter, we call it the random-feature MLP model, in short \RFMLP~model). In the \RFMLP~model, we view $\sequenceX{}$ as an input vector instead of a sequence of vectors denoted as $\operatorname{vec}(\sequenceX{}) = [\bx_0;\bx_1;\ldots;\bx_N;1] \in \R^{d (N+1)+1}$. The \RFMLP~is given by
\begin{equation}
 \tflp(\sequenceX{}; \bv) = \sum_{m=1}^M \sigma\big(\<\bw_m,\operatorname{vec}(\sequenceX{})\> \big) \cdot v_m, ~~~~~ \{ \bw_m \}_{m \in [M]} \simiid \normal(\bzero, \id/(N+2)). \label{eqn::MLP model}
\end{equation}
We choose the variance of random weights $\bw_m$ to be $1/(N+2)$ to ensure that $\langle \bw_m,\operatorname{vec}(\sequenceX{})\rangle \sim \normal(0, 1)$ has unit variance. The generalization and approximation properties of the random-feature MLP model have been well-studied in the literature, for example, \cite{arora2019fine, bach2017breaking,mei2022generalization2}. %

We instantiate Theorem~\ref{thm::sample complexity} on three concrete examples of target functions (calculations of the excess risks in Appendix \ref{sec:proof_examples_RFA}, where the result for \RFMLP{} are adapted\footnote{By deriving the corresponding results for Random Features instead of Neural Tangent Kernels.
} from~\cite{arora2019fine}). 
In all three cases, the target functions are permutation invariant with respect to $\sets{\bx_i}_{i\in\brac{N}}$, by which we naturally expect \RFA~to achieve better sample complexity than \RFMLP{} in accordance with this structure.

\begin{example}[Functions of $\bx_0$]
\label{exp:functions of x0}
We consider functions of $\bx_0$ (no dependence on $\bx_{1:N}$) of the form 
\[
f_\star(\sequenceX{})=\sum_{k=0}^\infty \<\bx_0^{\otimes k}, \bA_k\>,~~ \bA_k\in\R^{d^k},~~~~~ \text{with } \rkhsbound= \sum_{k=0}^\infty C_k \lfro{\bA_k}^2 \text{ by (\ref{eqn:bfstar})}. 
\]
By Theorem~\ref{thm::sample complexity}, setting $M = \Theta( d^2 n )$, the excess risk bound gives $\tO ( \sqrt{\sum_{k=0}^\infty k^{4.5}4^k \| \bA_k \|_{\Fr}^2 / n} )$. 
\end{example}

As a special case, consider $f_\star(\bx_{0:N})=(\bbeta^\top \bx_0)^p$, which corresponds to taking $\bA_k = \boldsymbol{\beta}^{\otimes p}$ for $k=p$ and $\bA_k = \boldsymbol{0}$ for $k \neq p$. The above excess risk of \RFA~model and the \RFMLP{} model scales as
\[
\RFA: \tO \Big(\Poly(p) \sqrt{ 4^p \| \bbeta \|_2^{2p} /n } \,\Big),~~~~~~~{\RFMLP}:  \tO \Big( \Poly(p) \sqrt{(N+2)^p \| \bbeta\|_2^{2p}/n} \Big).
\]

Compared to the \RFMLP~model, the \RFA~model significantly reduces the necessary sample size by a factor of $(N / 4)^p$. %

\begin{example}[Average of functions of $\bx_i$]
\label{exp:functions of xi}
We consider average of functions of $\bx_i$ of the form
\begin{align*}
f_\star(\sequenceX{})=\frac{1}{N}\sum_{i=1}^N \sum_{k=0}^\infty \langle \bx_i^{\otimes k}, \bA_k\rangle,~\bA_k\in\R^{d^k}, \quad \textrm{with}~
\rkhsbound= \sum_{k=0}^\infty C_k \lfro{\bA_k}^2~\textrm{by~\eqref{eqn:bfstar}}.
\end{align*}
Theorem~\ref{thm::sample complexity} then gives an $\tO ( \sqrt{\sum_{k=0}^\infty k^{4.5}4^k \| \bA_k \|_{\Fr}^2 / n} )$ excess risk, same as Example \ref{exp:functions of x0}.
\end{example}

As a specific example, consider $f_\star = \frac{1}{N} \sum_{i=1}^N \psi(\<\bbeta,\bx_i\>)$ with $\psi(z) = z\arctan(z/\eta)$ for some $\eta>2$, $\ltwos{\bbeta} = 1$. Using the power series expansion of $\psi$, the excess risk bound of \RFA{} model and the \RFMLP{} model scale as 
\begin{align*}
\textstyle
\RFA{}: \tO\paren{\sqrt{ \sum_{k=1}^\infty k^{4.5}(2 /\eta)^{2k} / n }} = \tO(\sqrt{1/n}), \quad 
\RFMLP{}: \tO\paren{\sqrt{ \sum_{k=1}^\infty k^{4.5}[(N+2) / (2\eta)]^{2k}  / n}}.
\end{align*}
The latter diverges whenever $\eta\le (N+2)/2$, in which case the bound is meaningless. %

\begin{example}[Correlation-weighted functions]
\label{exp:general correlation functions}
$f_\star$ is the following function:
\begin{align*}
    f_\star(\sequenceX{})=\frac{1}{N}\sum_{i=1}^N F(\<\bx_0, \bS\bx_i\>) G(\bx_i), ~~~~ F(t) = \sum_{k = 0}^\infty a_k \cdot t^k, ~~~~ G(\bx_i) = \sum_{k=0}^\infty \< \bx_i^{\otimes k},\bG_k\>,
\end{align*}
for $\bS \in \R^{d \times d}$, $\{ a_k\}_{k \ge 0} \subseteq \R$, $\bG_k \in \R^{d^k}$. This target function fully exploits the representation power of the attention layer. Eq. (\ref{eqn:bfstar}) gives $\rkhsbound = \cO( \sum_{k=0}^\infty C_{k} (\sum_{r+s=k} a_r^2 \lfros{\bS}^{2r}\lfros{\bG_{s}}^2 ))$. 
\end{example}

As a specific example, consider $f_{1, \star} = \frac{1}{N} \sum_{i=1}^N \<\bx_0,\bx_i\>^p$, corresponding to taking $\bS = \bI_d$, $F(t) = t^p$, and $G \equiv 1$. The excess risk bound of \RFA{} (by Theorem~\ref{thm::sample complexity}) and \RFMLP{} scale as 
\[
\textstyle
\RFA{}: \tO\paren{\Poly(p) \sqrt{ (4d)^p / n} },~~~~~~~ \RFMLP{}: \tO\paren{\Poly(p) \sqrt{ [(N+2)d]^p / n} }.
\]
As another example, consider $f_{2, \star} = \frac{1}{N} \sum_{i=1}^N \cos(\langle \bx_0,\bx_i\rangle) \langle \bx_i^{\otimes p},\bG\rangle$ with $\| \bG \|_{\Fr} = 1$. Then the excess risk bound of \RFA{} and \RFMLP{} scale as 
\[\textstyle
\RFA{}: \tO\paren{\Poly(p d) \sqrt{ e^{4\sqrt{d}} 4^p / n}}, ~~~~~ \RFMLP{}: \tO\paren{\Poly(p N d) \sqrt{ e^{2{(N+2)\sqrt{d}}} N^p /n}}. 
\]
{\RFA} reduces the required sample size by factors of $(N / 4)^p$ for $f_{1, \star}$ and $\exp({N\sqrt{d}})$ for $f_{2, \star}$.

\section{Expressivity of biased random-feature attention model}\label{section:bias attention}

\begin{figure}[t]
\hspace{0cm}
\includegraphics[width=1.0\linewidth,bb=0 0 7400 2880]{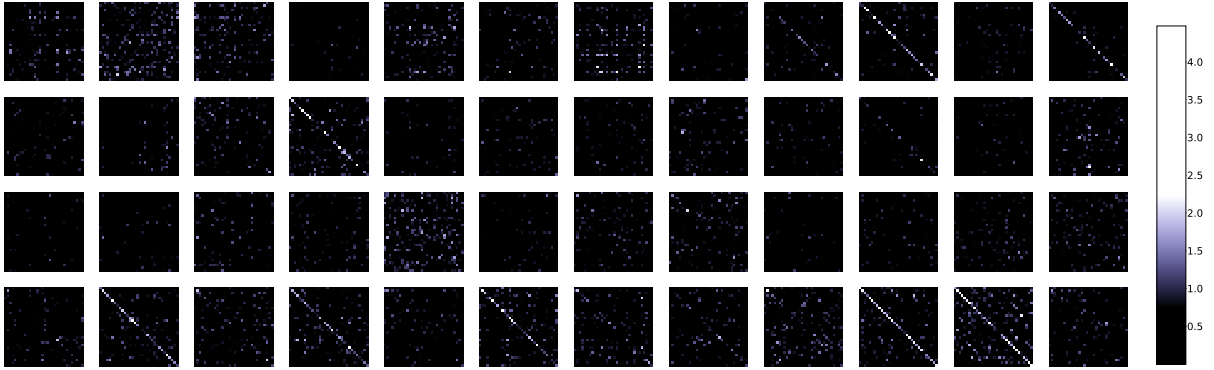}
\hspace{0cm}
\caption{Visualization of weight matrices of the 2nd, 5th, 8th, and 11th layers of the BERT-Base model. Each row contains weight matrices of a layer. All matrices are clipped to the top-left $32\times 32$ block. Lighter color indicates a larger absolute value.}
 \label{fig:bert visualization}
\end{figure}

We now move beyond the Gaussian weight assumption by exploring alternative possibilities for the weight distribution in the attention heads. We observe empirically that the weight matrices in transformer architectures learned in practice are often more similar to the identity matrix than a mean-zero matrix (Figure~\ref{fig:bert visualization}; see the details in Appendix \ref{app:bert}. This is also observed in a recent and concurrent work \cite{trockman2023mimetic}).

Towards understanding this effect, we consider an alternative attention model with \emph{biased} random weights, where the bias is a fixed matrix $\bW_0\in\R^{(d+1)\times(d+1)}$:
\begin{align}\label{eqn:RF_attention_bias}
 \tfwmb(\bx_{0:N}; \bV) = \sum_{m=1}^M \frac{1}{N}\sum_{i=1}^N \sigma\paren{\< \bW_0+\bW_m, \tbx_0\tbx_i^{\top}\>} \<\bv_m, \tbx_i\>. 
\end{align}
Here $\{ \bW_m \}_{m \in [M]}$ are again Gaussian random matrices sampled according to (\ref{eqn:w-distribution}). The biased random-feature attention model is similar to (\ref{eqn:RF_attention}) except that a bias weight $\bW_0$ is added. Motivated by our observation, we choose $\bW_0 = [\bId, \bzero_{d \times 1};
    \bzero_{1 \times d} , 0] \in \R^{(d + 1) \times (d+1)}$, so that the diagonal elements of $\bW_0 + \bW_m$ will be on average larger than the off-diagonal elements.

\subsection{Expressivity of biased random-feature attention} 

Given the formulation of biased random-feature attention models (thereafter, we call it the biased random-feature attention model, in short \BRFA~model), a natural conjecture is that this model can better fit functions that are the average of function of $\< \bx_0, \bx_i\>$. We here show that this is indeed the case. In particular, we consider a broad class of target functions $ \fhard:\cX\to\R$ that take forms %
\begin{align}
    &~\textstyle  \fhard(\bx_{0 : N}) = \frac{1}{N} \sum_{i=1}^N F(\langle \bx_0,\bx_i \rangle) G(\bx_0,\bx_i) \nonumber \\
    &~\textstyle F(t) = \sum_{k=0}^\infty a_k t^k, \quad G(\bx_0, \bx_i) = \<\tbx_i^{\otimes 3} \otimes \tbx_0^{\otimes 2}, \bA_\star\>. \label{eqn:bias rf hard} %
\end{align}
Here 
the scalers $\{ a_k \}_{k \ge 0} \subseteq \R$ and the tensor $\bA_\star \in \R^{d^5}$ parameterizes $\fhard$. As we will explain in Section \ref{sec:overview_technique}, confining $G$ to be a degree-$(3, 2)$ polynomial in $(\bx_i, \bx_0)$ is essential to our theoretical results. Our next theorem provides the excess risk of learning target function $\fhard$ using the {\BRFA} model (\ref{eqn:RF_attention_bias}). %

\begin{theorem}
\label{thm::bias sample complexity}
Given the same setting and assumptions as in Theorem \ref{thm::sample complexity}, 
when the population risk minimizer gives $f_\star = \fhard$, 
with probability at least $1-\delta$, we have 
\begin{align}
\populoss{\hat{f}^{\bW,\bW_0}_{M}} -  L_D(\fhard) &=  \tO \Bigg( \inf_{L} \bigg[  \sqrt{B(\fhard, L)} \Big( 
 \sqrt{\frac{1}{n}} + \sqrt{\frac{d^2 \delta^{-1}}{M}}\Big) + \eps_L \linf{\fhard}\bigg] \Bigg), \label{eqn:bias_RF sample complexity line 2}
\end{align}
where $\epsilon_L = 1/[2^{L+1}(L+1)!]$ and
\begin{align}
\textstyle
    B(\fhard, L) = \lfro{\bA_\star}^2 \cdot 
( \sum_{k=0}^\infty | a_k | \cdot C_k )^2, ~~~~\text{\rm with } C_k = (2L+k)^{(k+3)/2} 8^{L+ k/2}. 
\label{eqn:bfstar for biased}
\end{align}

\end{theorem}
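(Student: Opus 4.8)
The plan is to follow the two-stage structure of Theorem~\ref{thm::sample complexity}: an expressivity step producing a \BRFA~model that fits $\fhard$ with controlled value-vector norms, followed by the identical estimation step. I would first decompose the excess risk as approximation plus estimation error. The estimation error is handled by the Rademacher-complexity argument already established for Theorem~\ref{thm::sample complexity}: that argument uses only the constraint set \eqref{equ::ERM constraint} and the $1$-Lipschitzness of the loss, and is insensitive to the fixed bias $\bW_0$ (which merely modifies the frozen feature map). Feeding in $\Bvone=C\sqrt{B(\fhard,L)}$ and $\Bvtwo=CB(\fhard,L)\delta^{-1}$ reproduces the $\sqrt{B(\fhard,L)}(\sqrt{1/n}+\sqrt{d^2\delta^{-1}/M})$ terms. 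What remains is a \BRFA~analogue of Theorem~\ref{thm:finite-width}: an $L^2(\Px)$ approximation of $\fhard$ whose error is a vanishing-in-$M$ piece plus a truncation bias $\eps_L\linf{\fhard}$, with value norms controlled by $B(\fhard,L)$.

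For the expressivity step I would work at infinite width and use the defining feature of the bias, $\<\bW_0,\tbx_0\tbx_i^\top\>=\<\bx_0,\bx_i\>=:t$, so the feature is $\sigma(t+g)\<\bv(\bW),\tbx_i\>$ with $g=\<\bW,\tbx_0\tbx_i^\top\>\sim\normal(0,1)$. Expanding the shifted ReLU in Hermite polynomials of $g$, the level-$j$ profile is $c_j(t)=\E_g[\sigma(t+g)\He_j(g)]$, which for ReLU and $j\ge2$ equals the Gaussian density $\phi(t)$ times a degree-$(j-2)$ polynomial; simultaneously, averaging $\He_j(\<\bW,\tbx_0\tbx_i^\top\>)$ against a structured $\bv(\bW)$ emits the tensor $\tbx_0^{\otimes j}\otimes\tbx_i^{\otimes j}$, exactly as in the unbiased analysis behind Theorem~\ref{thm:finite-width}. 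I would then (i) contract $j-2$ index pairs of the level-$j$ tensor into factors of $t$, retaining a degree-$(2,2)$ core which, combined with the extra $\tbx_i$ supplied by the value multiplier $\<\bv(\bW),\tbx_i\>$, reproduces the degree-$(3,2)$ tensor $G$—this is precisely why $G$ must be degree $(3,2)$—and (ii) combine levels so that the net $t$-profile is $\phi(t)$ times an arbitrary polynomial. Matching this against $F(t)=\sum_k a_k t^k$ requires cancelling the density, i.e.\ realizing $F(t)/\phi(t)=\sqrt{2\pi}\,e^{t^2/2}F(t)$; since $t\in[-1,1]$ on the unit-norm domain, Taylor-truncating $e^{t^2/2}=\sum_{\ell\ge0}t^{2\ell}/(2^\ell\ell!)$ at order $L$ leaves remainder $\eps_L=1/[2^{L+1}(L+1)!]$, giving the bias $\eps_L\linf{\fhard}$. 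Each monomial $a_kt^k$ then produces a polynomial of degree $2L+k$, whose Hermite coefficients grow like $(2L+k)^{(k+3)/2}$ and whose $2^{-\ell}/\ell!$ and $\sqrt2$ token-norm factors contribute $8^{L+k/2}$, yielding the value-norm budget $B(\fhard,L)=\lfro{\bA_\star}^2(\sum_k|a_k|C_k)^2$ with $C_k=(2L+k)^{(k+3)/2}8^{L+k/2}$.

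Passing from infinite to $M$ heads is then the same i.i.d.\ sampling-and-concentration argument as in Theorem~\ref{thm:finite-width}, giving an $\cO((d^2+\log M)B(\fhard,L)\delta^{-1}/M)$ $L^2$ error and placing the sampled model in $\cV_M$ with the chosen $\Bvone,\Bvtwo$. Adding the approximation error (including $\eps_L\linf{\fhard}$) to the estimation error and taking the infimum over $L$ yields \eqref{eqn:bias_RF sample complexity line 2}.

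The principal obstacle is the expressivity step, and specifically bounding the value-vector norm after the density cancellation: dividing by $\phi(t)$ inflates the effective degree from $k$ to $2L+k$ and amplifies the Hermite coefficients, and obtaining the sharp $C_k$ (rather than a lossy bound) is what makes the $L$-indexed bias/complexity trade-off useful. A secondary subtlety, inherited verbatim from Theorem~\ref{thm::sample complexity}, is that members of $\cV_M$ have unbounded $L^\infty$ norm, so the concentration of $\sup_{f\in\cV_M}|\hat L_D(f)-L_D(f)|$ must again be obtained through the careful decomposition exploiting both constraints in \eqref{equ::ERM constraint}; the bias $\bW_0$ does not interfere since it enters only through the deterministic feature map.
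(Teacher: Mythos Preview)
Your proposal is correct and follows the paper's route closely: Hermite expansion of the shifted ReLU producing the $\phi(t)\He_{j-2}(t)$ profiles, Taylor truncation of $e^{t^2/2}$ at order $L$ to cancel the density and yield the $\eps_L\linf{\fhard}$ bias with norm budget $B(\fhard,L)$, finite-width via sampling, and the general Rademacher/decomposition estimation argument from Theorem~\ref{thm::sample complexity}, which indeed carries over since $\bW_0$ only shifts the frozen features. One small caveat on your step~(i): contracting the extra $j-2$ index pairs of $(\tbx_0\tbx_i^\top)^{\otimes j}$ into factors of $t$ via the identity injects $\lfro{\bI_d}=\sqrt d$ per pair into the coefficient-tensor norm and would make $B(\fhard,L)$ grow with $d$; the paper instead trivializes those slots via the bias coordinate $\be=(0,\dots,0,1)$ of $\tbx=[\bx;1]$ (coefficient tensor $\bA_\star\otimes\be^{\otimes(\cdots)}$, unit Frobenius cost) and lets the Hermite factors $\He_{j-2}(t)$ alone span the polynomial part, which is what keeps $B(\fhard,L)$ dimension-free as stated.
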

The proof of Theorem \ref{thm::bias sample complexity} is contained in Appendix \ref{sec::bias sample complexity proof}. We provide the intuitions of the result and an overview of the proof technique in Section \ref{sec:overview_technique}. %

\subsection{Examples and comparison} \label{sec::examples and comparison2}
    
    Compared to the target functions \eqref{eqn:target-expansion} discussed in Section \ref{sec:expressivity of rf}, functions in \eqref{eqn:bias rf hard} may not express the average of arbitrary functions of $\bx_0$ and $\bx_i$, but are well-suited to express functions of correlations. Consequently, we anticipate that the {\BRFA} model will outperform the {\RFA} model in learning functions of correlations. We will now present three concrete examples of target functions \eqref{eqn:bias rf hard}, and compare the excess risk of the {\BRFA} model to that of the {\RFA} model. The proof of excess risk is contained in Appendix \ref{sec:proof_examples_bias}.

\begin{example}[Low degree polynomials]
\label{exmp::low degree polynomials}
Consider average of polynomials of $\bx_i$ and $\bx_0$,
    \[ \fhard = \frac{1}{N}\sum_{i=1}^N \langle  \bx_i^{\otimes 3}\otimes  \bx_0^{\otimes 2} , \bA\rangle, ~~~ \text{\rm~with } B(\fhard,L) = \| \bA \|_{\Fr}^2 L^{3} 8^{2L} ~~\text{\rm~by \eqref{eqn:bfstar for biased}}.\]
    For any $\eta>0$, if we take $n\ge \exp(\exp(\Theta(1/\eta)))$, $L=\Theta((1 + \log\log n)^{-1}\log n)$, and $M = \Theta( d^2 n)$, the excess risk will scale as $\tO(\sqrt{\| \bA \|_{\Fr}^2/n^{1-\eta}})$. 
\end{example}

Compared with the excess risk of the {\RFA} model as detailed in Example \ref{exp:functions of xi}, the excess risk bound of the {\BRFA} model loses a factor of $n^{-\eta/2}$.

\begin{example}[Functions of correlations]
\label{exp:correlation functions biased}
    Consider a special case of functions of correlations,
    \[ \fhard = \frac{1}{N} \sum_{i=1}^N \<\bx_0,\bx_i\>^p\<\bbeta,\bx_i\>,~~~~ \bbeta \in \S^{d-1},~~ \text{\rm with } B(\fhard,L) = (2L+p)^{p+3} 8^{2L+p} ~~\text{\rm~by \eqref{eqn:bfstar for biased}}.\]
    For any $\eta>0$, choosing the same parameters $(n, L, M)$ as Example~\ref{exmp::low degree polynomials}, the excess risk bound scales as $\tO(\sqrt{{ (\log n + p)^{(p+3)}8^p}/{n^{1-\eta}}})$.
\end{example}

Consider the required sample size $n_\star$ to reach an accuracy of $0.01$. The \BRFA~model requires $n_\star = \tO((8p+48)^{p+3})$, whereas the \RFA~model requires $n_\star = \tO((4d)^p)$. Thus, in comparison to the \RFA~model, the \BRFA~model can reduce the required sample size by a factor of $\tO([d/(2p+12)]^p)$. 

\begin{example}[Correlation-weighted functions]
\label{exp:general correlation functions biased}
Consider the function
    \[ \fhard = \frac{1}{N} \sum_{i=1}^N \cos(\<\bx_0,\bx_i\>) \<\bx_i^{\otimes 3},\bG\>,~~~~ \text{ with } \lfro{\bG}^2 \le 1 \text{ and } B(\fhard, L) = \Theta((8e)^{2L}), \]
    where $B(\fhard,L)$ is bounded through the Taylor expansion of $\cos(t)$ and \eqref{eqn:bfstar for biased}.
    For any $\eta>0$, choosing the same parameters as Example \ref{exmp::low degree polynomials}, the excess risk bound scales as $\tO(\sqrt{1/{n^{1-\eta}}})$.
\end{example}

Consider the required sample size $n_\star$ to reach an accuracy of $0.01$. The \BRFA~model requires $n_\star = \tO(1)$, whereas the \RFA~model requires $n_\star = \tO(\Poly(d)\exp(\sqrt{d}))$. Thus, in comparison to the \RFA~model, the \BRFA~model can reduce the required sample size by a factor of $\tO(\Poly(d)\exp(\sqrt{d}))$.

\subsection{Overview of techniques}\label{sec:overview_technique} 

Here we provide the intuition and an overview of the technique of Theorem \ref{thm::bias sample complexity}, with the proof details in Appendix \ref{sec::bias sample complexity proof}. To show the sample complexity of learning with the {\BRFA} model, the first step is to derive the kernel $K_{\BRFA}(\bx_{0:N}, \bx_{0, N}')$ associated with the infinite-width {\BRFA} model. This kernel has a natural feature map, given by $\{ \Psi_{k}: \cX \to \R^{d^{2k+1}} \}_{k \ge 0}$, where
\[
\textstyle \Psi_k(\bx_{0:N}) = \sum_{i = 1}^N \phi(\<\bx_0, \bx_i\>) \cdot \He_{k-2}(\<\bx_0, \bx_i\>) \cdot \tbx_i^{\otimes k+1} \otimes \tbx_0^{\otimes k}, ~~~ \forall k \ge 2. 
\]
Here $\phi(t) = (2 \pi)^{-1/2} e^{-t^2/2}$ is the Gaussian density function, and $\He_k(z)$ denotes the $k$-th probabilist's Hermite polynomial, with detailed expression and properties given in Appendix \ref{sec:Hermite}. This feature map implies the learnability of the following target function class by the {\BRFA} model, 
\begin{align}
 \textstyle    \feasy(\bx_{0:N}) = \frac{1}{N}\sum_{i=1}^N \phi(\<\bx_0, \bx_i\>) \sum_{k=2}^\infty \He_{k-2}(\<\bx_0, \bx_i\>) \<\tbx_i^{\otimes k+1} \otimes \tbx_0^{\otimes k}, \bA_k\>, \label{eqn:bias rf easy}
\end{align}
whose RKHS norm associated with kernel $K_{\BRFA}$ is bounded by $B(\feasy) = \sum_{k=2}^\infty (k-2)! k^2 4^k \|\bA_k\|_{\sf Fr}^2$.

Notice that $\feasy$ bears similarities to, but also distinct differences from, $\fhard$ as presented in (\ref{eqn:bias rf hard}). The key difference lies in the $\phi(\langle\bx_0, \bx_i\rangle)$ factor in $\feasy$, which is hard to interpret and analyze. To obtain the excess risk bound for learning $\fhard$, we can use $\feasy$ to approximate $\fhard$ in the $L^\infty$ norm. The excess risk for learning $\fhard$ can be bounded by the summation of the excess risk for learning $\feasy$ and the approximation error. Acquiring this approximation error bound necessitates a truncation argument of the Taylor expansion of $1/\phi(\cdot)$.

\section{Numerical experiments}\label{sec:experiments}

We test our theory by experimentally approximating two types of target functions using the three models under investigation \RFA~\eqref{eqn:RF_attention}, \BRFA~\eqref{eqn:RF_attention_bias}, and \RFMLP~\eqref{eqn::MLP model}. We choose the target functions to be of form 
\begin{align}
 \textstyle   f_{1,p}(\bx_{0:N})  =&~ \textstyle \frac{1}{N}\sum_{i=1}^N \<\bbeta,\bx_i\>^p, ~~~~~~~~~~~~~~~~~~~ p\in \N, ~~~~~~ \bbeta\in \S^{d-1}, \label{eqn:experiment rf_easy}\\
 \textstyle    f_{2,q}(\bx_{0:N})=&~ \textstyle  \frac{1}{N}\sum_{i=1}^N \<\bx_0,\bx_i\>^q\<\bbeta,\bx_i\>,~~~~~~ q\in \N , ~~~~~~ \bbeta\in \S^{d-1} \label{eqn:experiment rf_hard}.
\end{align}

The first target function \eqref{eqn:experiment rf_easy} is a specific instance of Example \ref{exp:functions of xi}, whereas the second target function \eqref{eqn:experiment rf_hard} has been considered in both Example \ref{exp:general correlation functions} and \ref{exp:correlation functions biased}.

In our experimental setup, we set the input dimension as $d = 16$ and the number of tokens as $N = 16$. We fix the width of {\RFA} and {\BRFA} to be $M_{\RFA} = M_{\BRFA} = M = 1000$, whereas the width of {\RFMLP} is set as $M_{\RFMLP} = M(d + 1) = 17000$. This configuration ensures an equal number of parameters across all three models. To further accentuate the test risk difference between the {\BRFA} and {\RFA}, in {\BRFA} we use a bias matrix of $\bW_0 = 4 [ \bId, \bzero_{d \times 1}; \bzero_{1 \times d}, 0] \in \R^{(d+1) \times (d+1)}$, which is four times the matrix investigated in our theory. The input distribution is selected as $\{ \bx_i \}_{0 \le i \le N} \simiid \Unif(\S^{d-1})$, and we take $y = f_\star(\sequenceX{})$ without any noise. We consider three representative target functions: $f_{1,p}$ for $p = 2, 4$, and $f_{2, p}$ for $p = 3$, as per (\ref{eqn:experiment rf_easy}) and (\ref{eqn:experiment rf_hard}). We examine a list of sample sizes $n$ from $2^4$ to $2^{12}$. Prior to training with RF models, we standardize the $y_i$'s to have zero mean and unit standard deviation, ensuring that the trivial risk equals $1$. We train the RF models using square loss with ridge regularization, selecting the ridge parameter to minimize the test error. The experimental results are displayed in Figure \ref{fig:Allfigs}.

\begin{figure}
\hspace{-2.4cm}
\includegraphics[width=1.26\linewidth,bb= 0 0 1656 360]{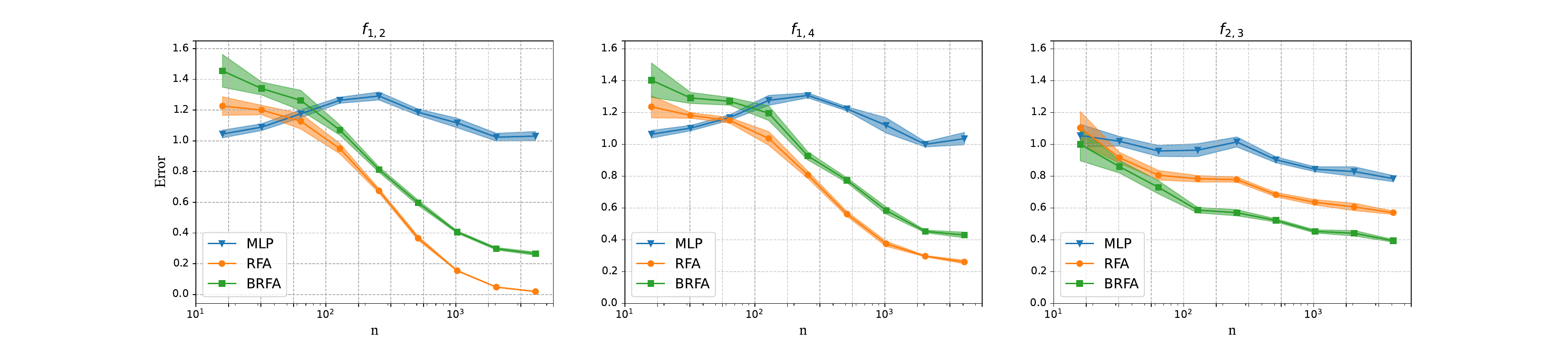}
\vspace{-0.3cm}
\caption{Test error of three \RF~models for learning $f_{1, 2}$ (left), $f_{1, 4}$ (mid), and $f_{2, 3}$ (right), as per (\ref{eqn:experiment rf_easy}) and (\ref{eqn:experiment rf_hard}). We set $d, N = 16$, $M_{\RFA} = M_{\BRFA} = 1000$, and $M_{\RFMLP} = 17000$. We train the \RF~models using square loss with ridge regularization, with the ridge parameter selected to minimize test error. The test error is calculated using $n_{\rm test} = 1000$ fresh samples. The figure reports the mean and normalized standard error of the test error, based on $5$ independent experimental instances. }

\label{fig:Allfigs}
\end{figure}

The left and middle panels of Figure \ref{fig:Allfigs} demonstrate a noticeable separation between {\RFMLP} and the other two random-feature attention models for learning these target functions. {\RFMLP} can hardly approximate the target function, whereas {\RFA} and {\BRFA} exhibit significantly better performance. This observation is consistent with our sample complexity analysis detailed in Example \ref{exp:functions of xi}, where the sample complexity bound of {\RFMLP} for learning average of functions of $\bx_i$ is found to be $\cO((N/4)^p)$ times greater than that of {\RFA}.

The performance comparison between {\RFA} and {\BRFA} depends on the target functions. {\RFA} outperforms {\BRFA} in learning $f_{1, 2}$ and $f_{1, 4}$, whereas {\BRFA} outperforms {\RFA} in learning $f_{2, 3}$. The latter phenomenon is as we expected: as demonstrated in Example \ref{exp:general correlation functions} and \ref{exp:correlation functions biased}, {\BRFA} is more powerful than {\RFA} in approximating the correlation-weighted functions.

We have conducted further experiments with various other target functions, detailed in Appendix \ref{sec::further experiments}.

\section{Conclusion}\label{sec:discussions}

In this work, we introduced and examined the expressivity of two random-feature attention models, namely \RFA~\eqref{eqn:RF_attention} and \BRFA~\eqref{eqn:RF_attention_bias}. For general classes of functions that are invariant to the permutation of key tokens $\bx_{1:N}$, the excess risk of {\RFA} \eqref{eqn:RF_attention} can avoid the dependence on sequence length, in contrast to the standard random-feature model {\RFMLP} \eqref{eqn::MLP model}. Moreover, for specific functions that adopt the form of correlation-weighted polynomials \eqref{exp:general correlation functions biased}, the excess risk of {\BRFA} can avoid the polynomial dependence on the dimension. %
These insights enhance our understanding of the attention mechanism within a simplified context. Finally, our work left open many interesting questions for future work, such as the expressivity of softmax attention, the influence of positional encoding in expressivity, and the expressivity of multi-layer transformers. %

\section*{Acknowledgment}

S. Mei is supported in part by NSF DMS-2210827 and NSF CCF-2315725. 

\bibliographystyle{alpha}
\bibliography{bib}

\makeatletter
\def\renewtheorem#1{%
  \expandafter\let\csname#1\endcsname\relax
  \expandafter\let\csname c@#1\endcsname\relax
  \gdef\renewtheorem@envname{#1}
  \renewtheorem@secpar
}
\def\renewtheorem@secpar{\@ifnextchar[{\renewtheorem@numberedlike}{\renewtheorem@nonumberedlike}}
\def\renewtheorem@numberedlike[#1]#2{\newtheorem{\renewtheorem@envname}[#1]{#2}}
\def\renewtheorem@nonumberedlike#1{  
\def\renewtheorem@caption{#1}
\edef\renewtheorem@nowithin{\noexpand\newtheorem{\renewtheorem@envname}{\renewtheorem@caption}}
\renewtheorem@thirdpar
}
\def\renewtheorem@thirdpar{\@ifnextchar[{\renewtheorem@within}{\renewtheorem@nowithin}}
\def\renewtheorem@within[#1]{\renewtheorem@nowithin[#1]}
\makeatother

\renewtheorem{theorem}{Theorem}[section]
\renewtheorem{lemma}{Lemma}[section]
\renewtheorem{remark}{Remark}
\renewtheorem{corollary}{Corollary}[section]
\renewtheorem{observation}{Observation}[section]
\renewtheorem{proposition}{Proposition}[section]
\renewtheorem{definition}{Definition}[section]
\renewtheorem{claim}{Claim}[section]
\renewtheorem{fact}{Fact}[section]
\renewtheorem{assumption}{Assumption}[section]
\renewcommand{\theassumption}{\Alph{assumption}}
\renewtheorem{conjecture}{Conjecture}[section]

\appendix

\section{Technical tools}
\subsection{Basics on Hermite Polynomials}\label{sec:Hermite}
In this section, we briefly review Hermite polynomials and their properties. Let $\He_n$ be the probabilists' Hermite polynomial of degree $n$:
\begin{equation*}\He_n(x) \defeq (-1)^n e^\frac{x^2}{2} \frac{d^n}{dx^n}e^{-\frac{x^2}{2}}.\end{equation*}
Here are some basic properties of Hermite polynomials $\{ \He_n\}_{n \ge 0}$:
\begin{itemize}[leftmargin=2em]
    \item Let $(x,y) \sim \normal(\bzero, [1, \rho; \rho, 1])$. Then $\E \brac{\He_m(x)\He_n(y)} = n!\rho^n 1_{m = n}$ for $m,n \ge 0$. 
    \item $\He_n(-x)=(-1)^n \He_n(x);$
    \item $\He_{n+1}(x)=x\He_n(x)-\He_n'(x).$
    \item $\He_{n}'(x)=n\He_{n-1}(x).$
\end{itemize}

It can be shown that Hermite polynomials are a complete orthogonal basis of the following Hilbert space with $\phi(x) = (2 \pi)^{-1/2} e^{- x^2/2}$:   
\begin{equation*}
L^2(\R, \phi(x)) \defeq \left\{f : \R \rightarrow \R:~~~ \int_{-\infty}^{\infty} f(x)^2 \phi(x) \de x < \infty \right\}.
\end{equation*}
For two functions $f,g: \R \to \R$, we define their inner product  $\<f,g\>$ as:
\begin{equation*}
\<f,g\> \defeq \E_{x\sim \normal(0,1)} \brac{f(x)g(x)}=\int_{-\infty}^{\infty} f(x)g(x)\phi(x) \de x . 
\end{equation*}
For any function $f \in L^2(\R, \phi(x))$, we can derive its Hermite expansion:
\begin{equation*}f(x)=\sum_{n = 0}^{\infty} \frac{a_n}{n!} \He_n(x), \end{equation*}
where $a_n=\<f,\He_n\> =\int_{-\infty}^{\infty} f(x)\He_n(x)\phi(x) \de x$.
Then for another function $g(x)=\sum_{n = 0}^{\infty} \frac{b_n}{n!} \He_n(x)$, the inner product of $f$ and $g$ gives:
\begin{equation*}\<f,g\>=\sum_{n = 0}^{\infty} \frac{a_n b_n}{n!}.\end{equation*}
Here are some formulae for Hermite expansion of certain functions. 
\begin{lemma}[See e.g., \cite{ge2017learning}] \label{lemma:Hermite inner product}
    Let $f,g \in L^2(\R, \phi(x) )$.  Then, for any unit vectors $u,v\in \R^d$, we have that 
    \begin{equation*}\E_{x\sim \mathcal{N} (0, \mathbf{I}_d )} \brac{f(u^{\top}x)g(v^{\top}x)}  = \sum_{n = 0}^{\infty} \frac{a_n b_n}{n!} \<u,v\> ^n, \end{equation*}
    where $f(x)=\sum_{n = 0}^{\infty} \frac{a_n}{n!} \He_n(x)$ and $g(x)=\sum_{n = 0}^{\infty} \frac{b_n}{n!} \He_n(x).$
\end{lemma}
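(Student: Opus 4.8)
The plan is to reduce the $d$-dimensional Gaussian expectation to a bivariate one and then apply the Hermite orthogonality relation already recorded among the basic properties above. First I would observe that since $x\sim\normal(\bzero,\mathbf{I}_d)$ and $u,v$ are unit vectors, the pair $(u^\top x, v^\top x)$ is jointly Gaussian with mean zero, unit marginal variances (because $\|u\|_2=\|v\|_2=1$), and covariance $\E[(u^\top x)(v^\top x)] = u^\top\mathbf{I}_d v = \<u,v\>$. Writing $\rho = \<u,v\>$, this gives $(u^\top x, v^\top x)\sim\normal(\bzero,[1,\rho;\rho,1])$, so that the target expectation depends on $u,v$ only through $\rho$.

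Next I would substitute the Hermite expansions $f = \sum_n \frac{a_n}{n!}\He_n$ and $g = \sum_m \frac{b_m}{m!}\He_m$ and interchange the expectation with the double sum, obtaining
\[
\E\big[f(u^\top x)g(v^\top x)\big] = \sum_{n,m\ge 0} \frac{a_n b_m}{n!\,m!}\, \E\big[\He_n(u^\top x)\He_m(v^\top x)\big].
\]
Applying the orthogonality property $\E[\He_n(u^\top x)\He_m(v^\top x)] = n!\,\rho^n 1_{n=m}$ (valid because $(u^\top x, v^\top x)$ is exactly the required bivariate Gaussian) collapses the double sum to its diagonal, leaving $\sum_n \frac{a_n b_n}{(n!)^2}\,n!\,\rho^n = \sum_n \frac{a_n b_n}{n!}\<u,v\>^n$, which is the claimed identity.

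The one step requiring care — and the main, albeit mild, obstacle — is justifying the interchange of the infinite sum with the expectation. I would handle this by noting that the bilinear map $(F,G)\mapsto \E[F(u^\top x)G(v^\top x)]$ is bounded, since by Cauchy–Schwarz and the fact that $u^\top x$, $v^\top x$ are each $\normal(0,1)$ we have $|\E[F(u^\top x)G(v^\top x)]| \le \|F\|_{L^2(\R,\phi)}\|G\|_{L^2(\R,\phi)}$. Because $f,g\in L^2(\R,\phi)$, their partial Hermite sums converge to $f,g$ in $L^2(\R,\phi)$ (with $\|f\|_{L^2(\R,\phi)}^2 = \sum_n a_n^2/n! < \infty$, and likewise for $g$), and the boundedness of the bilinear form lets me pass to the limit and sum term by term. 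This makes the formal manipulation rigorous and completes the proof.
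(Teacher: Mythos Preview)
Your argument is correct and is exactly the standard one: reduce to the bivariate Gaussian $(u^\top x,v^\top x)\sim\normal(\bzero,[1,\rho;\rho,1])$ with $\rho=\<u,v\>$, expand in Hermite polynomials, and invoke the orthogonality relation $\E[\He_m\He_n]=n!\rho^n 1_{m=n}$ that the paper lists just above the lemma. The paper itself does not give a proof of this lemma at all---it merely states it with a citation---so there is nothing further to compare; your write-up, including the Cauchy--Schwarz justification for interchanging sum and expectation, is more detailed than what the paper provides.
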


\begin{lemma}[Inverse explicit expression \cite{patarroyo2019digression}] Hermite expansion of $x^n$ gives:
\label{clm:Inverse explicit expression}
    \begin{equation*}x^n=n!\sum_{m=0}^{\lfloor \frac{n}{2} \rfloor}\frac{1}{2^mm!(n-2m)!}\He_{n-2m}(x).\end{equation*}
\end{lemma}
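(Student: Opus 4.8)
The plan is to establish the identity via the exponential generating function of the Hermite polynomials, which turns the combinatorial statement into a comparison of power-series coefficients.

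First I would record the generating function
\[
e^{xt - t^2/2} = \sum_{n=0}^\infty \He_n(x) \frac{t^n}{n!},
\]
which follows directly from the Rodrigues-type definition $\He_n(x) = (-1)^n e^{x^2/2} \frac{d^n}{dx^n} e^{-x^2/2}$ stated at the start of this subsection: writing $e^{xt-t^2/2} = e^{x^2/2} e^{-(x-t)^2/2}$ and expanding $e^{-(x-t)^2/2}$ as a Taylor series in $t$ about $t=0$, the substitution $u = x-t$ turns $\partial_t^n$ into $(-1)^n \partial_x^n$, reproducing exactly $\He_n(x)/n!$ as the coefficient of $t^n$.

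Next I would multiply both sides by $e^{t^2/2}$. On the left this produces $e^{xt} = \sum_{n\ge 0} x^n t^n/n!$, while on the right it produces a product of two power series in $t$,
\[
\left(\sum_{k=0}^\infty \frac{1}{2^k k!}\, t^{2k}\right)\left(\sum_{j=0}^\infty \frac{\He_j(x)}{j!}\, t^j\right),
\]
both of which converge absolutely for every fixed $x$. Forming the Cauchy product and equating the coefficient of $t^n$ on the two sides, with $j = n - 2k$ so that $k$ ranges from $0$ to $\lfloor n/2\rfloor$, yields
\[
\frac{x^n}{n!} = \sum_{k=0}^{\lfloor n/2\rfloor} \frac{1}{2^k k! (n-2k)!}\, \He_{n-2k}(x);
\]
multiplying through by $n!$ and renaming $k$ as $m$ gives the claimed formula. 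An alternative route avoids generating functions and argues by induction on $n$ using the three-term recurrence $x\He_j(x) = \He_{j+1}(x) + j\He_{j-1}(x)$ (itself a consequence of the two listed identities $\He_{j+1} = x\He_j - \He_j'$ and $\He_j' = j\He_{j-1}$): one multiplies the inductive hypothesis for $x^n$ by $x$, substitutes the recurrence, and reindexes the two resulting sums into a single sum matching the $n+1$ case.

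The main obstacle here is purely bookkeeping, as there is no genuine analytic difficulty. In the generating-function approach it amounts to justifying that the Cauchy product may be formed termwise and coefficients of $t^n$ equated, both licensed by the absolute convergence of the exponential series for each fixed $x$; in the inductive approach it amounts to checking that the two shifted-index sums recombine with the correct factorial coefficients. I would favor the generating-function argument, since it delivers the precise coefficient $\tfrac{n!}{2^m m!(n-2m)!}$ in one stroke with no delicate reindexing.
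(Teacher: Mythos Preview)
Your generating-function argument is correct and is a standard derivation of this identity. Note, however, that the paper does not actually supply its own proof of this lemma: it is stated with a citation to \cite{patarroyo2019digression} and used as a known tool. So there is no ``paper's proof'' to compare against here; your write-up simply fills in a proof the authors chose to outsource, and the approach you chose (multiply the Hermite generating function by $e^{t^2/2}$ and read off coefficients) is exactly the kind of short self-contained argument one would expect.
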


\begin{lemma}
\label{clm::relu with hermite}
    Let $\barsig_c(x) \defeq \barsig(x+c)$ be the shifted ReLU function. Then the Hermite expansion of $\barsig_c$ gives:
    \begin{equation*}\barsig_c(x) =  c\Phi(c) + \phi(c)+ \Phi(c)\He_1(x)+ \sum_{n = 2}^{\infty} \frac{(-1)^n}{n!}  \phi(c) \He_{n-2}(c)\He_n(x), \end{equation*}
    where $\phi(x), \Phi(x)$ are the PDF and CDF of standard Gaussian.
\end{lemma}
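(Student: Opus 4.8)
The plan is to compute the Hermite coefficients $a_n \defeq \E_{x \sim \normal(0,1)}[\barsig_c(x)\He_n(x)]$ directly and then assemble the expansion via $\barsig_c(x) = \sum_{n\ge 0}\frac{a_n}{n!}\He_n(x)$; this is legitimate because $\barsig_c(x)=\max(x+c,0)$ has at most linear growth and hence lies in $L^2(\R,\phi)$. Writing $\barsig_c(x) = (x+c)\mathbf{1}\{x \ge -c\}$, each coefficient becomes a truncated Gaussian integral
\[
a_n = \int_{-c}^{\infty}(x+c)\He_n(x)\phi(x)\,\de x = \int_{-c}^{\infty} x\He_n(x)\phi(x)\,\de x + c\int_{-c}^{\infty}\He_n(x)\phi(x)\,\de x.
\]

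The single tool I would use to evaluate every such tail integral is the identity $\frac{\de}{\de x}\big[\He_m(x)\phi(x)\big] = -\He_{m+1}(x)\phi(x)$, which follows immediately from $\He_m(x)\phi(x) = (-1)^m \phi^{(m)}(x)$. Integrating from $-c$ to $\infty$ and using $\He_m(-c) = (-1)^m \He_m(c)$ together with $\phi(-c)=\phi(c)$ gives, for $m \ge 1$,
\[
\int_{-c}^{\infty}\He_m(x)\phi(x)\,\de x = \He_{m-1}(-c)\phi(c) = (-1)^{m-1}\He_{m-1}(c)\phi(c).
\]
For the first integral I would first rewrite $x\He_n(x) = \He_{n+1}(x) + n\He_{n-1}(x)$ using the three-term recurrence $\He_{n+1}(x) = x\He_n(x) - \He_n'(x)$ together with $\He_n'(x)=n\He_{n-1}(x)$, reducing it to two instances of the tail integral above.

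Collecting the three contributions for $n \ge 2$ and factoring out $(-1)^n\phi(c)$ leaves the bracket $\He_n(c) + n\He_{n-2}(c) - c\He_{n-1}(c)$. Applying the recurrence in the form $\He_n(c) = c\He_{n-1}(c) - (n-1)\He_{n-2}(c)$ collapses this bracket exactly to $\He_{n-2}(c)$, so that $a_n = (-1)^n \phi(c)\He_{n-2}(c)$, which produces the claimed general term $\frac{(-1)^n}{n!}\phi(c)\He_{n-2}(c)\He_n(x)$. Finally I would treat $n=0$ and $n=1$ by hand, since they fall outside the range where the tail-integral formula applies: direct evaluation of the Gaussian moments $\int_{-c}^{\infty}\phi$, $\int_{-c}^{\infty}x\phi$, and $\int_{-c}^{\infty}x^2\phi$ (the last via $x^2\phi = -x\phi'$ and integration by parts) gives $a_0 = c\Phi(c) + \phi(c)$ and $a_1 = \Phi(c)$, matching the constant and $\He_1$ terms of the statement.

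The computation is essentially mechanical; the only place demanding genuine care is the sign and index bookkeeping when combining the three tail integrals for $n \ge 2$, and ensuring the recurrence is invoked at the correct degree so the bracket telescopes to a single $\He_{n-2}(c)$. The base cases $n=0,1$ have to be handled separately precisely because the reduction $\int_{-c}^{\infty}\He_m\phi = (-1)^{m-1}\He_{m-1}(c)\phi(c)$ requires $m \ge 1$, so the uniform pattern only emerges once the relevant shifted indices are themselves $\ge 1$.
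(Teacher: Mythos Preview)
Your argument is correct. Both your proof and the paper's compute the Hermite coefficients $a_n=\int_{-c}^\infty (x+c)\He_n(x)\phi(x)\,\de x$ using integration by parts and the Hermite three-term recurrence, but the organization differs. The paper introduces the two-parameter family $a_{n,i}=\int_{-c}^\infty (x+c)^i\He_n(x)\phi(x)\,\de x$ for $i\in\{0,1\}$, derives coupled recursions among the $a_{n,0}$ and $a_{n,1}$ via two separate integrations by parts, and then closes the recursion by induction to obtain $a_{n,1}=(-1)^n\He_{n-2}(c)\phi(c)$ and $a_{n,0}=(-1)^{n-1}\He_{n-1}(c)\phi(c)$. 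You instead go straight to the closed form for the tail integral by noting $(\He_m\phi)'=-\He_{m+1}\phi$ (equivalently $\He_m\phi=(-1)^m\phi^{(m)}$), which delivers $\int_{-c}^\infty \He_m\phi=(-1)^{m-1}\He_{m-1}(c)\phi(c)$ in one stroke and makes the induction unnecessary; the recurrence $x\He_n=\He_{n+1}+n\He_{n-1}$ then reduces $a_n$ to a single application of $\He_n(c)=c\He_{n-1}(c)-(n-1)\He_{n-2}(c)$. Your route is shorter and avoids the auxiliary double sequence; the paper's route is slightly more systematic in that the same recursion machinery also yields $a_{n,0}$ as a by-product, though that quantity is not needed for the lemma itself.
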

\begin{proof}[Proof of Lemma \ref{clm::relu with hermite}]
Denote $a_{n,i}\defeq \int_{-c}^{\infty}{(x+c)^i\He_n(x)\phi(x)dx}$. It is easy to obtain that
\[
    a_{0,0}=\Phi(c),~~~~~~ a_{0,1}=c\Phi(c)+\phi(c),~~~~~~ a_{1,0}=\phi(c),~~~\text{and}~~~ a_{1,1}=\Phi(c).
\]
Then using the two different formulae for $a_{n, 0}$, 
\begin{align*}
    a_{n,0}&=\int_{-c}^{\infty}{\He_n(x)\phi(x)\de x}=-\frac{\He_{n+1}(-c)}{n+1}\phi(-c)+\int_{-c}^{\infty}{\frac{\He_{n+1}(x)}{n+1}x\phi(x)\de x}\\
    &=-\frac{\He_{n+1}(-c)}{n+1}\phi(-c)+\frac{1}{n+1}a_{n+1,1}-\frac{c}{n+1}a_{n+1,0},~~~n\ge 0,~~~\text{and}\\
    a_{n,0}&=\int_{-c}^{\infty}{(x\He_{n-1}(x)-(n-1)\He_{n-2}(x))\phi(x)\de x}\\
    &=a_{n-1,1}-ca_{n-1,0}-(n-1)a_{n-2,0},~~~n\ge 2,
\end{align*}
we obtain that
\begin{align*}
    a_{n,1}&=(-1)^n\He_n(c)\phi(c)+ca_{n-1,1}+(n-c^2)a_{n-1,0}-(n-1)ca_{n-2,0},~~~\text{and}\\
    a_{n,0}&=a_{n-1,1}-ca_{n-1,0}-(n-1)a_{n-2,0},~~~n\ge 2.
\end{align*}
Then it is easy to prove by induction that
\begin{align*}
    a_{n,1}&=(-1)^n\He_{n-2}(c)\phi(c),~~~ n\ge2,\\
    a_{n,0}&=(-1)^{n-1}\He_{n-1}(c)\phi(c),~~~n\ge 1.
\end{align*}
This completes the proof. 
\end{proof}

\subsection{Basics on Random Features}
In this section, we give some basic properties of the random feature model considered in our work, which can be seen as an extension of the standard random feature model (e.g. of~\cite{rahimi2007random,rahimi2008uniform}) to the \emph{vector-valued case}.

Given a functional $\bsigma(x ; w): \mathcal{X} \times \mathcal{W} \rightarrow \mathbb{R}^d$. Denote $\mu$ as a probability measure on $\mathcal{W}$. We define the (infinite-width) random feature model as:
\begin{equation}
    \mc{F}=\sets{f:f(x) = \<\bv,\bsigma(x;\cdot)\>_{\mc{H}_\mc{W}} ,~~~\bv\in \mc{H}_\mc{W}},
\end{equation}
where $\mc{H}_\mc{W}=\set{\bv(w):\int_{\mathcal{W}}\bv(w)^{\top}\bv(w)\mu(\de w) <\infty\label{eqn:basic of rf 1}}$ is a Hilbert space with norm $\norm{\bv}_{\mc{H}_\mc{W}}^2=\int_{\mathcal{W}}\bv(w)^{\top}\bv(w)\mu(\de w) $ and inner product $\<\bv,\bu\>_{\mc{H}_\mc{W}}= \int_{\mathcal{W}}\bv(w)^{\top}\bu(w)\mu(\de w)$.
Besides, we endow $\mc{F}$ with a norm $\norm{\cdot}_{\mc{F}}$ and the corresponding inner product $\<\cdot,\cdot\>_{\mc{F}}$ defined as:
\begin{equation*}
    \norm{f}_{\mc{F}}=\inf_{f=\<\bv,\bsigma(\cdot)\>_{\mc{H}_\mc{W}}}{\norm{\bv}_{\mc{H}_\mc{W}}}, ~~~\<f,g\>_{\mc{F}}=\frac{\norm{f+g}_{\mc{F}}^2-\norm{f-g}_{\mc{F}}^2}{4}.
\end{equation*}
We further define the corresponding reproducing kernel $K:\mathcal{X}\times\mathcal{X} \rightarrow \R$, s.t.
\begin{equation*}
K(x, y)=\int_{\mathcal{W}} \bsigma(x ; w)^{\top} \bsigma(y ; w) \mu(\de w),
\end{equation*}
which is positive definite. Define the RKHS induced by this kernel as $\mathcal{H}_K$ with corresponding norm $\norm{\cdot}_{\mc{H}_K}$ and the inner product $\<\cdot,\cdot\>_{\mc{H}_K}$. Then we have the following proposition according to \cite{minh2006mercer}:
\begin{proposition}
    Given the above definition of $\mc{F}$ and $\mc{H}_{K}$, we have that $\paren{\mc{F},\norm{\cdot}_{\mc{F}}}=\paren{\mc{H}_{K},\norm{\cdot}_{\mc{H}_K}}$.
    
\end{proposition}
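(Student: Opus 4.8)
The plan is to recognize $\mc{F}$ as the image of the coefficient Hilbert space $\mc{H}_\mc{W}$ under the synthesis map attached to the feature map $\Phi(x)\defeq\bsigma(x;\cdot)$, equip this image with the stated infimal quotient norm, and then verify that it satisfies the two defining axioms of the reproducing kernel Hilbert space of $K$. Once this is done, the uniqueness part of the Moore--Aronszajn theorem forces $\paren{\mc{F},\norm{\cdot}_{\mc{F}}}=\paren{\mc{H}_K,\norm{\cdot}_{\mc{H}_K}}$ isometrically, which is exactly the claim. First I would check that the feature map is well-defined, i.e. that $\Phi(x)\in\mc{H}_\mc{W}$ for every $x$: this holds because $\norm{\Phi(x)}_{\mc{H}_\mc{W}}^2=\int_\mathcal{W}\bsigma(x;w)^\top\bsigma(x;w)\mu(\de w)=K(x,x)<\infty$. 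I then define the linear synthesis operator $T:\mc{H}_\mc{W}\to\R^{\mathcal{X}}$ by $(T\bv)(x)=\<\bv,\Phi(x)\>_{\mc{H}_\mc{W}}$, so that by construction $\mc{F}=\mathrm{ran}(T)$, and moreover $K(x,y)=\<\Phi(x),\Phi(y)\>_{\mc{H}_\mc{W}}=(T\Phi(x))(y)$, which already shows $K_x\defeq K(x,\cdot)=T\Phi(x)\in\mc{F}$.

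The next step installs the Hilbert structure. The null space $\mc{N}\defeq\ker T=\bigcap_{x\in\mathcal{X}}\set{\bv:\<\bv,\Phi(x)\>_{\mc{H}_\mc{W}}=0}$ is an intersection of closed hyperplanes, hence a closed subspace, and I split $\mc{H}_\mc{W}=\mc{N}\oplus\mc{N}^\perp$. The key observation is that for $f=T\bv$, the vector of minimal $\mc{H}_\mc{W}$-norm in the fiber $T^{-1}(f)$ is precisely the orthogonal projection $P_{\mc{N}^\perp}\bv$; consequently the infimum defining $\norm{f}_{\mc{F}}$ is \emph{attained} and equals $\norm{(T|_{\mc{N}^\perp})^{-1}f}_{\mc{H}_\mc{W}}$. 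This makes $T|_{\mc{N}^\perp}$ an isometric isomorphism from the closed subspace $\mc{N}^\perp$ onto $\mc{F}$, so $\paren{\mc{F},\norm{\cdot}_{\mc{F}}}$ is complete and the polarization identity in the definition of $\<\cdot,\cdot\>_{\mc{F}}$ is exactly the inner product pulled back from $\mc{N}^\perp$.

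Finally I verify the reproducing property. Writing $\bv_f\in\mc{N}^\perp$ for the minimal-norm representative of $f$ and noting that the minimal-norm representative of $K_x=T\Phi(x)$ is $P_{\mc{N}^\perp}\Phi(x)$, the pulled-back inner product gives $\<f,K_x\>_{\mc{F}}=\<\bv_f,P_{\mc{N}^\perp}\Phi(x)\>_{\mc{H}_\mc{W}}=\<\bv_f,\Phi(x)\>_{\mc{H}_\mc{W}}=f(x)$, where the middle equality uses $\bv_f\in\mc{N}^\perp$ so that the $\mc{N}$-component of $\Phi(x)$ contributes nothing. Together with $K_x\in\mc{F}$ from the first step, this shows $\mc{F}$ is a Hilbert space of functions on $\mathcal{X}$ admitting $K$ as reproducing kernel, and Moore--Aronszajn uniqueness then yields the identification with $\mc{H}_K$. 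I expect the main obstacle to be the analytic care around the quotient norm rather than any conceptual difficulty: one must confirm that $\mc{N}$ is closed, that the infimum is achieved at the projection onto $\mc{N}^\perp$, and that $\mc{F}$ is thereby complete. The vector-valued nature of $\bsigma$ (taking values in $\R^d$) enters only through $\mc{H}_\mc{W}$ being a Hilbert space of $\R^d$-valued functions; once $\Phi(x)\in\mc{H}_\mc{W}$ is established, every step above is verbatim the scalar argument.
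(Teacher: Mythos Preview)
Your argument is correct and is the standard route to this identification: realize $\mc{F}$ as the image of $\mc{H}_\mc{W}$ under the synthesis map $T$, observe that $\ker T$ is closed so that $T|_{(\ker T)^\perp}$ is a linear isometry onto $\mc{F}$ (making $\mc{F}$ a Hilbert space with the infimal norm), and then check the reproducing property to invoke Moore--Aronszajn uniqueness. Each step you outline is sound, including the care you take with the minimal-norm representative and the vector-valued feature $\bsigma(x;\cdot)$.

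The paper, however, does not supply a proof of this proposition at all: it simply records the statement and attributes it to \cite{minh2006mercer}. So there is no ``paper's approach'' to compare against beyond the citation. Your self-contained argument is exactly the kind of proof one finds behind such references; it is more detailed than what the paper offers, not different in spirit.
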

More generally \cite{bai2019beyond}, for any feature map $\phi:\mc{X}\rightarrow \mc{H}$ (where $\mc{H}$ is a Hilbert space) that induces the kernel $K$, i.e., $K(x,y)=\<\phi(x),\phi(y)\>_\mc{H}$, we have for any function $f$ that 
\begin{equation}\label{eqn:feature map equi}
\norm{f}_{\mc{H}_K}=\inf_{f=\<\bu,\phi(\cdot)\>_{\mc{H}}}{\norm{\bu}_{\mc{H}}},
\end{equation}
which shows the equivalence among different feature maps that generate the same kernel.
\subsection{Concentration inequalities}
\begin{definition}[Sub-Gaussian and Sub-Exponential random variables \cite{vershynin2018high}] \label{def:subgaussian norm}
For a random variable  $X$, its sub-gaussian norm, denoted $\|X\|_{\psi_2}$, is defined as 
\begin{equation*}
\|X\|_{\psi_2}=\inf \left\{t>0: \mathbb{E} \exp \left(X^2 / t^2\right) \leq 2\right\}.
\end{equation*}
If $\sigma \equiv \|X\|_{\psi_2} < \infty$, we say that $X$ is $\sigma$-sub-Gaussian. 

For a random variable  $X$, its sub-exponential norm, denoted $\|X\|_{\psi_1}$, is defined as 
\begin{equation*}
\|X\|_{\psi_1}=\inf \left\{t>0: \mathbb{E} \exp \left(|X|/ t\right) \leq 2\right\}.
\end{equation*}
If $\sigma \equiv \|X\|_{\psi_1} < \infty$, we say that $X$ is $\sigma$-sub-exponential. 
\end{definition}

\begin{theorem}[Gaussian concentration inequality (e.g., \cite{wainwright2019high})]\label{thm:gaussian concentration}
Let $\left(X_1, \ldots, X_n\right)$ be a vector of i.i.d. standard Gaussian variables, and let $f: \mathbb{R}^n \rightarrow \mathbb{R}$ be L-Lipschitz with respect to the Euclidean norm. Then the random variable $f(X)-\mathbb{E}[f(X)]$ is $L$-sub-Gaussian, and hence
\begin{equation*}
\mathbb{P}\big( |f(X)-\mathbb{E}[f(X)]| \geq t \big) \leq 2 e^{-\frac{t^2}{2 L^2}} \quad \text { for all } t \geq 0. 
\end{equation*}
\end{theorem}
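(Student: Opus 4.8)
The plan is to prove the moment-generating-function bound $\mathbb{E}[\exp(\lambda(f(X)-\mathbb{E}[f(X)]))]\le \exp(\lambda^2 L^2/2)$ for every $\lambda\in\mathbb{R}$, which says precisely that $f(X)-\mathbb{E}[f(X)]$ has variance proxy $L^2$. The stated two-sided tail bound then follows from a routine Chernoff argument (optimize $\mathbb{P}(f-\mathbb{E}f\ge t)\le e^{-\lambda t}\,\mathbb{E}[e^{\lambda(f-\mathbb{E}f)}]$ over $\lambda>0$, then apply the same to $-f$), and the $\psi_2$-norm statement follows from the standard equivalence between sub-Gaussian MGF bounds and the $\psi_2$ norm defined above. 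Throughout I assume $\mathbb{E}[f(X)]=0$ without loss of generality.

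First I would remove the nonsmoothness of $f$. Mollifying by an isotropic Gaussian kernel, $f_\varepsilon\defeq f*\gamma_\varepsilon$, produces $C^\infty$ functions that remain $L$-Lipschitz, so $\|\nabla f_\varepsilon\|_2\le L$ pointwise, and that converge to $f$ locally uniformly as $\varepsilon\to 0$. Establishing the MGF bound for each smooth $f_\varepsilon$ and passing to the limit (using that $f$ inherits Gaussian integrability from its Lipschitz growth, so dominated convergence and Fatou apply) transfers the bound to $f$ itself.

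The core step is the MGF bound for smooth $f$ with $\|\nabla f\|_2\le L$, which I would obtain from the Gaussian logarithmic Sobolev inequality (Gross): for the standard Gaussian measure $\gamma$ on $\mathbb{R}^n$ and smooth $g$, $\mathrm{Ent}_\gamma(g^2)\le 2\,\mathbb{E}_\gamma[\|\nabla g\|_2^2]$. Taking $g=\exp(\lambda f/2)$ and writing $H(\lambda)=\mathbb{E}[\exp(\lambda f)]$, the identities $\mathrm{Ent}_\gamma(g^2)=\lambda H'(\lambda)-H(\lambda)\log H(\lambda)$ and $\|\nabla g\|_2^2=(\lambda^2/4)\,e^{\lambda f}\|\nabla f\|_2^2\le (\lambda^2 L^2/4)\,e^{\lambda f}$ convert the inequality into $\lambda H'(\lambda)-H(\lambda)\log H(\lambda)\le \tfrac12\lambda^2 L^2 H(\lambda)$. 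This is exactly the Herbst differential inequality: dividing by $\lambda^2 H(\lambda)$ shows that $\Psi(\lambda)\defeq \lambda^{-1}\log H(\lambda)$ satisfies $\Psi'(\lambda)\le L^2/2$, and since $H(0)=1$ and $H'(0)=\mathbb{E}[f]=0$ give $\lim_{\lambda\to 0^+}\Psi(\lambda)=0$, integrating yields $\Psi(\lambda)\le \lambda L^2/2$, i.e. $\log H(\lambda)\le \lambda^2 L^2/2$ for $\lambda>0$; the case $\lambda<0$ follows by running the same argument with $-f$.

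I expect the main difficulty to be the \emph{sharp} constant. A fully elementary alternative---Gaussian interpolation along $X_\theta=X\sin\theta+Y\cos\theta$ for an independent copy $Y$, with $X'_\theta$ an independent standard Gaussian, combined with Jensen's inequality applied to $f(X)-f(Y)=\int_0^{\pi/2}\langle\nabla f(X_\theta),X'_\theta\rangle\,d\theta$---reproduces a bound of the same form but with the worse constant $\pi^2/8$ in place of $1/2$. Recovering the optimal $L$-sub-Gaussian constant therefore genuinely requires the log-Sobolev inequality (equivalently, the Gaussian isoperimetric inequality) as an input, which I would cite as a black box rather than re-derive. The remaining care goes into the two technical points flagged above: justifying the mollification limit, and verifying $\lim_{\lambda\to 0^+}\Psi(\lambda)=0$ so that the Herbst integration starts from the correct boundary value.
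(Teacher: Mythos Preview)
Your argument is correct: the Herbst route via the Gaussian log-Sobolev inequality is the standard way to obtain the sharp sub-Gaussian constant, and the steps you outline (mollification, $g=e^{\lambda f/2}$, the differential inequality for $\Psi(\lambda)=\lambda^{-1}\log H(\lambda)$, the boundary value $\Psi(0^+)=\E[f]=0$, then Chernoff) are all sound.

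The only thing to note is that the paper does not prove this theorem at all. It is listed under ``Technical tools'' with a citation to \cite{wainwright2019high} and used as a black box in the proofs of Lemma~\ref{lem::rf and finite neuron} and Lemma~\ref{lem::emp process with E w}. So there is no ``paper's own proof'' to compare against; you have supplied a proof where the paper simply invokes a reference. Your proof is one of the standard ones that the cited reference itself presents, so in that sense you are in full agreement with what the paper relies on.
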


\begin{theorem}[Bounded difference inequality (e.g., \cite{wainwright2019high})] \label{thm:bounded difference}
Consider a function $f(X): \R^n\to\R$.
Assume that for any $X=(X_1,\ldots,X_n)$ and $X^{i,\prime} = (X_1,\ldots,X_i^\prime,\ldots,X_n)$, we have difference bound $|f(X) - f(X^{i,\prime})| \le L_i$.
We further assume that the random vector $X=\left(X_1, X_2, \ldots, X_n\right)$ has independent components. Then 
\begin{equation*}
\mathbb{P}\big( |f(X)-\mathbb{E}[f(X)]| \geq t \big) \leq 2 e^{-\frac{2 t^2}{\sum_{k=1}^n L_k^2}} \quad \text { for all } t \geq 0.
\end{equation*}
\end{theorem}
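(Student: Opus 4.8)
The plan is to prove McDiarmid's inequality via the standard Doob martingale decomposition combined with a conditional (Azuma--Hoeffding style) moment-generating-function bound. First I would fix the natural filtration $\mathcal{F}_i = \sigma(X_1,\dots,X_i)$ (with $\mathcal{F}_0$ trivial) and set $g_i \defeq \E[f(X)\mid \mathcal{F}_i]$, so that $g_n = f(X)$ and $g_0 = \E[f(X)]$. The martingale difference sequence $D_i \defeq g_i - g_{i-1}$ then telescopes to $f(X)-\E[f(X)] = \sum_{i=1}^n D_i$, and by the tower property $\E[D_i \mid \mathcal{F}_{i-1}]=0$, so $\{D_i\}$ is a martingale difference sequence adapted to $\{\mathcal{F}_i\}$.

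The key structural step is to show that, conditionally on $\mathcal{F}_{i-1}$, each increment $D_i$ takes values in an interval of length at most $L_i$. I would introduce the conditional range quantities
\[
U_i \defeq \sup_{x}\, \E\brac{f(X)\mid \mathcal{F}_{i-1}, X_i = x} - g_{i-1}, \qquad V_i \defeq \inf_{x}\, \E\brac{f(X)\mid \mathcal{F}_{i-1}, X_i = x} - g_{i-1},
\]
and argue that $U_i - V_i \le L_i$. Writing $g_i$ as an iterated integral against the laws of $X_{i+1},\dots,X_n$ — which is precisely where the independence of the components is used — I can couple the two integrals defining the values at $X_i=x$ and $X_i=x'$ so that only the $i$-th coordinate differs, at which point the bounded-difference hypothesis $|f(X)-f(X^{i,\prime})|\le L_i$ controls the integrand pointwise and hence bounds the difference of the integrals. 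Since $D_i$ lies in an interval of length $\le L_i$ given $\mathcal{F}_{i-1}$ and has conditional mean zero, Hoeffding's lemma yields the conditional bound $\E[e^{\lambda D_i}\mid \mathcal{F}_{i-1}] \le \exp(\lambda^2 L_i^2/8)$ for every $\lambda\in\R$.

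From here the argument is routine. Peeling off the conditional expectations one layer at a time via the tower property (conditioning on $\mathcal{F}_{n-1}, \mathcal{F}_{n-2}, \dots$ in turn) gives
\[
\E\brac{e^{\lambda (f(X)-\E f(X))}} = \E\Big[\prod_{i=1}^n e^{\lambda D_i}\Big] \le \exp\Big(\tfrac{\lambda^2}{8}\sum_{i=1}^n L_i^2\Big).
\]
A Chernoff bound followed by the optimal choice $\lambda = 4t/\sum_{i} L_i^2$ produces the one-sided tail $\mathbb{P}(f(X)-\E f(X)\ge t)\le \exp(-2t^2/\sum_i L_i^2)$; applying the identical reasoning to $-f$ (whose increments obey the same difference bound) and taking a union bound yields the stated two-sided inequality with the factor of $2$.

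I expect the main obstacle to be the second step — rigorously justifying that the conditional increment $D_i$ has range at most $L_i$. This is the only place the independence assumption is genuinely needed, and it requires care in expressing $g_i$ as an iterated integral and constructing the coupling that reduces the comparison to a single differing coordinate, so that the bounded-difference condition applies coordinatewise. Once this range bound is in hand, Hoeffding's lemma and the Chernoff optimization are entirely mechanical.
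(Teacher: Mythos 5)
Your proposal is correct, and it is the standard martingale proof of McDiarmid's inequality: Doob decomposition, the conditional range bound $U_i - V_i \le L_i$ (where independence enters), Hoeffding's lemma, and the optimized Chernoff bound, with the two-sided statement following by applying the argument to $-f$. The paper itself does not prove this statement---it imports it as a known tool with a citation to \cite{wainwright2019high}---and your argument is essentially the proof given in that reference, so there is nothing to reconcile beyond noting that your outline is sound and complete modulo the measurability routine in defining the conditional suprema.
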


\begin{theorem}[Matrix Bernstein Inequality (e.g., \cite{tropp2015introduction})] \label{thm:matrix bernstein}
Consider a  sequence $\{\boldsymbol{S}_k \}_{k \in [n]}$ of independent random matrices with common dimension $d_1 \times d_2$. Assume that 
\begin{equation*}
\mathbb{E} \boldsymbol{S}_k=\mathbf{0} ~~~~~~\text { and }\left\|\boldsymbol{S}_k\right\|_{\op} \leq L\text { almost surely, ~~~~for each index } k.
\end{equation*}
Introduce the random matrix
\begin{equation*}
\boldsymbol{Z}=\sum_{k = 1}^n \boldsymbol{S}_k.
\end{equation*}
Let $v(\boldsymbol{Z})$ be the matrix variance statistic of the sum
\begin{equation*}
\begin{aligned}
v(\boldsymbol{Z}) & =\max \left\{\left\|\mathbb{E}\left(\boldsymbol{Z} \boldsymbol{Z}^\top\right)\right\|_{\op},\left\|\mathbb{E}\left(\boldsymbol{Z}^\top \boldsymbol{Z}\right)\right\|_{\op}\right\} \\
& =\max \left\{\left\|\sum_{k=1}^n \mathbb{E}\left(\boldsymbol{S}_k \boldsymbol{S}_k^\top\right)\right\|_{\op},\left\|\sum_{k=1}^n \mathbb{E}\left(\boldsymbol{S}_k^\top \boldsymbol{S}_k\right)\right\|_{\op}\right\} .
\end{aligned}
\end{equation*}
Then we have
\begin{equation*}
\mathbb{E}\|\boldsymbol{Z} \|_{\op} \leq \sqrt{2 v(\boldsymbol{Z}) \log \left(d_1+d_2\right)}+\frac{1}{3} L \log \left(d_1+d_2\right), 
\end{equation*}
and for all $t \geq 0$,
\begin{equation*}
\mathbb{P}\big( \|\boldsymbol{Z}\|_{\op} \geq t \big) \leq\left(d_1+d_2\right) \exp \left(\frac{-t^2 / 2}{v(\boldsymbol{Z})+L t / 3}\right).
\end{equation*}
\end{theorem}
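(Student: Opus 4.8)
The plan is to use the matrix Laplace-transform method, reducing first to the symmetric case and then controlling the trace moment generating function (MGF) of the sum. First I would pass to the Hermitian dilation: for each $\boldsymbol{S}_k$ set $\mathcal{H}(\boldsymbol{S}_k) = \left[\begin{smallmatrix} \boldsymbol{0} & \boldsymbol{S}_k \\ \boldsymbol{S}_k^\top & \boldsymbol{0}\end{smallmatrix}\right]$, a symmetric $(d_1+d_2)\times(d_1+d_2)$ matrix satisfying $\|\mathcal{H}(\boldsymbol{S}_k)\|_{\op} = \|\boldsymbol{S}_k\|_{\op}$ and $\mathcal{H}(\boldsymbol{S}_k)^2 = \mathrm{diag}(\boldsymbol{S}_k\boldsymbol{S}_k^\top,\, \boldsymbol{S}_k^\top\boldsymbol{S}_k)$. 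Since $\mathcal{H}$ is linear, $\mathcal{H}(\boldsymbol{Z}) = \sum_{k=1}^n \mathcal{H}(\boldsymbol{S}_k)$ is a sum of independent, mean-zero symmetric matrices with $\lambda_{\max}(\mathcal{H}(\boldsymbol{Z})) = \|\boldsymbol{Z}\|_{\op}$, and its matrix variance statistic $\|\sum_k \mathbb{E}\,\mathcal{H}(\boldsymbol{S}_k)^2\|_{\op}$ is exactly $v(\boldsymbol{Z})$. This lets me work entirely with symmetric matrices and the largest eigenvalue, so that the two-sided operator-norm statement reduces to a one-sided $\lambda_{\max}$ statement.

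Next I would invoke the matrix Laplace transform bound: for any symmetric random matrix $\boldsymbol{Y}$ and $\theta>0$, $\mathbb{P}(\lambda_{\max}(\boldsymbol{Y})\ge t) \le e^{-\theta t}\, \mathbb{E}\,\mathrm{tr}\,e^{\theta \boldsymbol{Y}}$, which follows from Markov's inequality together with $e^{\theta\lambda_{\max}(\boldsymbol{Y})}=\lambda_{\max}(e^{\theta \boldsymbol{Y}})\le \mathrm{tr}\, e^{\theta\boldsymbol{Y}}$. The crux is then to bound the trace MGF $\mathbb{E}\,\mathrm{tr}\,\exp(\theta\sum_k \mathcal{H}(\boldsymbol{S}_k))$. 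Because the summands do not commute, this cannot be factorized into a product of per-term MGFs; instead I would use Lieb's concavity theorem --- concavity of $\boldsymbol{A}\mapsto \mathrm{tr}\exp(\boldsymbol{H} + \log \boldsymbol{A})$ on positive-definite $\boldsymbol{A}$ --- together with Jensen's inequality applied sequentially over the independent summands to establish subadditivity of the matrix cumulants: $\mathbb{E}\,\mathrm{tr}\,\exp(\theta\sum_k \boldsymbol{X}_k) \le \mathrm{tr}\exp(\sum_k \log\mathbb{E}\, e^{\theta\boldsymbol{X}_k})$, where $\boldsymbol{X}_k = \mathcal{H}(\boldsymbol{S}_k)$.

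With subadditivity in hand, I would control each per-summand cumulant. Using the scalar inequality $e^{\theta x}\le 1+\theta x + g(\theta)x^2$ valid for $x\le L$, with $g(\theta)=(e^{\theta L}-\theta L-1)/L^2$, transferred to the eigenvalues of $\boldsymbol{X}_k$ since $\|\boldsymbol{X}_k\|_{\op}\le L$, together with $\mathbb{E}\boldsymbol{X}_k=\boldsymbol{0}$ and $\boldsymbol{I}+\boldsymbol{A}\preceq e^{\boldsymbol{A}}$, I obtain $\log\mathbb{E}\,e^{\theta\boldsymbol{X}_k}\preceq g(\theta)\,\mathbb{E}\boldsymbol{X}_k^2$. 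Summing and using $\boldsymbol{A}\preceq\boldsymbol{B}\Rightarrow\mathrm{tr}\,e^{\boldsymbol{A}}\le\mathrm{tr}\,e^{\boldsymbol{B}}$ yields $\mathbb{E}\,\mathrm{tr}\,e^{\theta\mathcal{H}(\boldsymbol{Z})}\le (d_1+d_2)\exp(g(\theta)\,v(\boldsymbol{Z}))$. The tail bound then follows by plugging into the Laplace bound and optimizing, using $g(\theta)\le (\theta^2/2)/(1-L\theta/3)$ for $0<\theta<3/L$ and the choice $\theta = t/(v(\boldsymbol{Z})+Lt/3)$. For the expectation bound I would instead start from $e^{\theta\,\mathbb{E}\lambda_{\max}(\mathcal{H}(\boldsymbol{Z}))}\le \mathbb{E}\,\mathrm{tr}\,e^{\theta\mathcal{H}(\boldsymbol{Z})}$ (Jensen), giving $\mathbb{E}\,\|\boldsymbol{Z}\|_{\op}\le \theta^{-1}[\log(d_1+d_2)+g(\theta)v(\boldsymbol{Z})]$, and optimize $\theta$ a second time to recover $\sqrt{2v(\boldsymbol{Z})\log(d_1+d_2)}+\tfrac13 L\log(d_1+d_2)$.

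I expect the main obstacle to be establishing the subadditivity of the matrix cumulant generating functions --- the single step that genuinely uses the non-commutative structure. Unlike the scalar Bernstein/Chernoff argument, one cannot split $\mathbb{E}\,e^{\theta\sum_k \boldsymbol{X}_k}$ into a product of per-term MGFs, and the clean resolution requires Lieb's concavity theorem (alternatively a Golden--Thompson-type inequality, which would yield slightly worse constants). Once this hurdle is cleared, the mean-zero/bounded per-summand estimate and the final scalar optimizations over $\theta$ are routine.
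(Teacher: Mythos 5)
This theorem is not proved in the paper at all---it is imported verbatim as a technical tool from the cited reference (Tropp, 2015), so there is no in-paper argument to compare against. Your proposal is a correct reconstruction of exactly the canonical proof in that reference: Hermitian dilation to reduce to the symmetric case, the matrix Laplace-transform method, subadditivity of matrix cumulant generating functions via Lieb's concavity theorem, the Bernstein-type bound $\log\mathbb{E}\,e^{\theta\boldsymbol{X}_k}\preceq g(\theta)\,\mathbb{E}\boldsymbol{X}_k^2$ for bounded mean-zero summands, and the final scalar optimizations for the tail and expectation bounds---so your argument is sound and coincides with the standard one the paper relies on.
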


\begin{theorem}[Ledoux-Talagrand contraction inequality (e.g., \cite{wainwright2019high})]\label{thm:rademacher contraction}
Let $\{ \xi_i \}_{i \in [n]} \simiid \Unif(\{ \pm 1\})$ be independent Rademacher random variables. For any set $\mathbb{T} \subset \R^n$ and any family of $L$-Lipschitz functions $\sets{\phi_j}_{j\in[n]}$ with $\phi_i(0) = 0$, we have 
\begin{equation}
    \E\Big[ \sup_{\theta\in\mathbb{T}} \Big| \sum_{i=1}^n \xi_i \phi_i(\theta_i) \Big| \Big]
    \le 
    2L \cdot \E\Big[ \sup_{\theta\in\mathbb{T}}\Big| \sum_{i=1}^n \xi_i \theta_i \Big| \Big].
\end{equation}
    
\end{theorem}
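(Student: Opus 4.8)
The plan is to reduce the two-sided (absolute-value) statement to a one-sided contraction bound, and then prove the one-sided bound by contracting one coordinate at a time. Concretely, I will first establish the auxiliary inequality
\begin{equation*}
\E_\xi \sup_{\theta \in \mathbb{T}} \sum_{i=1}^n \xi_i \phi_i(\theta_i) \le L \cdot \E_\xi \sup_{\theta \in \mathbb{T}} \sum_{i=1}^n \xi_i \theta_i, \tag{$\star$}
\end{equation*}
which holds for any $L$-Lipschitz $\phi_i$ (the normalization $\phi_i(0)=0$ is not yet needed here), and then upgrade it to the stated absolute-value form at the cost of the factor $2$.

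For ($\star$), the idea is to replace $\phi_i$ by $L\cdot\mathrm{id}$ one index at a time, leaving the other coordinates untouched; by a telescoping argument it suffices to prove the single-coordinate comparison. Fixing $\xi_2,\dots,\xi_n$ and writing $h(\theta)=\sum_{i\ge 2}\xi_i g_i(\theta_i)$ for arbitrary fixed functions $g_i$, I average the target over $\xi_1\in\{\pm1\}$:
\begin{equation*}
\E_{\xi_1}\sup_\theta[\xi_1\phi(\theta_1)+h(\theta)] = \tfrac12\sup_{\theta,\theta'}\big[\phi(\theta_1)-\phi(\theta'_1)+h(\theta)+h(\theta')\big].
\end{equation*}
Here the two separate suprema (for $\xi_1=\pm1$) are merged into a single supremum over the pair $(\theta,\theta')$. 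Now Lipschitzness gives $\phi(\theta_1)-\phi(\theta'_1)\le L|\theta_1-\theta'_1|$, and since the remaining term $h(\theta)+h(\theta')$ is symmetric under swapping $(\theta,\theta')$, the supremum over pairs of $L|\theta_1-\theta'_1|+h(\theta)+h(\theta')$ equals the supremum of $L\theta_1-L\theta'_1+h(\theta)+h(\theta')$. Re-expanding this back into an average over $\xi_1$ yields the single-coordinate bound with $\phi$ replaced by $L\cdot\mathrm{id}$. Iterating over $i=1,\dots,n$ (at each step the already-processed linear terms $L\theta_j$ are absorbed into the fixed functions $g_j$) proves ($\star$).

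To obtain the absolute-value statement, I augment the index set to $\tilde{\mathbb{T}}=\mathbb{T}\cup\{\mathbf 0\}$ and use $\phi_i(0)=0$: the element $\theta=\mathbf 0$ then contributes value $0$, so both $\sup_{\theta\in\tilde{\mathbb{T}}}\sum_i\xi_i\phi_i(\theta_i)$ and $\sup_{\theta\in\tilde{\mathbb{T}}}\big(-\sum_i\xi_i\phi_i(\theta_i)\big)$ are nonnegative. Writing $a_\theta=\sum_i\xi_i\phi_i(\theta_i)$ and using $\sup_{\tilde{\mathbb{T}}}|a_\theta|=\max\big(\sup_{\tilde{\mathbb{T}}} a_\theta,\sup_{\tilde{\mathbb{T}}}(-a_\theta)\big)$ together with nonnegativity of both arguments gives $\sup_{\tilde{\mathbb{T}}}|a_\theta|\le \sup_{\tilde{\mathbb{T}}}a_\theta+\sup_{\tilde{\mathbb{T}}}(-a_\theta)$. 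Taking expectations and invoking the symmetry $(-\xi)\stackrel{d}{=}\xi$ of Rademacher variables, the two terms are equal, whence $\E\sup_{\tilde{\mathbb{T}}}|a_\theta|\le 2\,\E\sup_{\tilde{\mathbb{T}}}a_\theta\le 2L\,\E\sup_{\tilde{\mathbb{T}}}\sum_i\xi_i\theta_i$ by ($\star$). Finally I pass back to $\mathbb{T}$: the left side only grows, $\sup_{\mathbb{T}}|a_\theta|\le\sup_{\tilde{\mathbb{T}}}|a_\theta|$, while on the right $\sup_{\tilde{\mathbb{T}}}\sum_i\xi_i\theta_i=\max\big(\sup_{\mathbb{T}}\sum_i\xi_i\theta_i,\,0\big)\le\sup_{\mathbb{T}}\big|\sum_i\xi_i\theta_i\big|$, giving exactly the claimed bound.

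I expect the single-coordinate comparison — specifically the step that merges the two conditional suprema into one supremum over a pair and then uses swap-symmetry to discard the absolute value in $L|\theta_1-\theta'_1|$ — to be the main obstacle, since this is where Lipschitzness is converted into a linear comparison and where the constant is pinned down. The augmentation-with-origin device, which cleanly produces the factor $2$ while being the only place the hypothesis $\phi_i(0)=0$ enters, is the other point demanding care.
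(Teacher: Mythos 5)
The paper does not prove this statement at all: it is quoted as a standard tool and outsourced to the cited reference (Wainwright / Ledoux--Talagrand), so there is no in-paper proof to compare against. Your argument is correct and is essentially the canonical proof from that literature: the coordinate-by-coordinate contraction via merging the two conditional suprema into a supremum over pairs $(\theta,\theta')$ and using swap-symmetry to linearize $L|\theta_1-\theta_1'|$, followed by the augmentation $\tilde{\mathbb{T}}=\mathbb{T}\cup\{\mathbf 0\}$ (the only place $\phi_i(0)=0$ is needed) and Rademacher sign-symmetry to convert the one-sided bound into the two-sided bound with the factor $2$. All steps check out, including the final passage back from $\tilde{\mathbb{T}}$ to $\mathbb{T}$ on both sides of the inequality.
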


\section{Proofs for Section \ref{section:RKHS analysis}}
Throughout this section, we use the notation $\lesssim$ to hide a universal constant $C$. Also, we use $\bar\sigma(\bW, \bX)$ to denote a function of $\bW \in \R^{(d + 1) \times (d+1)}$ and $\bX \in \{ \overline\bX \in \R^{(d+1)\times(d+1)}: \| \overline\bX \|_{\Fr}^2 \le 4\} \equiv \cD$ that satisfies the following properties:
\begin{enumerate}
    \item For any $\bX \in \cD$, we have that $\bar\sigma(\bW, \bX)$ is $L_1$-Lipschitz with respect to $\bW$.
    \item For any $\bX \in \cD$, the expectation $\E_{\bW}[\bar\sigma(\bW, \bX)] \le L_2$.
    \item For any $\bX \in \cD$, we have $\bar\sigma(\boldsymbol{0},\bX) \le L_3$. 
\end{enumerate}
Here $L_1$, $L_2$, and $L_3$ are universal constants. Let $\{ \bW_m\}_{m \in [M]}$ be sampled from Eq. \eqref{eqn:w-distribution}. Let $\bv: \R^{(d+1 ) \times (d+1)} \to \R$ with $\E_{\bW}[ \ltwos{\bv\paren{\bW}}^2] \le R^2$. Consider a random feature model associated with $\bar\sigma$ (hereafter, we will refer to it as \RF~model)
\begin{equation}
    \tfwm(\bx_{0:N}; \bV) =  \sum_{m=1}^M \frac{1}{N}\sum_{j=1}^n \bar\sigma\paren{\bW_m, \tbX} \<\bv_m, \tbx_i\>, \label{eqn:finite RF}
\end{equation}
as well as the infinite-width version of the random feature model,
\begin{align}
    f_{\bv}\paren{\sequenceX{}} =  \frac{1}{N} \sum_{i=1}^N \E_{\bW} \bigbrac{\bar\sigma(\bW,\tbx_{0} \tbx_{i}^\top ) \<\bv\paren{\bW},\tbx_i\>}. \label{eqn:infinite RF}
\end{align}
Note that both model {\RFA} and {\BRFA} correspond to special choices of $
\bar\sigma$. Thus, all lemmas and theorems in this section are applicable to both model \RFA~and model \BRFA.

\subsection{Proof of Theorem \ref{thm:finite-width}}\label{sec:proof-finite-width}
To prove Theorem \ref{thm:finite-width}, we first state two auxilliary lemmas, Lemma \ref{lem::rf and finite neuron} and \ref{lem:RKHS_norm_target_function_Theorem1}.
\begin{lemma}[From infinite-width {\RF} model to finite-width {\RF} model]
\label{lem::rf and finite neuron}
Consider $f_\bv$ that takes the form as Eq. \eqref{eqn:infinite RF}, with $\E_{\bW}[ \ltwos{\bv\paren{\bW}}^2] \le R^2$.
 Let $ \sequenceX{} \sim \Px$. Define the $\norms{\cdot}^2_{L^2(\Px)}$ norm by
\[\norms{g}_{L^2\paren{\Px}}^2 = \int g\paren{\sequenceX{} }^2 \Px \paren{\de 
 \sequenceX{}}. \]
Then with probability at least $1-\delta$, there exists a sequence of vectors $\set{\bv_m}_{m=1}^M \subseteq \R^{d+1}$ and constant universal $C < \infty$ that only depends on $L_1$, $L_2$, and $L_3$ s.t.
\begin{align}
    \Big\| f_{\bv} - \frac{1}{N} \sum_{i=1}^N \sum_{m=1}^M \bar\sigma( \bW_m,\tbx_{0} \tbx_{i}^\top ) \<\bv_m,\tbx_i\> 
 \Big\|_{L^2\paren{\Px}}^2 \le   \frac{C (d^2 + \log M)R^2\delta^{-1}}{M} \label{equ::finite neuron}
\end{align}
\begin{equation}
 \sum_{m=1}^M \ltwob{\bv_m} \le \sqrt{2} R +  \sqrt{\frac{C R^2 \delta^{-1}}{ M}}  ~~~~~\text{ and }~~~~~ \sum_{m=1}^M \ltwob{\bv_m}^2 \le \frac{C R^2 \delta^{-1}}{ M}. \label{equ::coeff for finite neuron}
\end{equation}
\end{lemma}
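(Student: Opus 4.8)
The plan is to realize the finite-width model as a Monte Carlo approximation of the infinite-width model $f_\bv$, with coefficients obtained by subsampling $\bv(\cdot)$ at the drawn weights, and then to control the error via a bias--variance decomposition. Concretely, I would fix a truncation radius $T \asymp d^2 + \log M$ and set $\bv_m = \bv(\bW_m)\,\mathbb{1}\{\lfro{\bW_m}^2\le T\}/M$. With this choice the finite-width model equals $\frac1M\sum_{m=1}^M \hat g_m^{\le}$, where $\hat g_m(\sequenceX{}) = \frac1N\sum_{i=1}^N \bar\sigma(\bW_m,\tbx_0\tbx_i^\top)\langle \bv(\bW_m),\tbx_i\rangle$ and $\hat g_m^{\le} = \hat g_m\,\mathbb{1}\{\lfro{\bW_m}^2\le T\}$; these are i.i.d.\ across $m$ with common mean $f_\bv^{\le}:=\E_\bW[\hat g_1^{\le}]$. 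I would then split the $L^2(\Px)$ error by the triangle inequality into a deterministic truncation bias $\|f_\bv-f_\bv^{\le}\|_{L^2(\Px)}$ and a mean-zero Monte Carlo fluctuation $\|f_\bv^{\le}-\frac1M\sum_m\hat g_m^{\le}\|_{L^2(\Px)}$.

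First I would record the elementary pointwise estimate, valid for every fixed $\bW$: using the Lipschitz property together with $\bar\sigma(\bzero,\cdot)\le L_3$ gives $\bar\sigma(\bW,\bX)^2 \le 2L_3^2 + 2L_1^2\lfro{\bW}^2$ for all $\bX\in\cD$, and hence, after Cauchy--Schwarz over the $N$ keys and $\ltwos{\tbx_i}^2=2$, the bound $\|\hat g_1\|_{L^2(\Px)}^2 \le C(1+\lfro{\bW}^2)\,\ltwos{\bv(\bW)}^2$. For the fluctuation term, the i.i.d.\ structure and Fubini give $\E_\bW\|f_\bv^{\le}-\frac1M\sum_m\hat g_m^{\le}\|_{L^2(\Px)}^2 \le \frac1M\E_\bW[\|\hat g_1^{\le}\|_{L^2(\Px)}^2]$; on the truncation event $\lfro{\bW}^2\le T$ the factor $1+\lfro{\bW}^2$ is at most $1+T$, so this is $\le C(1+T)R^2/M \lesssim (d^2+\log M)R^2/M$, and Markov's inequality upgrades it to $(d^2+\log M)R^2\delta^{-1}/M$ with probability $1-\delta/3$ (the source of the $\delta^{-1}$). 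For the bias I would use Jensen and Cauchy--Schwarz to get $\|f_\bv-f_\bv^{\le}\|_{L^2(\Px)} \le CR\,\E_\bW[(1+\lfro{\bW}^2)\mathbb{1}\{\lfro{\bW}^2>T\}]^{1/2}$; since $\lfro{\bW}$ is a $\tfrac12$-Lipschitz function of standard Gaussians with $\E\lfro{\bW}\le (d+1)/2$, the Gaussian concentration inequality (Theorem~\ref{thm:gaussian concentration}) controls this tail, and choosing $T = C(d^2+\log M)$ makes the bias $\lesssim R/\sqrt{M}$, which is absorbed into the fluctuation bound.

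The norm bounds follow from the same subsampling, since truncation only shrinks norms. For $\sum_m\ltwos{\bv_m}^2 = \frac1M\cdot\frac1M\sum_m\ltwos{\bv(\bW_m)}^2\mathbb{1}\{\cdots\}$, Markov's inequality applied to the empirical average (whose mean is $\le R^2$) yields $\sum_m\ltwos{\bv_m}^2 \le R^2\delta^{-1}/M$ with probability $1-\delta/3$. For $\sum_m\ltwos{\bv_m} = \frac1M\sum_m\ltwos{\bv(\bW_m)}\mathbb{1}\{\cdots\}$, whose mean is $\le R$ and whose variance is $\le R^2/M$, Chebyshev's inequality gives $\sum_m\ltwos{\bv_m}\le R + R\sqrt{\delta^{-1}/M}$ with probability $1-\delta/3$. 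A union bound over the three events completes the proof.

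The main obstacle, and the reason the naive second-moment argument fails, is that $\bv(\bW)$ and the feature magnitude $\bar\sigma(\bW,\cdot)$ are correlated and the latter is unbounded (controllable only up to order $\lfro{\bW}$), so $\E_\bW[\lfro{\bW}^2\ltwos{\bv(\bW)}^2]$ cannot be factored and can be arbitrarily large relative to $d^2R^2$ under the sole assumption $\E_\bW\ltwos{\bv(\bW)}^2\le R^2$. The truncation at $T\asymp d^2+\log M$ is exactly what decouples these quantities: it caps the feature size on the retained event (producing the $1+T$ variance factor, hence the $d^2+\log M$), while the $\chi^2$-type tail of $\lfro{\bW}^2$ forces $T\gtrsim d^2+\log M$ in order to keep the discarded bias at the $O(R^2/M)$ level. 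Checking that the $\log M$ demanded by the tail estimate is consistent with the $\log M$ appearing in the variance factor is the one place where the constants must be tracked carefully.
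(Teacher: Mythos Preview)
Your proposal is correct and follows essentially the same approach as the paper's own proof: set $\bv_m = \bv(\bW_m)\mathbb{1}\{\lfro{\bW_m}\text{ small}\}/M$, split the $L^2(\Px)$ error into a truncation bias plus an i.i.d.\ Monte Carlo fluctuation, bound the fluctuation in expectation and upgrade via Markov, bound the bias via Gaussian concentration on $\lfro{\bW}$ with threshold $\asymp d+\sqrt{\log M}$, and handle the two norm bounds by Chebyshev and Markov respectively with a union bound over the three events. The only cosmetic differences are that the paper parameterizes the truncation by $\lfro{\bW}\le R_\bW/2$ with $R_\bW=d+1+C\sqrt{\log M}$ rather than $\lfro{\bW}^2\le T$, and bounds the bias via $\sqrt{\E_\bW[\bar\sigma^4]}\sqrt{\P(\lfro{\bW}>R_\bW/2)}$ (using the sub-Gaussianity of $\bar\sigma(\bW,\cdot)$ granted by properties 1 and 2) rather than your $\E_\bW[(1+\lfro{\bW}^2)\mathbb{1}]^{1/2}$; both routes yield the same $O(R^2/M)$ bias after choosing the threshold.
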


\begin{lemma}\label{lem:RKHS_norm_target_function_Theorem1}
Under the setting of Theorem \ref{thm:finite-width}, let $f_\star$ be a target function of form \eqref{eqn:target-function} and \eqref{eqn:target-expansion}. Then there exists an infinite-width \RFA~model \eqref{eqn:infinite RF} with $\bv: \R^{(d+1 ) \times (d+1)} \to \R$ such that
\begin{equation}\label{eqn:fstar_representation_in_lemma}
f_\star = \frac{1}{N} \sum_{i=1}^N \E_{\bW} [\bar\sigma(\bW,\tbx_{0}\tbx_{i}^\top) \<\bv\paren{\bW},\tbx_i\>],
\end{equation}
with 
\[ \E_{\bW}[ \ltwos{\bv\paren{\bW}}^2] \le \rkhsbound,
\]
where $\rkhsbound$ is as defined in \eqref{eqn:bfstar}. %
\end{lemma}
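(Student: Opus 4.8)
The plan is to pass to the \emph{vector-valued} random-feature map $\Phi(\bx_0,\bx_i)(\bW)=\sigma(\langle\bW,\tbx_0\tbx_i^\top\rangle)\,\tbx_i\in\R^{d+1}$ underlying \eqref{eqn:infinite RF}, compute its reproducing kernel in closed form, and then invoke the feature-map equivalence \eqref{eqn:feature map equi} (together with the identification of $\mc{F}$ with $\mc{H}_K$ from \cite{minh2006mercer}) to rewrite the quantity to be bounded, $\E_{\bW}[\|\bv(\bW)\|_2^2]$, as the squared RKHS norm $\|F\|_{\mc{H}_K}^2$ of the single-token function $F(\bx_0,\bx_i)=\sum_{r,s\ge0}\langle\bx_0^{\otimes r}\otimes\bx_i^{\otimes s},\mathbf{f}_{rs}\rangle$. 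Because \eqref{eqn:fstar_representation_in_lemma} is linear in $F$ and the \emph{same} $\bv$ represents every token slot $i$, it suffices to produce a single representation of $F$ in this RKHS whose squared norm is at most $B(f_\star)$; minimality of the representing $\bv$ then yields the claimed bound $\E_{\bW}[\|\bv(\bW)\|_2^2]=\|F\|_{\mc{H}_K}^2\le B(f_\star)$.

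First I would compute the kernel. Setting $\bU\defeq\tbx_0\tbx_i^\top/2$, which has unit Frobenius norm since $\|\tbx_0\|_2=\|\tbx_i\|_2=\sqrt2$, the matrix $2\bW$ has standard Gaussian entries and $\langle\bW,\tbx_0\tbx_i^\top\rangle=\langle 2\bW,\bU\rangle\sim\normal(0,1)$. Hence, by the two-point formula for Hermite polynomials (Appendix~\ref{sec:Hermite}) applied to the ReLU Hermite expansion $\sigma=\sum_n\frac{a_n}{n!}\He_n$, the dual activation is the arc-cosine kernel $\E_{\bW}[\sigma(\langle 2\bW,\bU\rangle)\sigma(\langle 2\bW,\bU'\rangle)]=\sum_{n\ge0}\frac{a_n^2}{n!}\langle\bU,\bU'\rangle^n$ with $\langle\bU,\bU'\rangle=\langle\tbx_0,\tbx_0'\rangle\langle\tbx_i,\tbx_i'\rangle/4$. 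Multiplying by the value factor $\langle\tbx_i,\tbx_i'\rangle$ and using $\langle\ba,\bbb\rangle^n=\langle\ba^{\otimes n},\bbb^{\otimes n}\rangle$, the kernel factorizes as
\[
K\big((\bx_0,\bx_i),(\bx_0',\bx_i')\big)=\sum_{n\ge0}\frac{a_n^2}{n!\,4^n}\,\langle\tbx_0^{\otimes n},\tbx_0'^{\otimes n}\rangle\,\langle\tbx_i^{\otimes(n+1)},\tbx_i'^{\otimes(n+1)}\rangle,
\]
which exhibits the explicit tensor feature map $\psi(\bx_0,\bx_i)=\big(\tfrac{a_n}{\sqrt{n!}\,2^n}\,\tbx_0^{\otimes n}\otimes\tbx_i^{\otimes(n+1)}\big)_{n\ge0}$.

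Next I would build a representation of $F$ in $\psi$ monomial by monomial. The key device is that $\tbx_0=[\bx_0;1]$ and $\tbx_i=[\bx_i;1]$ carry a constant coordinate, so contracting a tensor supported on ``$\bx_0$ in $r$ prescribed slots and the constant index elsewhere'' (and likewise $s$ of the $n+1$ $\tbx_i$-slots) against $\tbx_0^{\otimes n}\otimes\tbx_i^{\otimes(n+1)}$ returns exactly $\langle\bx_0^{\otimes r}\otimes\bx_i^{\otimes s},\cdot\rangle$, the surplus slots being absorbed by the $1$'s, as long as $n\ge r$ and $n+1\ge s$. I would therefore route each coefficient tensor $\mathbf{f}_{rs}$ into the feature level $n=n(r,s)$ (the smallest admissible \emph{even} integer, since $a_n=0$ for odd $n\ge3$), rescaled by $\sqrt{n!}\,2^n/a_n$. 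By \eqref{eqn:feature map equi}, $\|F\|_{\mc{H}_K}^2$ is at most the squared norm of this representation; orthogonality across distinct Hermite levels collapses it to a level-by-level sum, each term being $\frac{n!\,4^n}{a_n^2}\|\mathbf{f}_{rs}\|_{\Fr}^2$, and inserting the ReLU coefficient asymptotics $a_n^2/n!\asymp n^{-5/2}$ bounds this by a constant times $n^{5/2}4^n\|\mathbf{f}_{rs}\|_{\Fr}^2$, to be charged to $C_k=k^{4.5}4^k\vee1$ with $k=\max\{r,s\}$ (here $n\le k+1$).

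I expect the main obstacle to be the final bookkeeping rather than the conceptual structure. Two effects must be absorbed into $C_k$: the parity gap, which forces rounding $n(r,s)$ up to the next even integer (costing only an extra constant in $4^n$ but requiring care for the small cases $n\in\{0,1\}$ where $a_0,a_1\ne0$), and the combinatorics of the constant-coordinate expansion---several pairs $(r,s)$ with the same $\max\{r,s\}=k$ feed the same level $n$, and a fixed monomial can be realized by many slot patterns. Because distinct monomials at a common level need not produce orthogonal components, I would control their joint contribution by Cauchy--Schwarz over the $O(k)$ pairs with $\max\{r,s\}=k$, which loses only a polynomial-in-$k$ factor; it is precisely this extra factor, compounded with the $n^{5/2}$ from the Hermite decay, that inflates the naive exponent to the stated $k^{4.5}$ while leaving the $4^k$ intact.
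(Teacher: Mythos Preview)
Your proposal is correct and follows essentially the same route as the paper: compute the ReLU arc-cosine kernel, exhibit the tensor feature map $\{c_\ell^{1/2}2^{-\ell}(\tbx_0\tbx_i^\top)^{\otimes\ell}\otimes\tbx_i\}_\ell$, embed each monomial $\mathbf{f}_{rs}$ at the smallest admissible even level via the constant-coordinate padding $\wt{\mathbf{f}}_{rs}=\mathbf{f}_{rs}\otimes\mathbf{1}_{d+1}^{\otimes(\cdots)}$, and absorb the Cauchy--Schwarz overcount together with $c_\ell^{-1}=O(\ell^{5/2})$ into $C_k=k^{4.5}4^k$. The only cosmetic difference is that the paper carries the token average through the sequence-level kernel $K(\bx_{0:N},\bx'_{0:N})$ while you reduce first to the single-token RKHS for $F$; since the same $\bv$ represents both, the bounds coincide.
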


The proofs of Lemma \ref{lem::rf and finite neuron} and \ref{lem:RKHS_norm_target_function_Theorem1} are given in Section \ref{sec:proof_aux_Theorem1}. Now we assume these two lemmas hold, and use them to prove Theorem \ref{thm:finite-width}. 

\begin{proof}[Proof of Theorem \ref{thm:finite-width}]

For any function that takes form \eqref{eqn:target-function} and \eqref{eqn:target-expansion}, by Lemma \ref{lem:RKHS_norm_target_function_Theorem1}, it admits representation \eqref{eqn:fstar_representation_in_lemma} with $\E_{\bW}[ \ltwos{\bv\paren{\bW}}^2] \le \rkhsbound$. Then by Lemma \ref{lem::rf and finite neuron}, since \RFA~model is a special case of the \RF~model, there exists $\sets{\bv_m}_{m\in[M]}$ such that Eq. \eqref{equ::finite neuron} and Eq. \eqref{equ::coeff for finite neuron} hold with probability larger than $1-\delta$. This proves Theorem \ref{thm:finite-width}.
\end{proof}

\subsubsection{Proof of auxiliary lemmas}\label{sec:proof_aux_Theorem1}

\begin{proof}[Proof of Lemma \ref{lem::rf and finite neuron}]~

\noindent
{\bf Step 1. Proof of Eq. \eqref{equ::finite neuron}. } To prove Eq. \eqref{equ::finite neuron}, we use a truncation argument. 

Fix a $R_{\bW} > 0$ which we will choose its value later in the proof.
Recall that we have $\bW_1,\ldots,\bW_m$ i.i.d. with $\bW_{m,i,j} \simiid \normal\paren{0,1/4}$. Define $\bv_m = {\bv}\paren{\bW_m} \indics{\lfros{\bW_m}\le R_{\bW}/2}/M$ for $m=1,\ldots,M$. Consider the truncated infinite-width random feature  model $f^{R_{\bW}}_{\bv} = \frac{1}{N} \sum_{i=1}^N \E_{\bW}[\bar\sigma(\bW,\tbX) \<\bv(\bW),\tbx_i\>\indics{\lfros{\bW}\le R_{\bW}/2}]$, we have
    \begin{align*}
        &~ \E_{\bW}\Big[\Big\| f^{R_{\bW}}_{\bv} - \frac{1}{N} \sum_{i=1}^N \sum_{m=1}^M \bar\sigma(\bW,\tbX) \<\bv_m,\tbx_i\> \Big\|_{L^2\paren{\Px}}^2\Big] \\
    \le &~ \E_{\bX}\Big( \E_{\bW} \Big[ \Big\|\frac{1}{N} \sum_{i=1}^N \bar\sigma(\bW,\tbX) \tbx_i\Big\|^2_2 \Big\|\bv\paren{\bW}\Big\|^2_2 \indic{\lfros{\bW}\le R_{\bW}/2} \Big] \Big)/M \\
    \le &~ \E_{\bX}\Big(   \E_{\bW} \Big[  \frac{2}{N} \sum_{i=1}^N \|\tbx_i\|^2_2 (L_1^2 \|\bW\|^2_2 + L_3^2) \|\bv(\bW)\|^2_2 \indic{\lfros{\bW}\le R_{\bW}/2}\Big]\Big)/M \\
    \le &~  \frac{\wt{C} R_{\bW}^2 R^2}{M},
    \end{align*}
    where $\wt{C_1}$ only depends on $L_1$, $L_2$, and $L_3$. \hengyu{fix C} Then using Markov's inequality,
    \begin{align*}
    \Big\| f_{\bv}^{R_{\bW}} - \frac{1}{N} \sum_{i=1}^N \sum_{m=1}^M \bar\sigma(\bW,\tbX) \<\bv_m,\tbx_i\> \Big\|_{L^2\paren{\Px}}^2 \le   \frac{3 \wt{C} R^2_{\bW}R^2}{\delta M}
    \end{align*}
    holds with probability at least $1-\delta/3$. Next for the difference between $f^{R_{\bW}}_{\bv}$ and $f_{\bv}$, we have
    \begin{align*}
        \norms{f_{\bv} - f^{R_{\bW}}_{\bv}}_{L^2\paren{\Px}}^2  \le&~ \E_{\bX} \Big[ \Big(\E_{\bW}\Big[ \frac{1}{N} \sum_{i=1}^N \bar\sigma(\bW,\tbX) \<\bv,\tbx_i\> \indic{\lfros{\bW}\ge R_{\bW}/2} \Big] \Big)^2 \Big] \\
        \le&~ \E_{\bX} \Big[ \frac{1}{N} \sum_{i=1}^N \E_\bW \Big[\<\bv,\tbx_i\>^2\Big] \sqrt{\E_\bW[\bar\sigma(\bW,\tbX)^4]} \sqrt{\P(\lfros{\bW} \ge R_{\bW}/2)}\Big] \\
         \le&~ \wt{C_2} R^2 \P\paren{\lfros{\bW} > R_{\bW}/2 }^{\frac{1}{2}}. 
    \end{align*}
    Here $\wt{C_2}$ is a constant that only depends on $L_1$ and $L_2$.
     By concentration of functions of Gaussian random vectors (Theorem \ref{thm:gaussian concentration}), $\|\bar\sigma(\bW,\tbX)\|_{\psi_2} \le L_1+L_2$ for any $i$. So in the last inequality, we used the bound $(\E_\bW[\bar\sigma(\bW,\tbX)^4])^{1/2}$ by $\Theta((L_1+L_2)^2)$. To bound $\P(\lfros{\bW} > R_{\bW}/2)$, we use concentration of functions of Gaussian random vectors (Theorem \ref{thm:gaussian concentration}) again, and get that
    \begin{equation*}
        \P\big( \lfros{\bW} - \E(\lfros{\bW}) \ge t/2 \big) \le \exp\paren{-t^2/2}.
    \end{equation*}
    Take $C=max(\wt{C_1},\wt{C_2})$. Since $\E(\lfros{\bW}) \le (\E\lfros{\bW}^2)^{1/2} \le d+1$, by choosing $R_{\bW} = d + 1 + C\sqrt{\log M}$, the above probability is less than $1/M^2$. 
    Then 
    \begin{align*}
        \Big\| f_{\bv} - \frac{1}{N} \sum_{i=1}^N \sum_{m=1}^M \bar\sigma(\bW,\tbX) \<\bv_m,\tbx_i\> \Big\|_{L^2\paren{\Px}}^2 
        \le &~ \frac{3CR^2_{\bW}R^2}{\delta M} + C R^2 \P\paren{\lfros{\bW} > R_{\bW}/2 }^{\frac{1}{2}} \\
        \le &~ \frac{C (\log M + d^2) R^2}{\delta M}
    \end{align*}
    with probability larger than $1-\delta/3$. This proves Eq. \eqref{equ::finite neuron}. 
    
\noindent
{\bf Step 2. Proof of Eq. \eqref{equ::coeff for finite neuron}. } By Chebyshev's inequality, 
    \begin{align*}
        &~ \P\paren{  \sum_{m=1}^M \ltwo{\bv_m} - \E\bigbrac{\ltwo{\bv(\bW)} \indic{\lfros{\bW_m}\le R_{\bW}/2} } \ge \sqrt{\frac{6R^2}{\delta M}}}\\
        \le &~ \frac{\E\bigbrac{\ltwo{\bv(\bW)}^2\indic{\lfros{\bW}\le R_{\bW}/2}}\delta M}{6R^2 M} \le \frac{\delta}{3}.
    \end{align*}
    Combining with the fact that $\E \bigbrac{\ltwo{\bv(\bW)}\indic{\lfros{\bW}\le R_{\bW}/2}} \le \sqrt{\E \bigbrac{\ltwo{\bv(\bW)}^2} } \le R$, we have
    \[ \P\paren{ \sum_{m=1}^M \ltwo{\bv_m} \ge R + \sqrt{\frac{6R^2}{\delta M}} } \le \frac{\delta}{3}. \]
    For the second part of (\ref{equ::coeff for finite neuron}), Markov inequality gives
    \[\P\paren{ \sum_{m=1}^M \ltwo{\bv_m}^2 \ge \frac{6R^2}{\delta M} }\le \frac{\E\bigbrac{\ltwo{\bv(\bW)}^2\indic{\lfros{\bW_m}\le R_{\bW}/2}} \delta M}{6R^2 M} \le \frac{\delta}{3}.\]
    This proves Eq. \eqref{equ::coeff for finite neuron} and completes the proof of Lemma \ref{lem::rf and finite neuron}. 
\end{proof}

\begin{proof}[Proof of Lemma \ref{lem:RKHS_norm_target_function_Theorem1}]~

To get the kernel of the \RFA~model, we have 
\[ K\paren{\sequenceX{},\sequenceX{\prime}} = \frac{1}{N^2} \E_{\bW}\Big[ \sum_{i,j=1}^N \sigma(\langle\bW,\tbX\rangle) \sigma(\langle\bW,\tbXp\rangle) \langle \tbx_i,\tbx_j^{\prime}\rangle \Big].\]
We first consider a single component in the sum, which is
\begin{equation}\label{eq:Gaussian single comp}
    \E_{\bW}\bigbrac{\barsig\paren{\langle \bW,\tbX\rangle}\barsig\paren{\langle\bW,\tbx_0^\prime(\tbx_j)^\prime\rangle} }\langle \tbx_i, \tbx_j'\rangle.
\end{equation}
Let $u_{i,j}=\<\tbX,\tbXp\> / 4$. Let $\normal_2(\rho)$ denote a bivariate normal distribution with marginals are $\normal\paren{0,1}$ and the correlation is $\rho \in [-1,1]$.
Then \eqref{eq:Gaussian single comp} can be expanded as follows:
\begin{align*}
    &~\E_{\bW}\bigbrac{\barsig\paren{\langle \bW,\tbX\rangle}\barsig\paren{\langle\bW,\tbXp\rangle} }\langle \tbx_i, \tbx_j^\prime\rangle\\
    = &~ \E_{Z_1,Z_2 \sim\normal_2(u_{i,j})}\bigbrac{\barsig\paren{Z_1}\barsig\paren{Z_2} }\langle \tbx_i, \tbx_j^\prime\rangle \\
    = &~ \frac{1}{2\pi}\paren{u_{i,j} \paren{\pi/2 - \arccos u_{i,j}}
     + \sqrt{1 - u_{i,j}^2}}\langle \tbx_i, \tbx_j^\prime\rangle \\
    = &~ \frac{1}{2\pi}
     \paren{1 + \frac{\pi}{2} u_{i,j} +
     \sum_{\ell=1}^\infty 
     \frac{(2\ell-3)!!}{(2\ell)!!(2\ell-1)} u_{i,j}^{2\ell}}\langle \tbx_i, \tbx_j^\prime\rangle \\
    = &~ \sum_{\ell\in\set{0,1}\cup \set{2k}_{k\ge
    1}} c_\ell \<\tbX,\tbXp\>^\ell 4^{-\ell} \langle \tbx_i, \tbx_j^\prime\rangle \\
    = &~ \sum_{\ell\in\set{0,1}\cup \set{2k}_{k\ge
    1}} c_\ell \Big\langle 2^{-\ell} (\tbX)^{\otimes \ell}\otimes \tbx_i,  2^{-\ell} (\tbXp)^{\otimes \ell}\otimes \tbx_j^\prime\Big\rangle.
\end{align*}
Here the coefficients $\set{c_\ell}$ satisfy
  \begin{equation*}
    c_0 = 1/(2\pi), ~~~c_1 = 1/4, ~~~\text{and}~~~c_{2\ell} =
    \frac{(2\ell-3)!!}{2\pi(2\ell)!!(2\ell-1)} =O(\ell^{-\frac{5}{2}}) ~~{\rm for}~\ell  \ge 1.
  \end{equation*}
Therefore, the kernel can be expressed as:
\begin{align*}
    &~K\paren{\sequenceX{},\sequenceX{\prime}}\\ 
    =&~ \frac{1}{N^2}\sum_{1\leq i,j \leq N}\sum_{\ell\in\set{0,1}\cup \set{2k}_{k\ge
    1}} c_\ell \Big\langle 2^{-\ell} (\tbX)^{\otimes \ell}\otimes \tbx_i,  2^{-\ell} (\tbXp)^{\otimes \ell}\otimes \tbx_j^\prime\Big\rangle \\
    =&~\sum_{\ell\in\set{0,1}\cup \set{2k}_{k\ge
    1}}  \Big\langle \frac{\sqrt{c_\ell}}{N} \sum_{i=1}^N 2^{-\ell} (\tbX)^{\otimes \ell}\otimes \tbx_i,\frac{\sqrt{c_\ell}}{N} \sum_{j=1}^N  2^{-\ell} (\tbXp)^{\otimes \ell}\otimes \tbx_j^\prime  \Big\rangle.
\end{align*}
Now we reformulate the target function as
\begin{align*}
    f_\star = &~ \frac{1}{N} \sum_{i=1}^N \sum_{\ell=0}^\infty \sum_{\max\{r,s\} = \ell}  \<\tbx_0^{\otimes r} \otimes \tbx_i^{\otimes s}, \mathbf{f}_{rs}\> \\
    = &~ \frac{1}{N}\sum_{i=1}^N \sum_{\ell=2k,k\ge 0}^\infty \sum_{\max\{r,s\} = \ell\text{ or }\ell-1}  \<(\tbx_0 \otimes \tbx_i)^{\otimes \ell} \otimes \tbx_i, \wt{\mathbf{f}}_{rs}\>\\
     = &~ \sum_{\ell = 2k,k\ge 0}^\infty \Big\langle \frac{2^{\ell}}{\sqrt{c_\ell}}  \sum_{\max\{r,s\} = \ell\text{ or }\ell-1} \wt{\mathbf{f}}_{rs}, \frac{\sqrt{c_\ell}}{N} \sum_{i=1}^N 2^{-\ell} (\tbX)^{\otimes \ell}\otimes \tbx_i \Big\rangle,
\end{align*}
where $\wt{\mathbf{f}}_{rs}$ is a transpose of $\mathbf{f}_{rs}\otimes \mathbf{1}_{d+1}^{\otimes (2\ell-r-s+1)}$ with $\mathbf{1}_{d+1} = (0,\ldots,0,1)$, such that $\<\tbx_0^{\otimes r} \otimes \tbx_i^{\otimes s}, \mathbf{f}_{rs}\> = \langle(\tbx_0 \otimes \tbx_i)^{\otimes \ell} \otimes \tbx_i, \wt{\mathbf{f}}_{rs}\rangle$ for any $r\ge 0$ and $s\ge 0$. Then by the feature map equivalence property \eqref{eqn:feature map equi}, the RKHS norm of $f^\star$ can be bounded as %
\begin{align*}
    \norm{f^\star}_{\mc{H}_K}^2  \le &~ \Big\|\sum_{\ell = 2k,k\ge 0}^\infty \Big\langle \frac{2^{\ell}}{\sqrt{c_\ell}} \sum_{\max\{r,s\} = \ell\text{ or }\ell-1} \wt{\mathbf{f}}_{rs}, \frac{\sqrt{c_\ell}}{N} \sum_{i=1}^N 2^{-\ell} (\tbX)^{\otimes \ell}\otimes \tbx_i \Big\rangle\Big\|_{\mc{H}_K}^2 \\
    =&~\sum_{\ell = 2k,k\ge 0}^\infty \Big\langle \frac{2^{\ell}}{\sqrt{c_\ell}} \sum_{\max\{r,s\} = \ell\text{ or }\ell-1} \wt{\mathbf{f}}_{rs}, \frac{2^{\ell}}{\sqrt{c_\ell}}  \sum_{\max\{r,s\} = \ell\text{ or }\ell-1} \wt{\mathbf{f}}_{rs} \Big\rangle \\
    =&~\sum_{\ell = 2k,k\ge 0}^\infty 4^{\ell} c_\ell^{-1} \Big\|\sum_{\max\{r,s\} = \ell\text{ or }\ell-1} \wt{\mathbf{f}}_{rs}\Big\|_{\rm Fr}^2 \\
    \le&~\sum_{k=0}^\infty 4^k k^{4.5} \sum_{\max\{r,s\}=k} \lfro{\wt{\mathbf{f}}_{rs}}^2.
\end{align*}
Thus, using again the property \eqref{eqn:feature map equi} with the original feature map of the random feature model, there exists $\bv: \R^{(d+1 ) \times (d+1)} \to \R^{d+1}$ such that
\begin{align*}
f_\star =&~ \frac{1}{N} \sum_{i=1}^N \E_{\bW} [\bar\sigma(\bW,\tbx_{0}\tbx_{i}^\top) \<\bv\paren{\bW},\tbx_i\>],~~\text{with}\\
\E_{\bW}[ \ltwos{\bv\paren{\bW}}^2] \le&~ \sum_{k=0}^\infty 4^k k^{4.5} \sum_{\max\{r,s\}=k} \lfros{\wt{\mathbf{f}}_{rs}}^2.
\end{align*}
Notice that $\lfros{\wt{\mathbf{f}}_{rs}}^2 = \lfros{{\mathbf{f}}_{rs}}^2$ by our construction of $\wt{\mathbf{f}}_{rs}$, so that the right-hand-side of the equation above coincides with Eq. \eqref{eqn:bfstar}. This proves Lemma \ref{lem:RKHS_norm_target_function_Theorem1}. 
\end{proof}

\subsection{Preliminary proposition for Theorem \ref{thm::sample complexity}}
\label{sec::proof of sample complexity}

To prove Theorem \ref{thm::sample complexity}, we first present and prove the following proposition that gives a high probability bound for the difference between the empirical risk and the population risk. In the proposition and lemmas below, we denote $\bX = \{ \sequenceX{(j)} \}_{j \in [n]}$ and $\by = \set{y_j}_{j \in [n]}$. 
\begin{proposition}
\label{prop::empirical process}
    Under the setting of Theorem \ref{thm::sample complexity}. Consider the finite width \RF~model \eqref{eqn:finite RF}:
\begin{align}
    \tfwm(\bx_{0:N}; \bV) = \sum_{m=1}^M \frac{1}{N}\sum_{j=1}^n \bar\sigma\paren{\bW_m, \tbX} \<\bv_m, \tbx_i\>. \nonumber
\end{align}
Then with probability at least $1-\delta$ (w.r.t. $\bW,$ $\by$, and $\bX$), we have
    \begin{align}
 &~\sup_{\bV \in \mathcal{V}_M} \bigabs{ \frac{1}{n} \sum_{j=1}^n \ell\paren{\tfwm\paren{\sequenceX{(j)};\bV},y_j} - \E_{\sequenceX{},y}\ell\paren{\tfwm(\sequenceX{};\bV),y}}\nonumber\\ 
 \lesssim &~ \Bvone \sqrt{\frac{\log (dM) \log \paren{nNM}}{n}} + \sqrt{\log\paren{\frac{6}{\delta}}}\paren{\frac{\Bvone}{\sqrt{n}} + \sqrt{\frac{\Bvtwo}{M}}}.\label{equ::empirical process}
    \end{align}
\end{proposition}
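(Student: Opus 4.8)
The plan is to control the empirical process by the standard two-step recipe---first bound its expectation by a Rademacher complexity, then upgrade to a high-probability statement via concentration---while paying special attention to the fact that the features $\bar\sigma(\bW_m,\tbX)$ are unbounded in the input.

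\textbf{Step 1 (symmetrization and contraction).} First I would introduce a ghost sample and a Rademacher sequence $\{\xi_j\}_{j\in[n]}$ to obtain $\E\sup_{\bV\in\cV_M}|\hat L_D(f)-L_D(f)|\le 2\,\E_{\bX,\xi}\sup_{\bV\in\cV_M}\frac1n\sum_{j=1}^n\xi_j\ell(\tfwm(\sequenceX{(j)};\bV),y_j)$. Since $\ell(\cdot,y)$ is $1$-Lipschitz and $\ell(0,y)\le1$, writing $\ell(\cdot,y_j)=\phi_j(\cdot)+\ell(0,y_j)$ with $\phi_j(0)=0$ lets me apply the Ledoux--Talagrand contraction inequality (Theorem~\ref{thm:rademacher contraction}) to strip the loss, reducing the problem to the empirical Rademacher complexity of $\{\tfwm(\cdot;\bV):\bV\in\cV_M\}$ up to a lower-order additive term of order $n^{-1/2}$ coming from the $\ell(0,y_j)$ offsets.

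\textbf{Step 2 (Rademacher bound via the $\ell_1$ constraint).} Writing $\tfwm(\sequenceX{(j)};\bV)=\sum_{m=1}^M\<\bv_m,\bg_m^{(j)}\>$ with $\bg_m^{(j)}=\frac1N\sum_i\bar\sigma(\bW_m,\tbx_0^{(j)}(\tbx_i^{(j)})^\top)\tbx_i^{(j)}$, the Rademacher sum is $\sum_m\<\bv_m,\frac1n\sum_j\xi_j\bg_m^{(j)}\>$, so the $\ell_1$ part of the constraint \eqref{equ::ERM constraint} gives $\sup_{\bV\in\cV_M}\frac1n\sum_j\xi_j\tfwm\le\Bvone\max_m\ltwo{\frac1n\sum_j\xi_j\bg_m^{(j)}}$. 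I would bound $\E_\xi\max_m\|\cdot\|$ by first truncating the sub-Gaussian scores $\<\bW_m,\tbx_0\tbx_i^\top\>$ at level $\Theta(\sqrt{\log(nNM)})$ (a union bound over the $MNn$ scores shows none is exceeded with high probability, by Theorem~\ref{thm:gaussian concentration}), and then applying the matrix Bernstein inequality (Theorem~\ref{thm:matrix bernstein} with $d_2=1$) to the mean-zero sum $\frac1n\sum_j\xi_j\bg_m^{(j)}$ for each head, followed by a union over the $M$ heads. This yields a factor $\sqrt{\log(dM)}$ (from $\log(d+2)$ in Bernstein plus $\log M$ from the union) times the truncation level $\sqrt{\log(nNM)}$, producing the first term $\Bvone\sqrt{\log(dM)\log(nNM)/n}$.

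\textbf{Step 3 (concentration of the supremum --- the main obstacle).} The difficulty is that $\Phi(\bX,\by)\defeq\sup_{\bV\in\cV_M}|\hat L_D(f)-L_D(f)|$ is not a bounded-difference function, since $\|\tfwm\|_\infty$ scales with $\max_m\lfro{\bW_m}$ and hence with the dimension; a direct use of Theorem~\ref{thm:bounded difference} would lose a dimension factor. To sidestep this I would split each feature as $\bar\sigma=\bar\sigma\mathbf{1}\{|\bar\sigma|\le\tau\}+\bar\sigma\mathbf{1}\{|\bar\sigma|>\tau\}$ with $\tau=\Theta(\sqrt{\log M})$, inducing $f=f^{\le\tau}+f^{>\tau}$ and a matching split of $\Phi$. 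On the truncated piece the per-sample increment is $\lesssim\tau\Bvone/n$ after the $\ell_1$ constraint, so Theorem~\ref{thm:bounded difference} yields sub-Gaussian fluctuations of order $\sqrt{\log(1/\delta)}\,\Bvone/\sqrt n$. For the tail piece I would invoke the $\ell_2$ part of \eqref{equ::ERM constraint}: Cauchy--Schwarz over heads gives $|f^{>\tau}|\le\sqrt2\sqrt{\Bvtwo/M}\,(\sum_m(\frac1N\sum_i|\bar\sigma|\mathbf{1}\{|\bar\sigma|>\tau\})^2)^{1/2}$, and the sub-Gaussianity of each score (Theorem~\ref{thm:gaussian concentration}) bounds the expected squared tail mass per head by $e^{-\Omega(\tau^2)}$, so that $\tau=\Theta(\sqrt{\log M})$ drives the aggregate to $\lesssim\sqrt{\Bvtwo/M}$, with a further sub-Gaussian concentration contributing the $\sqrt{\log(1/\delta)}\sqrt{\Bvtwo/M}$ term. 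This is precisely where the two-part constraint set is essential: the $\ell_1$ bound controls the well-behaved truncated process, while the $\ell_2$ bound tames the heavy tail.

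\textbf{Step 4 (assembling).} Finally I would combine Steps~1--3 on the intersection of the relevant high-probability events (the score-truncation events for $\{\bW_m\}$ and the data, and the two fluctuation events), inverting each sub-Gaussian tail $2e^{-t^2/2}$ at confidence $\delta/3$---which gives $t=\sqrt{2\log(6/\delta)}$ and explains the $\log(6/\delta)$ factor---so that a union bound over the $O(1)$ events leaves total failure probability at most $\delta$. Summing the expectation bound of Step~2 with the two fluctuation terms of Step~3 gives \eqref{equ::empirical process}. I expect Step~3 to be the crux: getting the tail contribution down to the clean $\sqrt{\Bvtwo/M}$ scale---free of spurious dimension or $\log n$ factors---while simultaneously keeping the truncated part's increment small enough to produce only $\Bvone/\sqrt n$, is the delicate balance that forces the particular form of the constraint set $\cV_M$.
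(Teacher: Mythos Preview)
Your Steps 1 and 2 are essentially correct and match the paper's argument (its Lemma~B.3 carries out exactly your Step~2). Step~3, however, takes a different route from the paper and contains a slip that prevents you from reaching the stated bound.

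The paper does \emph{not} truncate $\bar\sigma$. Instead, it separates the randomness in $\bW$ from the randomness in the data via the three-term decomposition
\[
\big|\hat L_D(\tfwm)-L_D(\tfwm)\big|
\;\le\;
\big|\hat L_D-\E_{\bW}\hat L_D\big|
+\big|\E_{\bW}\hat L_D-\E_{\bX,\by,\bW}\hat L_D\big|
+\big|\E_{\bX,\by}\ell-\E_{\bX,\by,\bW}\ell\big|,
\]
taking $\sup_{\bV\in\cV_M}$ term by term. For the first and third terms (fluctuation in $\bW$), the map $\bW_{1:M}\mapsto\sup_{\bV}|\cdots|$ is $O(\sqrt{\Bvtwo/M})$-Lipschitz in Frobenius norm, since $\bar\sigma$ is Lipschitz in $\bW$ and the $\ell_2$ constraint gives $\sum_m\ltwo{\bv_m}\lfro{\bW_m-\bW_m'}\le\sqrt{\Bvtwo/M}\,\lfro{\bW_{1:M}-\bW_{1:M}'}$; Gaussian concentration (Theorem~\ref{thm:gaussian concentration}) then yields the clean $\sqrt{\log(1/\delta)}\sqrt{\Bvtwo/M}$ with no truncation. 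For the middle term (fluctuation in the data), the key observation is that \emph{after} averaging over $\bW$ the features are uniformly bounded ($\E_\bW|\bar\sigma|\le L_2$), so the per-sample increment of $\E_\bW[\hat L_D]$ is $\lesssim\Bvone/n$ via the $\ell_1$ constraint, and bounded differences (Theorem~\ref{thm:bounded difference}) gives the clean $\sqrt{\log(1/\delta)}\,\Bvone/\sqrt n$. The Rademacher bound from Step~2 is used to control the \emph{means} of all three suprema.

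Your truncation route has a concrete slip: with truncation at level $\tau$, the per-sample increment on the truncated piece is $\tau\Bvone/n$, so bounded differences gives fluctuations of order $\sqrt{\log(1/\delta)}\,\tau\Bvone/\sqrt n$, not $\sqrt{\log(1/\delta)}\,\Bvone/\sqrt n$ as you wrote. Since the tail-mass requirement forces $\tau=\Theta(\sqrt{\log M})$, you lose a $\sqrt{\log M}$ factor on that term relative to the proposition as stated. The paper's $\E_\bW$-trick sidesteps this entirely: taking expectation in $\bW$ first makes the feature bounded by a constant with no truncation level appearing. Your intuition that the $\ell_1$ and $\ell_2$ parts of $\cV_M$ serve distinct roles is exactly right, but the split is along the $\bW$-versus-data axis (Gaussian concentration uses $\ell_2$, bounded differences uses $\ell_1$), not along a truncated-versus-tail axis.
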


The main difficulty of the proof of Proposition \ref{prop::empirical process} comes from that $\ell\paren{\tfwm,y}$ might be unbounded and that $\ell(\tfwm(\sequenceX{(j)}),y_j)$ are not independent across $j$ (since they share the same $\sets{\bW_m}_{m\in[M]}$). So we begin with several lemmas below. 
\begin{lemma}
\label{lem::emp process mean with E xyw}
Let $\sets{\xi_j}_{j\in [n]}$ be a set of $i.i.d.$ Rademacher random variables. Under the setting of Proposition~\ref{prop::empirical process}, 
\begin{equation}
    \E_{\bX,\by,\bW,\boldsymbol{\xi}} \Big[\sup_{\bV \in \fnclass} \Big| \dn \sumn \xi_j \ell\paren{\tfwm\paren{\sequenceX{(j)}; \bV},y_j} \Big|\Big] \lesssim \Bvone \sqrt{\frac{\log (dM) \log \paren{nNM}}{n}}.
    \label{eqn:emp process mean with E xyw form 1}
\end{equation}
Furthermore, any fixed $\bX$ and $\by$,
\begin{equation}
    \E_{\bW,\boldsymbol{\xi}} \Big[\sup_{\bV \in \fnclass} \Big| \dn \sumn \xi_j \ell\paren{\tfwm\paren{\sequenceX{(j)}; \bV},y_j} \Big|\Big] \lesssim \Bvone \sqrt{\frac{\log (dM) \log \paren{nNM}}{n}}.
        \label{eqn:emp process mean with E xyw form 2}
\end{equation}
\end{lemma}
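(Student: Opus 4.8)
The plan is to bound this symmetrized Rademacher average in three moves: peel off the loss by contraction, use the group-$\ell_1$ constraint to reduce to a per-head vector Rademacher sum, and control the maximum over heads by truncation plus concentration. Since the right-hand side of \eqref{eqn:emp process mean with E xyw form 2} is uniform in $(\bX,\by)$, it suffices to prove the conditional bound \eqref{eqn:emp process mean with E xyw form 2}; then \eqref{eqn:emp process mean with E xyw form 1} follows by taking $\E_{\bX,\by}$ on both sides. So fix $\bX$ and $\by$ throughout.

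First I would peel off the loss. Conditioning on $(\bW,\bX,\by)$ and applying the Ledoux--Talagrand contraction inequality (Theorem \ref{thm:rademacher contraction}) to the set $\{(\tfwm(\sequenceX{(j)};\bV))_{j\in[n]}:\bV\in\cV_M\}\subset\R^n$ with the $1$-Lipschitz maps $\phi_j(u)=\ell(u,y_j)-\ell(0,y_j)$ (so that $\phi_j(0)=0$) gives
\[
\E_{\xi}\sup_{\bV\in\cV_M}\Big|\frac1n\sum_{j=1}^n\xi_j\,\phi_j\big(\tfwm(\sequenceX{(j)};\bV)\big)\Big|\le\frac2n\,\E_{\xi}\sup_{\bV\in\cV_M}\Big|\sum_{j=1}^n\xi_j\,\tfwm(\sequenceX{(j)};\bV)\Big|.
\]
The subtracted offset $\frac1n\sum_j\xi_j\ell(0,y_j)$ is independent of $\bV$ and contributes at most $n^{-1/2}$ in expectation (since $\ell(0,y)\le1$), which is lower order. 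Second, writing $\tfwm(\sequenceX{(j)};\bV)=\sum_{m=1}^M\<\bv_m,\Phi_m^{(j)}\>$ with head-$m$ feature $\Phi_m^{(j)}:=\frac1N\sum_{i=1}^N\bar\sigma\big(\bW_m,\tbx_0^{(j)}(\tbx_i^{(j)})^\top\big)\,\tbx_i^{(j)}\in\R^{d+1}$, Hölder's inequality with $\sum_m\ltwo{\bv_m}\le\Bvone$ gives
\[
\sup_{\bV\in\cV_M}\Big|\frac1n\sum_{j=1}^n\xi_j\,\tfwm(\sequenceX{(j)};\bV)\Big|\le\Bvone\max_{m\in[M]}\ltwo{g_m},\qquad g_m:=\frac1n\sum_{j=1}^n\xi_j\,\Phi_m^{(j)},
\]
which is exactly the role played by the $\Bvone$-constraint in $\cV_M$. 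It remains to show $\E_{\bW,\xi}\max_m\ltwo{g_m}\lesssim\sqrt{\log(dM)\log(nNM)/n}$.

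Third---the crux---I control $\max_m\ltwo{g_m}$, the point being to obtain only \emph{logarithmic} dependence on $d$ and $M$. The naive bound $|\bar\sigma(\bW_m,\cdot)|\le L_3+L_1\lfro{\bW_m}$ is too lossy, as it injects a spurious factor $d$; instead, for a fixed rank-one $\tbx_0^{(j)}(\tbx_i^{(j)})^\top$ the map $\bW\mapsto\bar\sigma(\bW,\cdot)$ is $L_1$-Lipschitz with mean $\le L_2$, so Gaussian concentration (Theorem \ref{thm:gaussian concentration}) shows each activation is $O(1)$-sub-Gaussian in $\bW_m$. Choosing the truncation level so a union bound over all $MnN$ activations fails with negligible probability yields $\max_{m,j}\ltwo{\Phi_m^{(j)}}\le B:=C\sqrt{\log(nNM)}$ on a high-probability event (using $\ltwo{\tbx_i}=\sqrt2$). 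On this event each $g_m$ is, conditionally on $\bW$, an average of $n$ independent mean-zero vectors in $\R^{d+1}$ of norm $\le B$; applying the Matrix Bernstein tail bound (Theorem \ref{thm:matrix bernstein}) to the $(d+1)\times1$ summands $\frac1n\xi_j\Phi_m^{(j)}$ (variance proxy $\lesssim B^2/n$, norm bound $\lesssim B/n$) and union-bounding over the $M$ heads gives
\[
\max_{m\in[M]}\ltwo{g_m}\lesssim B\sqrt{\frac{\log(dM)}{n}}+B\,\frac{\log(dM)}{n}
\]
with high probability. Under the standing assumption $n>\log(dM)$ the first term dominates, and substituting $B\lesssim\sqrt{\log(nNM)}$ produces $\sqrt{\log(dM)\log(nNM)/n}$. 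Integrating this tail into an expectation (the truncation-failure and offset terms being lower order) and multiplying by $\Bvone$ completes \eqref{eqn:emp process mean with E xyw form 2}.

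I expect this third step to be the main obstacle. The difficulty is that the activations, and hence $\tfwm$, are unbounded, which rules out a direct bounded-differences argument on $\ell(\tfwm,y)$; extracting only logarithmic dependence on $(d,M,N)$ forces the simultaneous use of (i) the dimension-free control of the expected $\ell_2$-norm of a bounded vector Rademacher sum, (ii) a uniform sub-Gaussian truncation over all $MnN$ activations (the source of the $\log(nNM)$ factor), and (iii) a head-wise maximal inequality valid precisely in the $n>\log(dM)$ regime (the source of the $\log(dM)$ factor).
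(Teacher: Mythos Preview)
Your proposal is correct and follows essentially the same approach as the paper: contraction to peel off the loss (paying $n^{-1/2}$ for the centering), H\"older with the $\Bvone$ constraint to reduce to $\max_m\ltwo{g_m}$, matrix Bernstein plus a union bound over heads for the $\log(dM)$ factor, and sub-Gaussianity of the activations for the $\log(nNM)$ factor. The only cosmetic difference is ordering: you truncate the activations first at level $B\asymp\sqrt{\log(nNM)}$ and then apply Bernstein on the good event, whereas the paper applies Bernstein first with data-dependent variance and norm proxies $A,K\lesssim\max_{i,j,m}|\bar\sigma(\bW_m,\cdot)|$ and then takes $\E_{\bW}$ of the resulting bound directly (using $\E_\bW\max_{i,j,m}|\bar\sigma|\lesssim\sqrt{\log(nNM)}$), which sidesteps the need to separately control the truncation-failure event.
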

\begin{lemma}
\label{lem::emp process with E w}
Under the setting of Proposition~\ref{prop::empirical process}. With probability at least $1-\delta/3$ over $\bX$, $\by$, and $\bW$,
\begin{align}
    &~ \sup_{\bV \in \fnclass} \Big|\dn \sumn \ell\paren{\tfwm\paren{\sequenceX{(j)}; \bV},y_j} - \E_{\bW} \Big[\dn \sumn \ell\paren{\tfwm\paren{\sequenceX{(j)}; \bV},y_j}\Big]\Big| \nonumber \\
    \lesssim &~ \sqrt{\frac{\Bvtwo\log(6/\delta)}{M}} + \Bvone \sqrt{\frac{\log (dM) \log \paren{nNM} }{n}}. \label{equ::emp process with E w}
\end{align}
\end{lemma}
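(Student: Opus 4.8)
The plan is to fix the data $(\bX,\by)$ and write the quantity to be bounded as $h(\bW)\defeq \sup_{\bV\in\fnclass}\big|\dn\sumn \ell(\tfwm(\sequenceX{(j)};\bV),y_j)-\E_\bW[\dn\sumn \ell(\tfwm(\sequenceX{(j)};\bV),y_j)]\big|$, a function of the Gaussian weights $\bW=(\bW_1,\dots,\bW_M)$. The key idea is to insert two intermediate quantities: the population risk $L(\bV,\bW)\defeq \E_{\sequenceX{},y}\ell(\tfwm(\sequenceX{};\bV),y)$ (expectation over a fresh test sample, $\bW$ frozen) and its weight-average $\bar L(\bV)\defeq \E_\bW L(\bV,\bW)$. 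Writing $\hat L(\bV,\bW)=\dn\sumn\ell(\tfwm(\sequenceX{(j)};\bV),y_j)$, the summand of $h$ telescopes as $\hat L(\bV,\bW)-\E_\bW\hat L(\bV,\bW)=\underbrace{[\hat L(\bV,\bW)-L(\bV,\bW)]}_{A(\bV,\bW)}+\underbrace{[L(\bV,\bW)-\bar L(\bV)]}_{B(\bV,\bW)}+\underbrace{[\bar L(\bV)-\E_\bW\hat L(\bV,\bW)]}_{C(\bV,\bW)}$, so that $h\le\sup_\bV|A|+\sup_\bV|B|+\sup_\bV|C|$. This splits the two sources of randomness cleanly: $A$ and $C$ are empirical processes over the i.i.d.\ data at a frozen (or averaged) $\bW$, whereas $B$ is a fluctuation purely in $\bW$. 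This is exactly where the two parts of the constraint set $\fnclass$ will enter separately.

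For the term $B$, I would use Gaussian concentration over $\bW$. For a fixed test point, $\bW\mapsto \tfwm$ is Lipschitz in Frobenius norm: using that $\ell$ is $1$-Lipschitz, $\bar\sigma$ is $L_1$-Lipschitz in $\bW$, $|\langle\bv_m,\tbx_i\rangle|\le\sqrt2\ltwo{\bv_m}$, and Cauchy--Schwarz across heads, one gets $|\tfwm(\sequenceX{};\bV)-f_M^{\bW'}(\sequenceX{};\bV)|\le\sqrt2\,L_1\big(\sum_m\lfro{\bW_m-\bW_m'}^2\big)^{1/2}\big(\sum_m\ltwo{\bv_m}^2\big)^{1/2}\le\sqrt2\,L_1\sqrt{\Bvtwo/M}\,\lfro{\bW-\bW'}$, where the last step invokes the $\ell_2$-constraint $\sum_m\ltwo{\bv_m}^2\le\Bvtwo/M$. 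Taking expectation over the fresh sample preserves this, so $\bW\mapsto\sup_\bV|B(\bV,\bW)|$ is $O(\sqrt{\Bvtwo/M})$-Lipschitz; since the entries of $\bW$ are $\normal(0,1/4)$, Theorem~\ref{thm:gaussian concentration} gives $\sup_\bV|B|\lesssim\sqrt{\Bvtwo\log(6/\delta)/M}$ with probability $\ge1-\delta/3$, which is the first term of the claimed bound.

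For the terms $A$ and $C$, I would control each supremum over the i.i.d.\ draw of the data. Their means are handled by symmetrization: introducing Rademacher variables $\{\xi_j\}$ over the $n$ data points and applying the Ledoux--Talagrand contraction (Theorem~\ref{thm:rademacher contraction}) to peel off the $1$-Lipschitz loss reduces $\E\sup_\bV|A|$ (and likewise $\E\sup_\bV|C|$, whose annealed loss $\bar\ell_j=\E_\bW\ell_j$ is bounded) to the empirical Rademacher complexity $\E_{\bW,\boldsymbol\xi}\sup_\bV|\dn\sumn\xi_j\ell(\tfwm(\sequenceX{(j)};\bV),y_j)|$, which Lemma~\ref{lem::emp process mean with E xyw} bounds by $\Bvone\sqrt{\log(dM)\log(nNM)/n}$ through the $\ell_1$-constraint $\sum_m\ltwo{\bv_m}\le\Bvone$. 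The deviations of $\sup_\bV|A|$ and $\sup_\bV|C|$ from their means are then obtained from a bounded-difference argument (Theorem~\ref{thm:bounded difference}) over the data points, each contributing the second term of the bound.

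The main obstacle is the unboundedness of $\tfwm$, hence of the loss $\ell_j$: because the ReLU--Gaussian features have Gaussian tails, $\|\tfwm\|_\infty=\infty$, so the bounded-difference step for $A$ (changing one data point can change $\hat L$ by an a priori unbounded amount) does not apply directly, and neither would a naive McDiarmid argument over the weights. The remedy is a truncation of the feature at a radius of order $\log(nNM)$, chosen to balance the truncated tail probability against the added covering complexity; this is the origin of the $\log(nNM)$ factor, and it is the reason the decomposition must keep the $\ell_1$- and $\ell_2$-constraints of $\fnclass$ separate, the former controlling the Rademacher terms $A,C$ and the latter the Gaussian-concentration term $B$.
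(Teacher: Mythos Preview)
The paper's route is much shorter and avoids the detour through the population risk altogether. Conditionally on $(\bX,\by)$, it writes $g(\bW)=\sup_{\bV\in\fnclass}\big|\hat L(\bV,\bW)-\E_\bW\hat L(\bV,\bW)\big|$ and observes---via exactly the Cauchy--Schwarz computation you carry out for your term $B$---that $g$ is $O(\sqrt{\Bvtwo/M})$-Lipschitz in $\bW$. Gaussian concentration then gives the $\sqrt{\Bvtwo\log(6/\delta)/M}$ deviation directly on $g$. For the mean, the paper bounds $\E_\bW g$ by symmetrization and the second display of Lemma~\ref{lem::emp process mean with E xyw}, which yields the $\Bvone\sqrt{\log(dM)\log(nNM)/n}$ term for \emph{every} fixed dataset; since the right-hand side does not depend on $(\bX,\by)$, the bound holds unconditionally. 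No population risk, no data-level concentration, and no truncation appear.

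Your three-way split $A+B+C$ through $L$ and $\bar L$ is more elaborate than necessary and has a genuine gap. For $B$, Gaussian concentration controls only $\big|\sup_\bV|B|-\E_\bW\sup_\bV|B|\big|$; you never bound the mean $\E_\bW\sup_\bV|B|$, and that is the same difficulty as bounding $\E_\bW g$ in the first place. For $A=\hat L-L$ and $C=-\E_\bW A$, you reinject the data randomness, which is exactly what forces you into the unboundedness/truncation issue you flag at the end; the paper never faces this because it stays at a frozen dataset throughout. Your insight that the $\ell_1$ and $\ell_2$ constraints of $\fnclass$ play distinct roles (Rademacher complexity versus Gaussian Lipschitz constant) is correct and matches the paper, but the two pieces they control are simply the mean and the deviation of $g$ itself, not three separate terms.
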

\begin{lemma}
\label{lem::emp process E w with E xyw}
Under the setting of Proposition~\ref{prop::empirical process}. With probability at least $1-\delta/3$ over $\bX$ and $\by$,
    \begin{align}
        &~ \sup_{\bV \in \fnclass} \Big| \E_{\bW} \Big[ \dn \sumn \ell\paren{\tfwm\paren{\sequenceX{(j)}; \bV},y_j}\Big] - \E_{\bX,\by,\bW}\Big[\dn \sumn \ell\paren{\tfwm\paren{\sequenceX{(j)}; \bV},y_j}\Big]\Big| \nonumber \\ 
        \lesssim &~ \Bvone \sqrt{\frac{\log (dM) \log \paren{nNM} }{n}} + \Bvone \sqrt{ \frac{\log\paren{6/\delta}}{n}}. 
    \end{align}
\end{lemma}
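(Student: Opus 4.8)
The plan is to integrate out the random weights $\sets{\bW_m}$ first, which reduces the claim to a textbook i.i.d.\ uniform-convergence bound over the $n$ samples. Define the $\bW$-averaged loss
\[
h_\bV(\sequenceX{}, y) \defeq \E_{\bW}\brac{\ell\paren{\tfwm(\sequenceX{}; \bV), y}},
\]
which is a deterministic function of a single sample $(\sequenceX{}, y)$, and set
\[
\Phi(\bX, \by) \defeq \sup_{\bV \in \fnclass}\bigabs{\dn\sumn h_\bV(\sequenceX{(j)}, y_j) - \E_{\sequenceX{}, y}\brac{h_\bV(\sequenceX{}, y)}}.
\]
The point of averaging over $\bW$ is that the summands $h_\bV(\sequenceX{(j)}, y_j)$ are now i.i.d.\ across $j$, so the coupling through the shared weights that complicated Lemma \ref{lem::emp process with E w} disappears. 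The key preliminary estimate is that, unlike $\ell(\tfwm; y)$ itself, the averaged loss is uniformly bounded over $\fnclass$: since $\ell$ is $1$-Lipschitz with $\ell(0, y)\le 1$, we have $|h_\bV(\sequenceX{}, y)| \le 1 + \E_\bW\brac{|\tfwm(\sequenceX{}; \bV)|}$, and combining $\ltwos{\tbx_i} = \sqrt2$, the constraint $\sum_m \ltwo{\bv_m}\le\Bvone$ from \eqref{equ::ERM constraint}, and the sub-Gaussian tail of $\bar\sigma$ recorded in the proof of Lemma \ref{lem::rf and finite neuron} (which gives $\E_\bW|\bar\sigma|\lesssim L_1 + L_2$), one obtains $\E_\bW[|\tfwm|]\lesssim\Bvone$ and hence $|h_\bV|\lesssim 1 + \Bvone$.

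Given this bound, I would apply the bounded-difference inequality (Theorem \ref{thm:bounded difference}) to $\Phi$ as a function of the $n$ i.i.d.\ samples. Replacing a single pair $(\sequenceX{(j)}, y_j)$ perturbs the empirical average $\dn\sumn h_\bV^{(j)}$ by at most $\tfrac2n\sup_\bV|h_\bV|\lesssim\Bvone/n$ (absorbing the additive constant), so the difference bounds satisfy $\sum_j L_j^2\lesssim\Bvone^2/n$, and Theorem \ref{thm:bounded difference} yields $\Phi\le\E_{\bX,\by}[\Phi] + \cO\paren{\Bvone\sqrt{\log(6/\delta)/n}}$ with probability at least $1-\delta/3$. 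This is exactly the second term of the claim.

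It remains to bound $\E_{\bX,\by}[\Phi]$, and here the plan is symmetrization followed by Jensen to hand the estimate to the already-established Lemma \ref{lem::emp process mean with E xyw}. Ghost-sample symmetrization gives $\E[\Phi]\le 2\,\E_{\bX,\by,\boldsymbol\xi}[\sup_\bV|\dn\sumn\xi_j h_\bV(\sequenceX{(j)}, y_j)|]$ with Rademacher signs $\sets{\xi_j}$. Writing $h_\bV = \E_\bW[\ell(\tfwm;\cdot)]$ and using $|\sum_j\xi_j\E_\bW[\cdot]|\le\E_\bW|\sum_j\xi_j(\cdot)|$ together with $\sup_\bV\E_\bW\le\E_\bW\sup_\bV$, the expectation over $\bW$ moves outside, so that
\[
\E_{\bX,\by}[\Phi] \le 2\,\E_{\bX,\by,\bW,\boldsymbol\xi}\Big[\sup_{\bV\in\fnclass}\bigabs{\dn\sumn\xi_j\,\ell\paren{\tfwm(\sequenceX{(j)}; \bV), y_j}}\Big] \lesssim \Bvone\sqrt{\frac{\log(dM)\log(nNM)}{n}},
\]
the final inequality being precisely Eq.~\eqref{eqn:emp process mean with E xyw form 1} of Lemma \ref{lem::emp process mean with E xyw}. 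Combining the two displays gives both terms of the lemma. I expect the only genuinely delicate point to be the uniform boundedness of $h_\bV$: it is exactly the averaging over $\bW$ that tames the unbounded infinity norm flagged after Proposition \ref{prop::empirical process}, and one must invoke the moment control on $\bar\sigma$ rather than any pointwise bound on $\tfwm$.
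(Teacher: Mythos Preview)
Your proposal is correct and follows essentially the same route as the paper: define the $\bW$-averaged quantity, establish a per-sample bounded-difference estimate of order $\Bvone/n$, apply the bounded-difference inequality for the deviation term, and bound the expectation by symmetrization and Lemma~\ref{lem::emp process mean with E xyw}. Your treatment is in fact slightly more explicit than the paper's in two places---the uniform bound $|h_\bV|\lesssim 1+\Bvone$ (which the paper derives implicitly inside the bounded-difference computation) and the Jensen step $\sup_\bV \E_\bW[\cdot]\le\E_\bW\sup_\bV[\cdot]$ needed to invoke \eqref{eqn:emp process mean with E xyw form 1}---but the underlying argument is the same.
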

\begin{lemma}
\label{lem::emp process E xy with E xyw}
Under the setting of Proposition~\ref{prop::empirical process}. With probability at least $1-\delta/3$ over $\bW$,
\begin{align}
        &~ \sup_{\bV \in \fnclass} \Big| \E_{\bX,\by} \Big[ \dn \sumn \ell\paren{\tfwm\paren{\sequenceX{(j)}; \bV},y_j}\Big] - \E_{\bX,\by,\bW}\Big[\dn \sumn \ell\paren{\tfwm\paren{\sequenceX{(j)}; \bV},y_j}\Big]\Big|  \nonumber \\ 
        \lesssim &~ \sqrt{\frac{\Bvtwo\log(6/\delta)}{M}} + \Bvone \sqrt{\frac{\log (dM) \log \paren{nNM} }{n}}. 
    \end{align}
\end{lemma}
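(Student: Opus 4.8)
The plan is to read the quantity as the fluctuation, over the random weights $\bW=\{\bW_m\}_{m\in[M]}$, of the population risk
$L(\bV;\bW):=\E_{\bX,\by}\big[\tfrac1n\sum_{j}\ell(\tfwm(\sequenceX{(j)};\bV),y_j)\big]=\E_{(\sequenceX{},y)\sim\sfP}\,\ell(\tfwm(\sequenceX{};\bV),y)$
around its weight-average $\bar L(\bV):=\E_{\bW}L(\bV;\bW)$, uniformly over $\bV\in\cV_M$ (indeed $\E_{\bX,\by}[\cdots]=L(\bV;\bW)$ and $\E_{\bX,\by,\bW}[\cdots]=\bar L(\bV)$, since the samples are i.i.d.). I treat $\sup_{\bV}|L(\bV;\bW)-\bar L(\bV)|$ as the supremum of an empirical process in the Gaussian variables $\bW$ and bound it by (i) controlling the expected supremum $\E_{\bW}\sup_{\bV}|L(\bV;\bW)-\bar L(\bV)|$, and (ii) adding a high-probability deviation of the supremum from its mean.

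For the deviation (ii), I first show that $\bW\mapsto L(\bV;\bW)$ is Lipschitz in the Frobenius norm, uniformly over $\cV_M$. Since $\ell$ is $1$-Lipschitz and $\bar\sigma(\bW,\cdot)$ is $L_1$-Lipschitz in $\bW$ (property 1 of $\bar\sigma$), for any $\bW,\bW'$ one has $|L(\bV;\bW)-L(\bV;\bW')|\le \E_{\sequenceX{}}|\tfwm(\sequenceX{};\bV,\bW)-\tfwm(\sequenceX{};\bV,\bW')|\le \sqrt2\,L_1\big(\sum_m\ltwo{\bv_m}^2\big)^{1/2}\,\|\bW-\bW'\|_{\Fr}$, using $\ltwo{\tbx_i}=\sqrt2$ and Cauchy--Schwarz across heads. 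The $\ell_2$-ball constraint $\sum_m\ltwo{\bv_m}^2\le \Bvtwo/M$ built into $\cV_M$ then yields a uniform Lipschitz constant of order $\sqrt{\Bvtwo/M}$, so $\sup_{\bV}|L(\bV;\bW)-\bar L(\bV)|$ is itself $O(\sqrt{\Bvtwo/M})$-Lipschitz in $\bW$. Rescaling the variance-$1/4$ entries of $\bW$ to unit variance and applying the Gaussian concentration inequality (Theorem~\ref{thm:gaussian concentration}) gives, with probability at least $1-\delta/3$, a deviation from the mean of order $\sqrt{\Bvtwo\log(6/\delta)/M}$. This is precisely where the $\ell_2$-constraint in the definition of $\cV_M$ enters.

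For the expected supremum (i), I introduce an independent ghost sample $\tilde S$ of size $n$ drawn from $\sfP$, write $L(\bV;\bW)=\E_{\tilde S}\hat L_{\tilde S}(\bV;\bW)$ with $\hat L_{\tilde S}$ the ghost empirical risk, and use Jensen's inequality to reduce $\E_{\bW}\sup_{\bV}|L(\bV;\bW)-\bar L(\bV)|$ to the expectation of a sample-level empirical process. This is then controlled by the Rademacher complexity estimate of Lemma~\ref{lem::emp process mean with E xyw} in its fully-averaged form \eqref{eqn:emp process mean with E xyw form 1}: after standard symmetrization and the Ledoux--Talagrand contraction (Theorem~\ref{thm:rademacher contraction}) that strips the $1$-Lipschitz loss $\ell$, the linearity of $\tfwm$ in $\bV$ together with the $\ell_1$-type constraint $\sum_m\ltwo{\bv_m}\le \Bvone$ keeps the supremum over $\bV\in\cV_M$ under control with only logarithmic factors, giving the rate $\Bvone\sqrt{\log(dM)\log(nNM)/n}$. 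Combining (i) and (ii), and absorbing the lower-order $\sqrt{\Bvtwo/M}$ piece of the mean into the $\delta$-dependent deviation term, yields the claimed bound.

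The main obstacle is step (i): because $\ell$ couples the $M$ heads, $L(\bV;\bW)$ is \emph{not} additive in the per-head weights $\bW_m$, so one cannot symmetrize head-by-head to obtain a decaying-in-$M$ Rademacher complexity, nor can one naively union-bound over the infinite class $\cV_M$ (a crude covering of $\cV_M$ would reinstate the ambient dimension $Md$ and lose the decay). The ghost-sample device is what converts the weight-fluctuation of this non-additive, population-level object into a sample-level Rademacher complexity to which Lemma~\ref{lem::emp process mean with E xyw} applies, and controlling the supremum over $\cV_M$ rests on exploiting the linearity in $\bV$ and both norm constraints defining $\cV_M$.
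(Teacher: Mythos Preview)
Your proposal is correct and follows essentially the same route as the paper. The paper likewise defines $\varphi(\bW)=\sup_{\bV}|\E_{\bX,\by}[\cdots]-\E_{\bX,\by,\bW}[\cdots]|$, shows it is $O(\sqrt{\Bvtwo/M})$-Lipschitz in $\bW$ via the $\ell_2$-constraint in $\cV_M$ and applies Gaussian concentration for the deviation, and bounds $\E_\bW[\varphi]$ by pulling the data expectation outside the supremum (your ghost-sample/Jensen step), then symmetrizing in the samples and invoking Lemma~\ref{lem::emp process mean with E xyw}, form~\eqref{eqn:emp process mean with E xyw form 1}. Your write-up is actually more explicit about the Jensen step than the paper's; the only superfluous remark is the ``lower-order $\sqrt{\Bvtwo/M}$ piece of the mean''---after Jensen and sample-symmetrization the mean is bounded purely by the $\Bvone$-term from Lemma~\ref{lem::emp process mean with E xyw}, so no such absorption is needed.
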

The proofs of Lemma \ref{lem::emp process mean with E xyw}, \ref{lem::emp process with E w}, \ref{lem::emp process E w with E xyw}, and \ref{lem::emp process E xy with E xyw} are contained in section \ref{prf:proof of lemmas for thm 2}. Now  assuming they hold, we proceed to prove Proposition~\ref{prop::empirical process}.

\begin{proof}[Proof of Proposition \ref{prop::empirical process}]~

Split the left-hand side of inequality~\eqref{equ::empirical process}, we have
\begin{align*}
    &~  \sup_{\bV \in \mathcal{V}_M} \Big| \frac{1}{n} \sum_{j=1}^n \ell\Big(\tfwm\paren{\sequenceX{(j)}; \bV},y_j\Big) - \E_{\bX, \by}\Big(\ell\paren{\tfwm\paren{\sequenceX{(j)}; \bV},y}\Big)\Big| \\
\le &~ \sup_{\bV \in \fnclass} \Big|\dn \sumn \ell\paren{\tfwm,y_j} - \E_{\bW} \Big[\dn \sumn \ell\paren{\tfwm,y_j}\Big] \Big| \\
 &~ + \sup_{\bV \in \fnclass} \Big| \E_{\bW} \Big[\dn \sumn \ell\paren{\tfwm,y_j}\Big] - \E_{\bX, \by,\bW}\Big( \ell\paren{\tfwm,y} \Big) \Big|\\
 &~ + \sup_{\bV \in \fnclass} \bigabs{ \E_{\bX, \by} \Big( \ell\paren{\tfwm,y}\Big) - \E_{\bX, \by,\bW} \Big(\ell\paren{\tfwm,y}\Big) }\\
\lesssim &~ \Bvone \sqrt{\frac{\log (dM) \log \paren{nNM} }{n}} + \sqrt{\log\paren{\frac{6}{\delta}}}\paren{\frac{\Bvone}{\sqrt{n}} + \sqrt{\frac{\Bvtwo}{M}}}
\end{align*}
with probability at least $1-\delta$. Here the last inequality uses Lemma \ref{lem::emp process with E w}, \ref{lem::emp process E w with E xyw}, and \ref{lem::emp process E xy with E xyw}. This proves Proposition~\ref{prop::empirical process}.
\end{proof}

\subsubsection{Proof of auxiliary lemmas}
\label{prf:proof of lemmas for thm 2}
\begin{proof}[Proof of Lemma~\ref{lem::emp process mean with E xyw}]~

    First using Rademacher contraction inequality, since $\ell(0,y)\le 1$, we can center it and only pay an extra term $1/\sqrt{n}$ in the Rademacher complexity. Then by the Rademacher contraction property (Theorem \ref{thm:rademacher contraction}), the problem boils down to bounding the Rademacher complexity of $\tfwm$, which is
    \[ \E_{\bX,\by,\bW,\boldsymbol{\xi}} \bigbrac{ \sup_{\bV \in \fnclass} \bigabs{\frac{1}{n} \sum_{j=1}^n \xi_j\tfwm\paren{\sequenceX{(j)}; \bV}}}.\]
    Fix $\bX$, $\by$, and $\bW$, we have 
    \begin{align}
         &~ \E_{\boldsymbol{\xi}} \bigbrac{ \sup_{\bV \in \fnclass} \bigabs{\frac{1}{n} \sum_{j=1}^n \xi_j\tfwm\paren{\sequenceX{(j)}; \bV}}} \nonumber \\
        = &~ \E_{\boldsymbol{\xi}} \bigbrac{ \sup_{\bV \in \fnclass } \bigabs{   \sum_{m=1}^M \Big\langle \bv_m,\dn \sumn \xi_j \bigbrac{\frac{1}{N} \sum_{i=1}^N \bar\sigma(\bW_m,\tbx_0^{(j)}\tbx_i^{(j)\top}) \tbx_i^{(j)} } \Big\rangle  } } \nonumber \\
        \le &~ \Bvone \E_{\boldsymbol{\xi}}\bigbrac{\max_m \Big\| \dn \sumn \xi_j \bigbrac{\frac{1}{N} \sum_{i=1}^N \bar\sigma(\bW_m,\tbx_0^{(j)}\tbx_i^{(j)\top}) \tbx_i^{(j)} } \Big\|_2 }. \label{equ::maximum of sub exp} 
    \end{align}
    By matrix Bernstein inequality (Theorem \ref{thm:matrix bernstein}), for any fixed $m$, 
    \begin{align*}
        \P\bigbrac{\Big\| \dn \sumn \xi_j \bigbrac{\frac{1}{N} \sum_{i=1}^N \bar\sigma(\bW_m,\tbx_0^{(j)}\tbx_i^{(j)\top}) \tbx_i^{(j)}}\Big\|_2 \ge \epsilon} \le 2d\exp\paren{ -\frac{n \epsilon^2/2}{A^2 + K\epsilon/3} },
    \end{align*}    
    where
    \begin{align*}
       A = &~ \max_m \sqrt{\frac{1}{nN^2} \sum_{i,j,k} \bigbrac{\bar\sigma(\bW_m,\tbx_0^{(j)}\tbx_i^{(j)\top}) \bar\sigma(\bW_m,\tbx_0^{(j)}\tbx_k^{(j)}) \< \tbx_i^{(j)},\tbx_k^{(j)}\>}} \\
       \lesssim &~ \max_{m,i,j} \bar\sigma(\bW_m,\tbx_0^{(j)}\tbx_i^{(j)\top}), \text{ and}\\
        K = &~ \max_{i,m} \Big\| \frac{1}{N} \sum_{i=1}^N \bar\sigma(\bW_m,\tbx_0^{(j)}\tbx_i^{(j)\top}) \tbx_i^{(j)} \Big\|_2.
    \end{align*}
    Using the union bound, we have
    \begin{align*}
    \P\bigbrac{\max_{m\in[M]}\ltwob{ \dn \sumn \xi_j \bigbrac{\frac{1}{N} \sum_{i=1}^N \bar\sigma(\bW_m,\tbx_0^{(j)}\tbx_i^{(j)\top}) \tbx_i^{(j)}}} \ge \epsilon} \le 2dM\exp\paren{-\frac{n\epsilon^2}{A^2 + K\epsilon/3}}.
    \end{align*}
    Therefore, we can bound its expectation with
    \begin{align}
    \E_{\boldsymbol{\xi}}\bigbrac{\max_m \ltwob{ \dn \sumn \xi_j \bigbrac{\frac{1}{N} \sum_{i=1}^N \bar\sigma(\bW_m,\tbx_0^{(j)}\tbx_i^{(j)\top}) \tbx_i^{(j)}}}}
    \lesssim  \bigbrac{ \sqrt{\frac{\log (dM)}{n}} A + \frac{\log (dM)}{n} K }.\label{eqn:bound_complex_object_by_A+K}
    \end{align}
    Now take expectation over $\bX$, $\by$, and $\bW$. Since $n \ge \log(d M)$, we have
    \begin{align*}
     &~   \E_{\bX,\by,\bW}\bigbrac{ \sqrt{\frac{\log (dM)}{n}} A + \frac{\log (dM)}{n} K }\\
    \lesssim &~ \paren{\sqrt{\frac{\log (dM)}{n}} + \frac{\log (dM)}{n}} \E_{\bX,\by,\bW} \bigbrac{ \max_{i,j,m} \bigabs{\bar\sigma\paren{\<\bW_m,\tbx_0^{(j)}\tbx_i^{(j)\top}\>}}} \\
    \lesssim &~  \sqrt{\frac{\log (dM)\log \paren{nNM} }{n}}.
    \end{align*}   
    Combine this with Eq. \eqref{equ::maximum of sub exp} and \eqref{eqn:bound_complex_object_by_A+K}, we prove \eqref{eqn:emp process mean with E xyw form 1}.
    
    Fixing any $\bX$ and $\by$, only taking expectation over $\bW$, we get 
    \begin{align*}
   \E_{\bW}\bigbrac{ \sqrt{\frac{\log (dM)}{n}} A + \frac{\log (dM)}{n} K } \lesssim \sqrt{\frac{\log (dM)\log \paren{nNM} }{n}}.
    \end{align*}  
    Combine this with Eq. \eqref{equ::maximum of sub exp} and \eqref{eqn:bound_complex_object_by_A+K}, we prove \eqref{eqn:emp process mean with E xyw form 2}. This finishes the proof of Lemma \ref{lem::emp process mean with E xyw}.
\end{proof}
\begin{proof}[Proof for Lemma~\ref{lem::emp process with E w}]~

    Denote $\bX = \{ \sequenceX{(j)} \}_{j \in [n]}$ and $\by = \set{y_j}_{j \in [n]}$ and denote
    \[g\paren{\bW_{1:M};\bX,\by} = \sup_{\bV \in \fnclass} \bigabs{ \dn \sumn \ell\paren{\tfwm\paren{\sequenceX{(j)}; \bV},y_j} - \E_{\bW} \bigbrac{ \dn \sumn \ell\paren{\tfwm\paren{\sequenceX{(j)}; \bV},y_j}}}.\]
    Given $\bW_{1:M}=\set{\bW_m}_{m=1}^M$ and $\bW_{1:M}^\prime=\set{\bW_m^\prime}_{m=1}^M$, we define $\lfros{\bW_{1:M} - \bW_{1:M}^\prime} = ({\sum_{m\in[M]} \lfros{\bW_m - \bW^\prime_m}^2})^{1/2}$. Then we have 
    \begin{align*}
        &~ g\paren{\bW_{1:M}; \bX,\by} - g\paren{\bW_{1:M}^\prime;\bX,\by} \\
        \le &~ \sup_{\bV \in \fnclass} \bigabs{\dn \sumn \bigbrac{\ell\paren{\tfwm,y_j} - \ell\paren{f^{\bW^\prime}_M,y_j}}} \\
        \le &~ \sup_{\bV \in \fnclass} \bigabs{\frac{1}{Nn} \sum_{i,j,m} \bigbrac{\bar\sigma(\bW,\tbx_0^{(j)}\tbx_i^{(j)\top}) - \bar\sigma(\bW^\prime,\tbx_0^{(j)}\tbx_i^{(j)\top}) } \<\bv_m,\tbx_i\>} \\
        \lesssim &~ \sup_{\bV \in \fnclass} \bigabs{   \sum_m \lfros{\bW_m - \bW_m^\prime} \ltwo{\bv_m} } \\
        \lesssim &~ \lfros{ \bW_{1:M} - \bW'_{1:M}} \sqrt{\frac{\Bvtwo}{M}}.
    \end{align*}
    Since $\bW_{m}$ has independent standard Gaussian entries, by Gaussian concentration inequality (Theorem \ref{thm:gaussian concentration}), we have that
    \begin{equation}
        \P\bigbrac{\bigabs{g\paren{\bW_{1:M};\bX,\by} - \E_{\bW} \Big[g\paren{\bW_{1:M};\bX,\by} \mid \bX,\by  \Big]} \ge \epsilon \mid \bX,\by} \le 2\exp\paren{-\frac{M \epsilon^2}{4 \Bvtwo}}. \label{equ::emp process with E w concentration}
    \end{equation}
    For the conditional expectation, by symmetrization, we have
    \begin{align}
     &~\E_{\bW} \bigbrac{g\paren{\bW_{1:M};\bX,\by}\mid \bX,\by} \nonumber\\
     = &~ \E_{\bW} \bigbrac{ \sup_{\bV \in \fnclass} \bigabs{ \sum_{j=1}^n \ell\paren{\tfwm\paren{\sequenceX{(j)}; \bV},y_j} - \E_{\bW}\Big[\ell\paren{\tfwm\paren{\sequenceX{(j)}; \bV},y_j}\Big] } \mid \bX,\by} \nonumber\\
     \le &~ 2\E_{\bW,\boldsymbol{\xi}} \bigbrac{ \sup_{\bV \in \fnclass} \bigabs{ \sum_{j=1}^n \xi_j \ell\paren{\tfwm\paren{\sequenceX{(j)}; \bV},y_j}} \mid \bX,\by} \nonumber\\
    \lesssim &~ \Bvone \sqrt{\frac{\log (dM) \log \paren{nNM}}{n}} \label{equ::emp process with E w mean part}
    \end{align} 
    for any $\bX$ and $\by$, where the last inequality uses Lemma \ref{lem::emp process mean with E xyw}. Combining (\ref{equ::emp process with E w concentration}) and (\ref{equ::emp process with E w mean part}) and taking $\epsilon = 2 \sqrt{{\Bvtwo}\log\paren{6/\delta}/M}$, we have 
    \begin{align}
    &~\sup_{\bV \in \fnclass} \bigabs{\dn \sumn \ell\paren{\tfwm\paren{\sequenceX{(j)}; \bV},y_j} - \E_{\bW} \bigbrac{\dn \sumn \ell\paren{\tfwm\paren{\sequenceX{(j)}; \bV},y_j}}} \nonumber \\
    \lesssim &~\sqrt{\frac{\Bvtwo}{M}}\sqrt{\log\frac{6}{\delta}} + \Bvone \sqrt{\frac{\log (dM) \log \paren{nNM}}{n}}, 
\end{align}
    for any $\bX$ and $\by$. Since the right-hand side is irrelevant to $\bX$ and $\by$, we get (\ref{equ::emp process with E w}). This proves Lemma \ref{lem::emp process with E w}.
\end{proof}
\begin{proof}[Proof of Lemma~\ref{lem::emp process E w with E xyw}]~

    Let 
    \begin{align}
    h\paren{\sequenceX{},\by} =  \sup_{\bV \in \fnclass} \bigabs{ \E_{\bW} \bigbrac{ \dn \sumn \ell\paren{\tfwm,y_j}} - \E_{\bX,\by,\bW}\bigbrac{\dn \sumn \ell\paren{\tfwm,y_j}} }. \label{equ::fix x center diff} 
    \end{align}
    For each $i \in [n]$, let $\set{\bX',\by^\prime}$ differs with $\set{\bX,\by}$ only on $i$-th data point. We have
    \begin{align*}
    &~ h\paren{\bX,\by} - h\paren{\bX^\prime,\by^\prime} \\
        \le &~ \sup_{\bV \in \fnclass} \bigabs{\dn \sumn \bigbrac{\E_{\bW} \bigbrac{\ell\paren{\tfwm\paren{\sequenceX{(j)}; \bV},y_j}} - \E_{\bW} \bigbrac{\ell\paren{\tfwm\paren{\sequenceX{(j)\prime}; \bV},y_j^\prime}}}} \\
        \le &~\sup_{\bV \in \fnclass} \E_{\bW} \bigbrac{\dn \sumn \bigabs{\tfwm\paren{\sequenceX{(j)}; \bV} - \tfwm\paren{\sequenceX{(j)\prime}; \bV}} }  \\
        \le &~ \sup_{\bV \in \fnclass} \E_{\bW} \bigbrac{\frac{1}{nN} \sum_{m \in [M], k \in [N]} \bigbrac{\bigabs{\bar\sigma(\bW_m,\tbx_0^{(i)}\tbx_k^{(i)\top})}+\bigabs{\bar\sigma(\bW_m,\tbx_0^{(i)\prime}\tbx_k^{(i)\prime\top} )}} \ltwo{\bv_m}}.\\
        \lesssim &~ \frac{\Bvone}{n}.
    \end{align*}
    Therefore, $h\paren{\sequenceX{},\by}$ satisfies the bounded difference property with the parameter $\set{L_i}_{i=1}^n$ uniformly bounded by $\Theta(\Bvone/n)$. By bounded difference inequality (Theorem \ref{thm:bounded difference}), there's a constant $\wt{C}$ such that
    \[ \P\bigbrac{ \bigabs{h\paren{\bX,\by} - \E_{\bX} \bigbrac{h\paren{\bX,\by}}} \ge \epsilon } \le 2\exp\paren{-\frac{\wt{C} n \epsilon^2}{\Bvone^2}}. \]
    Combining with Lemma~\ref{lem::emp process mean with E xyw}, we have
    \begin{align*}
     &~\E_{\bX,\by} \bigbrac{h\paren{\bX,\by}} \\
     = &~ \E_{\bX,\by} \bigbrac{ \sup_{\bV \in \fnclass} \bigabs{ \E_{\bW} \bigbrac{ \dn \sumn \ell\paren{\tfwm,y_j}} - \E_{\bX,\by,\bW}\bigbrac{\dn \sumn \ell\paren{\tfwm,y_j}} }} \\
     \le &~ 2 \E_{\bX,\by,\boldsymbol{\xi}} \bigbrac{ \sup_{\bV \in \fnclass} \bigabs{ \dn \sum_{j=1}^n \xi_j \E_{\bW}\bigbrac{\ell\paren{\tfwm\paren{\sequenceX{(j)}; \bV},y_j}}} }\\
     \lesssim &~ \Bvone \sqrt{\frac{\log (dM)\log \paren{nNM} }{n}}.
    \end{align*}
    Therefore, by taking $\epsilon = 2\Bvone [\log\paren{6/\delta}/(n\wt{C})]^{1/2}$, we have
    \begin{align*}
        &~ \sup_{\bV \in \fnclass} \bigabs{ \E_{\bW} \bigbrac{ \dn \sumn \ell\paren{\tfwm,y_j}} - \E_{\bX,\by,\bW}\bigbrac{\dn \sumn \ell\paren{\tfwm,y_j}}}\\ 
        \lesssim &~ \Bvone \sqrt{\frac{\log (dM) \log \paren{nNM} }{n}} + \Bvone \sqrt{\frac{\log\paren{1/\delta}}{n}}
    \end{align*}
    with probability at least $1-\delta$. This proves Lemma \ref{lem::emp process E w with E xyw}.
    \end{proof}
    
\begin{proof}[Proof of Lemma~\ref{lem::emp process E xy with E xyw}]~

    Denote
    \[\varphi\paren{\bW_{1:M}} = \sup_{\bV \in \fnclass} \bigabs{ \E_{\bX,\by}\bigbrac{\dn \sumn \ell\paren{\tfwm,y_j}} - \E_{\bX,\by,\bW} \bigbrac{ \dn \sumn \ell\paren{\tfwm,y_j}}}.\]
    Similar to the proof of Lemma~\ref{lem::emp process with E w}.
    Given $\bW_{1:M}=\set{\bW_m}_{m=1}^M$ and $\bW_{1:M}^\prime=\set{\bW_m^\prime}_{m=1}^M$, define $\lfros{\bW_{1:M} - \bW_{1:M}^\prime} = \sqrt{\sum_m \lfros{\bW_m - \bW^\prime_m}^2}$. We have
    \begin{align*}
        &~ \varphi\paren{\bW_{1:M}} - \varphi\paren{\bW_{1:M}^\prime} \\
        \le &~ \sup_{\bV \in \fnclass} \bigabs{\dn \sumn \E_{\bX,\by}\bigbrac{\ell\paren{\tfwm,y_j} - \ell\paren{f^{\bW^\prime}_M,y_j}} }\\
        \le &~ \sup_{\bV \in \fnclass} \bigabs{\frac{1}{Nn} \sum_{i,j,m} \E_{\bX,\by}\left\{\bigbrac{\bar\sigma(\bW,\tbx_0^{(j)}\tbx_i^{(j)\top}) - \bar\sigma(\bW^\prime,\tbx_0^{(j)}\tbx_i^{(j)\top}) } \<\bv_m,\tbx_i\>\right\}} \\
        \le &~ \sup_{\bV \in \fnclass} \bigabs{   \sum_m 2\sqrt{2} \lfros{\bW_m - \bW_m^\prime} \ltwos{\bv_m} } \\
        \lesssim &~ \lfros{ \bW_{1:M} - \bW_{1:M}} \sqrt{\frac{\Bvtwo}{M}}.
    \end{align*}
    Since $\bW_{m}$ has independent standard Gaussian entries, there is a constant $\wt{C}$ s.t.
    \begin{equation}
        \P\bigbrac{\bigabs{\varphi\paren{\bW_{1:M}} - \E_{\bW} \bigbrac{\varphi \paren{\bW_{1:M}}}} \ge \epsilon} \le 2\exp\paren{-\frac{\wt{C}M\epsilon^2}{\Bvtwo}}. \label{equ::emp process E xy with E xyw concentration}
    \end{equation}
    For the expectation, we have
    \begin{align}
      \E_{\bW} \bigbrac{\varphi\paren{\bW_{1:M}}}
     = &~ \E_{\bW} \bigbrac{ \sup_{\bV \in \fnclass} \bigabs{ \dn \sum_{j=1}^n \ell\paren{\tfwm\paren{\sequenceX{(j)}; \bV},y_j} - \E_{\bX,\by,\bW}\bigbrac{\dn \sumn \ell\paren{\tfwm,y_j}} }} \nonumber\\
     \le &~ 2\E_{\bX,\by,\bW,\boldsymbol{\xi}} \bigbrac{ \sup_{\bV \in \fnclass} \bigabs{ \dn \sum_{j=1}^n \xi_j \ell\paren{\tfwm\paren{\sequenceX{(j)}; \bV},y_j}}} \nonumber\\
     \lesssim &~ \Bvone \sqrt{\frac{\log (dM) \log \paren{nNM} }{n}},\label{equ::emp process E xy with E xyw mean}
    \end{align}
    where the last inequality is by Lemma~\ref{lem::emp process mean with E xyw}. Combining (\ref{equ::emp process E xy with E xyw concentration}) and (\ref{equ::emp process E xy with E xyw mean}) and taking $\epsilon = [\Bvtwo\log(6/\delta)/(M\wt{C})]^{1/2}$, we get 
    \begin{align*}
        &~ \sup_{\bV \in \fnclass} \bigabs{ \E_{\bX,\by} \bigbrac{ \dn \sumn \ell\paren{\tfwm,y_j}} - \E_{\bX,\by,\bW}\bigbrac{\dn \sumn \ell\paren{\tfwm,y_j}}}  \\ 
        \lesssim &~ \sqrt{\frac{\Bvtwo\log(1/\delta)}{M}} + \Bvone \sqrt{\frac{\log (dM)\log \paren{nNM} }{n}}
    \end{align*}
    with probability at least $1-\delta$. This proves Lemma \ref{lem::emp process E xy with E xyw}.
\end{proof}

\subsection{Proof of Theorem~\ref{thm::sample complexity}}
\label{sec:proof of sample complexity}
\begin{proof}
Given any target function $f_\star$, by Lemma~\ref{lem::rf and finite neuron}, with probability larger than $1-\delta/2$ over $\bW_{1:M}$, there exists $\wt{\bV}=\set{\wt{\bv}_m}_{m=1}^M$ such that 
\begin{equation}
    \norms{ f_\star - \tfwm(\sequenceX{};\wt{\bV})}_{L^2(\Px)} \lesssim
    \sqrt{\frac{(d^2+\log M) \rkhsbound \delta^{-1}}{M}}, \label{equ::sample complexity finite neuron}
\end{equation}
with 
\begin{equation}
    \sum_{m=1}^M \ltwob{\wt{\bv}_m} \lesssim \sqrt{\rkhsbound} + \sqrt{\frac{ \rkhsbound \delta^{-1}}{M}} \lesssim 2\sqrt{\rkhsbound} \text{ and } \sum_{m=1}^M \ltwob{\wt{\bv}_m}^2 \lesssim \frac{\rkhsbound \delta^{-1}}{M}. \label{equ::sample complexity fnclass}
\end{equation}
By our choice of $\Bvone$ and $\Bvtwo$, let $f^{\bW}_{\hat\bv,M} = \tfwm(\cdot;\hat\bV)$ denote the model trained by \eqref{equ::ERM fomula}, and let 
\begin{equation*}
    f^{\bW}_{\bv^*,M} = \argmin_{\bV \in \mathcal{V}_M} L_D\paren{\tfwm(\cdot;\bV)} ~~\text{and}~~ f^{\bW}_{\wt{\bv},M} = \tfwm(\sequenceX{};\wt{\bV}).
\end{equation*}
Then with probability at least $1-\delta$ over $\bW$, $\bX$, and $\by$, we have~\yub{fix}
\begin{align}
    &~ L_D\paren{f^{\bW}_{\hat\bv,M}} - L_D\paren{f^*} \\
    \le &~L_D\paren{f^{\bW}_{\hat\bv,M}} - \hat{L}_D\paren{f^{\bW}_{\hat\bv,M}} + \hat{L}_D\paren{f^{\bW}_{\hat\bv,M}} - \hat{L}_D\paren{f^{\bW}_{\bv^*,M}} + \hat{L}_D\paren{f^{\bW}_{\bv^*,M}} \nonumber \\
    &~ -  L_D\paren{f^{\bW}_{\bv^*,M}} + L_D\paren{f^{\bW}_{\bv^*,M}} - L_D\paren{f^{\bW}_{\wt{\bv},M}} + L_D\paren{f^{\bW}_{\wt{\bv},M}} -  L_D\paren{f_\star} \label{align::complexity line 1} \\
   \le &~ L_D\paren{f^{\bW}_{\hat\bv,M}} - \hat{L}_D\paren{f^{\bW}_{\hat\bv,M}} + \hat{L}_D\paren{f^{\bW}_{\bv^*,M}} - L_D\paren{f^{\bW}_{\bv^*,M}}  
    +  L_D\paren{f^{\bW}_{\wt{\bv},M}} -  L_D\paren{f^*} \label{align::complexity line 2} \\
    \le &~ 2 \sup_{f\in \fnclass}\bigabs{L_D\paren{f} - \hat{L}_D\paren{f}} + \norms{ f^{\bW}_{\wt{\bv},M} - f_\star }_{L^2(\Px)} \label{align::complexity line 3} \\
    \lesssim &~ \Bvone \sqrt{\frac{\log (dM) \log \paren{nNM}}{n}} + \sqrt{\log\paren{\frac{1}{\delta}}}\paren{\frac{\Bvone}{\sqrt{n}} + \sqrt{\frac{\Bvtwo}{M}}} + \sqrt{\frac{(d^2+\log M) \rkhsbound \delta^{-1}}{M}}  \label{align::complexity line 4} \\
    \lesssim &~ \sqrt{\frac{\rkhsbound}{n}} \paren{\sqrt{{\log (dM)\log(nNM)}} + \sqrt{{\log\paren{\delta^{-1}}}}} + \sqrt{\frac{(d^2+\log M) \rkhsbound \delta^{-1}}{M}} \label{align::complexity line 5} \\
    \lesssim &~ \sqrt{\frac{{\rkhsbound [\log (dM) \log (nNM)} +  \log (\delta^{-1})]}{{n}}} + \sqrt{\frac{(d^2 + \log M) \rkhsbound \delta^{-1}}{M}},
\end{align}
where from \eqref{align::complexity line 1} to \eqref{align::complexity line 2} we use the definition of $f^{\bW}_{\hat\bv,M}$ and $f^{\bW}_{\bv^*,M}$. From \eqref{align::complexity line 2} to \eqref{align::complexity line 3}, we bound~\yub{fix}~$L_D(f^{\bW}_{\hat\bv,M}) - \hat{L}_D ( f^{\bW}_{\hat\bv,M} )$ and $L_D ( f^{\bW}_{\bv^*,M}) - \hat{L}_D(f^{\bW}_{\bv^*,M})$ by $\sup_{f}| L_D\paren{f} - \hat{L}_D\paren{f} |$ and use the Lipschitzness of $\ell(f,y)$. From \eqref{align::complexity line 3} to \eqref{align::complexity line 4}, we use Proposition~\ref{prop::empirical process} and Lemma~\ref{lem::rf and finite neuron}. From \eqref{align::complexity line 4} to \eqref{align::complexity line 5}, we insert the value of $\Bvone$ and $\Bvtwo$ into the equation.
This proves Theorem~\ref{thm::sample complexity}. 
\end{proof}

\subsection{Proof of Examples in Section \ref{section:RKHS analysis} }\label{sec:proof_examples_RFA}

\subsubsection{Excess risk of \RFMLP}

Denote $\cF$ to be the set of all functions in the function class \eqref{eqn:target-function} and \eqref{eqn:target-expansion}, i.e., 
\begin{equation}\label{eqn:target_function_class_F}
\cF = \Big\{  f_\star(\sequenceX{})=\frac{1}{N}\sum_{i=1}^N\sum_{r,s \ge 0}^\infty \<\bx_0^{\otimes r}\otimes \bx_i^{\otimes s}, \mathbf{f}_{rs}\>:~\mathbf{f}_{rs} \in \R^{d^{r + s}} \text{ symmetric }, r,s\ge 0  \Big\}. 
\end{equation}
Consider the \RFMLP~model 
\begin{equation}
 \tflp(\sequenceX{}; \bv) = \sum_{m=1}^M \sigma\big(\<\bw_m,\operatorname{vec}(\sequenceX{})\> \big) \cdot v_m, ~~~~~ \{ \bw_m \}_{m \in [M]} \simiid \normal(\bzero,\id/(N+2)), \label{eqn::MLP model appendix}
\end{equation}
where $\operatorname{vec}(\sequenceX{}) = [\bx_0;\bx_1;\ldots;\bx_N;1] \in \R^{dN+d+1}$. For target functions that take forms in \eqref{eqn:target-function} and \eqref{eqn:target-expansion}, define $\mlpbound = \sum_{k=0}^\infty \wt{C}_{k} \sum_{r+s=k}\lfros{\mathbf{f}_{rs}}^2$ with $\wt{C}_{k} = k^{3.5} (N+2)^{2k}$. In case where $f_\star$ admits multiple representations of the form~\eqref{eqn:target-expansion}, $\mlpbound$ is the infimum of the right-hand-side over all such representations. 

Then we consider the empirical risk minimizer over the \RFMLP~model: 
\begin{equation}
\textstyle \hat\bv = \argmin_{\bv \in \mathcal{V}_M} \hat{L}_D(\tflp(\cdot;\bv)), ~~~~~~~~~ \hat{L}_D(f) = \dn \sum_{j=1}^n \ell(f(\sequenceX{(j)}),y_j), \label{eqn:RFMLP ERM fomula}
\end{equation}
where the constrained class $\mc{V}^{\textsc{MLP}}_M$ gives 
\begin{equation}
\textstyle \mc{V}^{\textsc{MLP}}_M = \left\{\bv = \set{v_m}_{m=1}^M : \,\, \sum_{m=1}^M \left|v_m\right| \le \Bvone , \sum_{m=1}^M v_m^2 \le \Bvtwo /M\right\}. \label{eqn:RFMLP ERM constraint}
\end{equation}

\begin{proposition}[The sample complexity of \RFMLP]
\label{prop::sample complexity MLP}
Let $f_\star = \arg\min_{f \in \cF}\populoss{f}$ be the population risk minimizer within the target function class \eqref{eqn:target_function_class_F}.
Assume $M > \delta^{-1}$ and $n > \log(dM)$. 
Take $\Bvone = C \sqrt{\mlpbound}$ and $\Bvtwo = C \mlpbound \delta^{-1}$ in \eqref{eqn:RFMLP ERM constraint}, with $C$ being a constant. Let $\hat{f}^{\textsc{MLP}}_{M} = \tflp(\cdot;\hat\bv)$ be empirical risk minimizer of model \RFMLP. Then for any joint distribution $\sfP$, with probability at least $1-\delta$ over $\sets{\bw_m}_{m\in[M]}$ sampled according to \eqref{eqn::MLP model appendix} and $\sets{(\sequenceX{(j)},y_j)}_{j\in[n]}\sim_{iid} \sfP$, the excess risk is bounded by %
\begin{equation}\label{equ::sample complexity MLP}
\begin{aligned}
    L_D(\hat{f}^{\textsc{MLP}}_{M}) -  L_D(f_\star) \le 
    \tO \paren{ \sqrt{\mlpbound} \bigg[\sqrt{\frac{1}{{n}}} + \sqrt{\frac{d^2\delta^{-1}}{M}} \bigg] }.
\end{aligned}
\end{equation}
\end{proposition}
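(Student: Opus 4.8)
The plan is to prove Proposition~\ref{prop::sample complexity MLP} by following the proof of Theorem~\ref{thm::sample complexity} essentially verbatim, since the \RFMLP~model \eqref{eqn::MLP model appendix} is itself a random-feature model (the classical scalar one, with feature $\sigma(\<\bw_m,\operatorname{vec}(\sequenceX{})\>)$ and scalar coefficient $v_m$). I would assemble the same three ingredients and combine them with the same error decomposition: (i) an expressivity statement that every $f_\star\in\cF$ in \eqref{eqn:target_function_class_F} is realized by an infinite-width \RFMLP~with RKHS norm at most $\mlpbound$ (the analogue of Lemma~\ref{lem:RKHS_norm_target_function_Theorem1}); (ii) a finite-width sampling bound producing an $M$-feature \RFMLP~lying in the class $\mc{V}^{\textsc{MLP}}_M$ of \eqref{eqn:RFMLP ERM constraint} with $\Bvone=C\sqrt{\mlpbound}$, $\Bvtwo=C\mlpbound\delta^{-1}$ and small $L^2(\Px)$ error (the analogue of Lemma~\ref{lem::rf and finite neuron}); and (iii) a uniform bound on $\sup_{\bv\in\mc{V}^{\textsc{MLP}}_M}|\hat L_D-L_D|$ (the analogue of Proposition~\ref{prop::empirical process}).

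The technical core, and what I expect to be \textbf{the main obstacle}, is ingredient (i): computing the RKHS norm of the target under the \RFMLP~kernel. Since every token is a unit vector, $\|\operatorname{vec}(\sequenceX{})\|_2^2=N+2$, and with $\bw_m\sim\normal(\bzero,\id/(N+2))$ the score $\<\bw_m,\operatorname{vec}(\sequenceX{})\>$ is standard Gaussian; hence the \RFMLP~kernel is the ReLU--Gaussian (arc-cosine) dot-product kernel evaluated at the normalized correlation $u=\<\operatorname{vec}(\sequenceX{}),\operatorname{vec}(\sequenceX{\prime})\>/(N+2)$, and expanding exactly as in the proof of Lemma~\ref{lem:RKHS_norm_target_function_Theorem1} gives $\sum_\ell c_\ell u^\ell$ with the same coefficients $c_\ell$. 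This produces a tensor feature map proportional to $\sqrt{c_\ell}\,(N+2)^{-\ell/2}\operatorname{vec}(\sequenceX{})^{\otimes\ell}$, and I would embed each $\mathbf{f}_{rs}$ into the degree-$(r+s)$ part of a polynomial in $\operatorname{vec}(\sequenceX{})$ (padding with the constant coordinate as in the $\wt{\mathbf f}_{rs}$ construction) and apply the feature-map equivalence \eqref{eqn:feature map equi} to read off $\norm{f_\star}_{\mc{H}_K}^2\le\mlpbound$. The delicate part is the bookkeeping of the $(N+2)$ factors: because the feature map is built from the \emph{entire} vectorized sequence rather than from the per-token $\tbx_i$, the $1/(N+2)$ normalization enters at \emph{every} degree and---crucially, in contrast to the \RFA~calculation---is not cancelled by the key-token average, which is exactly what forces the $(N+2)^{\Theta(k)}$ growth recorded in $\wt{C}_k=k^{3.5}(N+2)^{2k}$ and hence the worse $N$-dependence of \RFMLP~seen in the examples. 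This is the random-feature version of the dot-product-kernel-on-the-sphere calculation adapted from~\cite{arora2019fine}.

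For ingredient (ii) I would invoke Lemma~\ref{lem::rf and finite neuron}: the \RFMLP~is the special case of the generic \RF~model in which the ``value'' is a scalar $v_m$ and there is no $\tfrac1N\sum_i$ average, so the ReLU feature still satisfies the Lipschitz/boundedness hypotheses on $\bar\sigma$, and the same Monte-Carlo truncation argument gives an $L^2(\Px)$ error $\tO(\sqrt{(d+\log M)\mlpbound\delta^{-1}/M})$, within the stated $\tO(\sqrt{d^2\delta^{-1}/M})$ term (the Gaussian weight in $\R^{dN+d+1}$ concentrates at radius $\Theta(\sqrt d)$ after the $1/(N+2)$ scaling, so the ambient dimension does not enter), together with the coefficient bounds realizing $\Bvone$ and $\Bvtwo$. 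For ingredient (iii) the scalar \RFMLP~is likewise a special case of the framework behind Proposition~\ref{prop::empirical process}, so the four-part decomposition of $\sup_{\bv}|\hat L_D-L_D|$ in Lemmas~\ref{lem::emp process mean with E xyw}--\ref{lem::emp process E xy with E xyw}---the matrix-Bernstein/Rademacher-contraction bound on the feature Rademacher complexity, the Gaussian concentration in $\bw_{1:M}$, and the bounded-difference concentration in $(\bX,\by)$---carries over with only notational changes, the unboundedness of $\sigma$ again being absorbed by the truncation already built into that proof.

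Finally, I would combine the three pieces exactly as in Section~\ref{sec:proof of sample complexity}: using optimality of the ERM $\hat\bv$ and of the sampled representative $\wt{\bv}$ from (ii), bound $L_D(\hat{f}^{\textsc{MLP}}_{M})-L_D(f_\star)\le 2\sup_{f\in\mc{V}^{\textsc{MLP}}_M}|L_D(f)-\hat L_D(f)|+\norm{\tflp(\cdot;\wt\bv)-f_\star}_{L^2(\Px)}$, then substitute the bound from (iii) and the values $\Bvone=C\sqrt{\mlpbound}$, $\Bvtwo=C\mlpbound\delta^{-1}$; this collapses to the claimed $\tO(\sqrt{\mlpbound}[\sqrt{1/n}+\sqrt{d^2\delta^{-1}/M}])$.
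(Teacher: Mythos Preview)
Your proposal is correct and mirrors the paper's approach: the paper's proof is a sketch that invokes precisely your three ingredients (the \RFMLP~kernel expansion giving the RKHS-norm bound $\mlpbound$, Lemma~\ref{lem::rf and finite neuron} for finite-width approximation, and an \RFMLP~analogue of Proposition~\ref{prop::empirical process}) and then concludes via the decomposition in Section~\ref{sec:proof of sample complexity}. The one place where your writeup is slightly loose is the claim that ingredients (ii)--(iii) carry over ``with only notational changes'' and that ``the ambient dimension does not enter'': since $\|\operatorname{vec}(\sequenceX{})\|_2=\sqrt{N+2}$, the Lipschitz constant of the \RFMLP~feature $\sigma(\langle\bw,\operatorname{vec}(\sequenceX{})\rangle)$ in $\bw$ is $\sqrt{N+2}$ rather than $O(1)$, and the paper's own sketch accordingly records explicit $\sqrt{N+2}$ prefactors in its version of the empirical-process bound before absorbing them into the stated estimate. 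These extra $N$ factors are harmless next to the $(N+2)^{\Theta(k)}$ already present in $\mlpbound$, so the proposition as stated still follows, but you should not claim they are absent.
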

\begin{proof}[Proof of Proposition \ref{prop::sample complexity MLP}]
The proof is basically the same as that of \RFA~model.
We only give a sketch of the proof.
Firstly, with a few modifications of the proof, we can show that Lemma \ref{lem::rf and finite neuron} also holds for \RFMLP, with $\rkhsbound$ replaced with $\mlpbound$. Lemma \ref{lem:RKHS_norm_target_function_Theorem1} is slightly different, we have the kernel expansion:
\[    K\paren{\sequenceX{},\sequenceX{\prime}}
    =\sum_{\ell\in\set{0,1}\cup \set{2k}_{k\ge
    1}}  \Big\langle \sqrt{c_\ell} (N+2)^{-\ell} \bigbrac{\operatorname{vec}(\sequenceX{})}^{\otimes \ell},\sqrt{c_\ell}  (N+2)^{-\ell} \bigbrac{\operatorname{vec}(\sequenceX{\prime})}^{\otimes \ell}  \Big\rangle.
\]
Therefore we can rewrite $f_\star$ as
\begin{equation}\label{eqn:expansion of fmlp}
f_\star = \sum_{\ell\in\set{0,1}\cup \set{2k}_{k\ge
    1}}  \Big\langle \frac{(N+2)^{\ell}}{N\sqrt{c_\ell}}\sum_{i=1}^N \sum_{r+s=\ell}\wt{\mathbf{f}}_{rs,i},\sqrt{c_\ell}  (N+2)^{-\ell} \bigbrac{\operatorname{vec}(\sequenceX{\prime})}^{\otimes \ell}  \Big\rangle,
\end{equation}
where $\langle\wt{\mathbf{f}}_{rs,i},[\operatorname{vec}(\sequenceX{\prime})]^{\otimes \ell}\rangle = \langle \mathbf{f}_{rs},\bx_0^{\otimes r}\otimes \bx_i^{\otimes s}\rangle$.
Thus, 
\[\|f_\star\|_{\mc{H}_K}^2 \le \sum_{k=0}^\infty \wt{C}_{k} \sum_{r+s=k}\lfros{\mathbf{f}_{rs}}^2~~~~\text{with}~~~~\wt{C}_{k} = k^{3.5} (N+2)^{2k}.\]
The right-hand-side gives the formulation of $\mlpbound$.  Then, a similar version of Proposition \ref{prop::empirical process} holds for \RFMLP~model $\tfwm$, i.e., with probability at least $1-\delta$ (w.r.t. $\bW$, $\by$, and $\bX$),
\begin{align}
 &~\sup_{\bV \in \mathcal{V}_M} \bigabs{ \frac{1}{n} \sum_{j=1}^n \ell\paren{\tfwm\paren{\sequenceX{(j)}; \bV},y_j} - \E_{(\sequenceX{}, y) \sim \sfP}\ell\paren{\tfwm\paren{\sequenceX{}; \bV},y}}\nonumber\\ 
 \lesssim &~ \Bvone \sqrt{\frac{(N+2)\log (dM) \log \paren{nNM}}{n}} + \sqrt{(N+2)\log\paren{1/\delta}}\paren{\frac{\Bvone}{\sqrt{n}} + \sqrt{\frac{\Bvtwo}{M}}}.
    \end{align}
    Then combining all of the above equations and following the proof in Section \ref{sec::proof of sample complexity}, we get \eqref{equ::sample complexity MLP}.
\end{proof}

\begin{remark}
The representation in \eqref{eqn:expansion of fmlp} is not unique. With a more careful choice of the representation of the target function $f_\star$, we can get a better bound for $\mlpbound$, which is given by 
\begin{equation}\label{eqn:better bound for mlp}
    \mlpbound = \sum_{k=0}^\infty \wt{C}_{k} \sum_{r+s=k}\lfros{\mathbf{f}_{rs}}^2~~~~\text{with}~~~~\wt{C}_{k} = k^{3.5} [(N+2)/2]^{2k}.
\end{equation}
\end{remark}
\subsubsection{Proofs of Examples}

\begin{proposition}[Restatement of Example \ref{exp:functions of x0}]\label{prop:example_functions_of_x0}
    For functions of $\bx_0$ of the form
\[
f_\star(\sequenceX{})=\sum_{k=0}^\infty \<\bx_0^{\otimes k}, \bA_k\>,~~ \bA_k\in\R^{d^k},
\]
we have that $B(f_\star) = \sum_{k=0}^\infty C_k\lfro{\bA_k}^2$.
Setting $M = \Theta( d^2 n )$, the excess risk bound gives $\tO ( \sqrt{\sum_{k=0}^\infty k^{4.5}4^k \| \bA_k \|_{\Fr}^2 / n} )$.  Moreover, consider $f_\star(\bx_{0:N})=(\bbeta^\top \bx_0)^p$. The above excess risk of \RFA~model and the \RFMLP{} model scales as
\[
\RFA: \tO \Big(\Poly(p) \sqrt{ 4^p \| \bbeta \|_2^{2p} /n } \,\Big),~~~~~~~{\RFMLP}:  \tO \Big( \Poly(p) \sqrt{((N+2))^p \| \bbeta\|_2^{2p}/n} \Big).
\]
\end{proposition}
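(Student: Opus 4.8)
The plan is to treat this as a pure instantiation exercise: apply Theorem~\ref{thm::sample complexity} to control the \RFA{} excess risk and Proposition~\ref{prop::sample complexity MLP} for the \RFMLP{} one, so that all that remains is to evaluate the two complexity measures $\rkhsbound$ and $\mlpbound$ on the specific target $f_\star(\sequenceX{}) = \sum_{k\ge 0}\<\bx_0^{\otimes k},\bA_k\>$.

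First I would put $f_\star$ into the canonical form~(\ref{eqn:target-function})--(\ref{eqn:target-expansion}). Because $f_\star$ depends only on $\bx_0$, I take $F(\bx_0,\bx_i)=\sum_k\<\bx_0^{\otimes k},\bA_k\>$, constant in $\bx_i$, so that $\frac1N\sum_i F = f_\star$; this is the choice $\mathbf{f}_{k,0}=\bA_k$ and $\mathbf{f}_{rs}=\mathbf{0}$ for $s>0$. Substituting into~(\ref{eqn:bfstar}), only the terms with $(r,s)=(k,0)$ (hence $\max\{r,s\}=k$) survive, giving $\rkhsbound\le\sum_k C_k\|\bA_k\|_{\Fr}^2$, which is the value claimed (it suffices as an upper bound for the infimum defining $\rkhsbound$). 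Then I apply Theorem~\ref{thm::sample complexity}: the choice $M=\Theta(d^2 n)$ makes $\sqrt{d^2\delta^{-1}/M}=\Theta(\sqrt{1/n})$, so the two bracketed terms balance and the rate collapses to $\tO(\sqrt{\rkhsbound/n})=\tO(\sqrt{\sum_k k^{4.5}4^k\|\bA_k\|_{\Fr}^2/n})$, the $\vee 1$ in $C_k$ only affecting the constant $k=0$ term.

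For the monomial $f_\star=(\bbeta^\top\bx_0)^p=\<\bbeta^{\otimes p},\bx_0^{\otimes p}\>$ I set $\bA_k=\bbeta^{\otimes p}\mathbf{1}\{k=p\}$ and use $\|\bbeta^{\otimes p}\|_{\Fr}^2=\|\bbeta\|_2^{2p}$, so $\rkhsbound=C_p\|\bbeta\|_2^{2p}=p^{4.5}4^p\|\bbeta\|_2^{2p}$, yielding the \RFA{} rate $\tO(\Poly(p)\sqrt{4^p\|\bbeta\|_2^{2p}/n})$. For \RFMLP{} I instantiate Proposition~\ref{prop::sample complexity MLP}: the same monomial has joint degree $r+s=p$ in $\operatorname{vec}(\sequenceX{})$, so it is captured at level $\ell=p$ of the \RFMLP{} feature map $\sqrt{c_\ell}(N+2)^{-\ell/2}\operatorname{vec}^{\otimes\ell}$, which gives $\mlpbound=\Theta(\Poly(p)(N+2)^p\|\bbeta\|_2^{2p})$ and the rate $\tO(\Poly(p)\sqrt{(N+2)^p\|\bbeta\|_2^{2p}/n})$.

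The hard part is the \RFMLP{} computation, and in particular pinning down the exponent of $(N+2)$, since this is exactly where the attention advantage lives. The structural point I would emphasize is that in \RFA{} the feature $(\tbx_0\tbx_i^\top)^{\otimes\ell}$ has degree $\ell$ in each of $\bx_0$ and $\bx_i$ \emph{separately}, so a degree-$p$ monomial in $\bx_0$ is represented at level $\ell=\max\{r,s\}=p$ with per-head normalization $4^{-\ell}$ (the variance $1/4$); whereas flattening into $\operatorname{vec}$ makes degrees \emph{additive}, placing the same monomial at level $\ell=r+s=p$ but with normalization $(N+2)^{-\ell}$ inherited from $\|\operatorname{vec}\|^2=N+2$. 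Inverting these through the shared eigenvalues $c_\ell=\Theta(\ell^{-5/2})$ turns $4^{-\ell}$ versus $(N+2)^{-\ell}$ into the $4^p$ versus $(N+2)^p$ separation, i.e.\ a sample-size saving of $(N/4)^p$. A last book-keeping subtlety is that the ReLU kernel has $c_\ell=0$ for odd $\ell\ge 3$, so an odd-degree monomial must be lifted one level using the constant coordinate of $\tbx_i$ (resp.\ the $1$ appended in $\operatorname{vec}$); for \RFA{} this costs only a constant factor, while for \RFMLP{} it costs one extra factor of $(N+2)$, so the stated $(N+2)^p$ rate is cleanest for even $p$, though the qualitative $(N/4)^p$ comparison is unaffected.
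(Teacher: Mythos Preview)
Your proposal is correct and follows the same approach as the paper, whose entire proof reads ``This follows by direct calculation.'' You supply that calculation explicitly: identify the $(\mathbf{f}_{rs})$ representation, read off $\rkhsbound$ from~\eqref{eqn:bfstar}, balance the two terms in Theorem~\ref{thm::sample complexity} via $M=\Theta(d^2n)$, and for \RFMLP{} instantiate Proposition~\ref{prop::sample complexity MLP}; your observation that the kernel features scale as $2^{-\ell}$ for \RFA{} versus $(N+2)^{-\ell/2}$ for \RFMLP{}---and hence that inverting through the common $c_\ell$ produces $4^p$ versus $(N+2)^p$---is exactly the mechanism behind the stated separation, and your remark about lifting odd degrees through the appended constant coordinate is a genuine bookkeeping point the paper's one-line proof does not spell out.
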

\begin{proof}[Proof of Proposition \ref{prop:example_functions_of_x0}]
This follows by direct calculation.
\end{proof}

\begin{proposition}[Restatement of Example \ref{exp:functions of xi}]
\label{prop:restatement of exp function of xi}
For $f_\star = \frac{1}{N} \sum_{i=1}^N \psi(\<\bbeta,\bx_i\>)$ with $\psi(z) = z \arctan(z/\eta)$ with $\eta>2$ and $\ltwos{\bbeta} = 1$.  The excess risk bound of \RFA~model and the \RFMLP~model are 
\begin{align*}
\textstyle
\RFA{}: \tO\paren{\sqrt{ \sum_{k=1}^\infty k^{4.5}(2 /\eta)^{2k} / n }} = \tO(\sqrt{1/n}), \quad 
\RFMLP{}: \tO\paren{\sqrt{ \sum_{k=1}^\infty k^{4.5}[(N+2) / 2\eta]^{2k}  / n}}.
\end{align*}
\end{proposition}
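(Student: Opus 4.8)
The plan is to reduce the claim to the general excess-risk bounds already established—Theorem \ref{thm::sample complexity} for the \RFA{} model and Proposition \ref{prop::sample complexity MLP} (with the sharpened constant from the accompanying remark) for the \RFMLP{} model—by computing the two complexity measures $\rkhsbound$ and $\mlpbound$ for the specific target $f_\star = \frac{1}{N}\sum_{i=1}^N \psi(\<\bbeta,\bx_i\>)$. Since $f_\star$ falls into the form of Example \ref{exp:functions of xi} (it depends only on $\bx_i$, i.e. $r=0$), both complexity measures are controlled by the same tensors, and the entire argument is a power-series calculation.

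First I would expand $\psi$ into a power series. Using $\arctan(u) = \sum_{j\ge 0}\frac{(-1)^j}{2j+1}u^{2j+1}$, valid for $|u|\le 1$, and multiplying by $z$,
\begin{equation*}
\psi(z) = z\arctan(z/\eta) = \sum_{j=0}^\infty \frac{(-1)^j}{(2j+1)\eta^{2j+1}} z^{2j+2} = \sum_{\substack{k\ge 2 \\ k \text{ even}}} c_k z^k, \qquad c_k = \frac{(-1)^{k/2-1}}{(k-1)\eta^{k-1}}.
\end{equation*}
Since all input tokens have unit norm and $\eta>2$, we have $|z/\eta| = |\<\bbeta,\bx_i\>|/\eta \le 1/\eta < 1$ on the support of $\Px$, so the expansion is valid there. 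Writing $\<\bbeta,\bx_i\>^k = \<\bbeta^{\otimes k}, \bx_i^{\otimes k}\>$ identifies the representation \eqref{eqn:target-expansion} with symmetric tensors $\bA_k = c_k\,\bbeta^{\otimes k}$ (only $r=0$ and $s=k$ even with $k\ge2$ are nonzero), so that $\|\bA_k\|_{\Fr}^2 = c_k^2\|\bbeta\|_2^{2k} = c_k^2 = \frac{1}{(k-1)^2\eta^{2(k-1)}}$.

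Next I would substitute these into the complexity measures and simplify. For \RFA{}, equation \eqref{eqn:bfstar} gives $\rkhsbound = \sum_k C_k\|\bA_k\|_{\Fr}^2$ with $C_k = k^{4.5}4^k$; using the identity $4^k/\eta^{2(k-1)} = \eta^2(2/\eta)^{2k}$ and $(k-1)^{-2}\le 1$,
\begin{equation*}
\rkhsbound \;=\; \sum_{\substack{k\ge2\\k\text{ even}}}\frac{k^{4.5}4^k}{(k-1)^2\eta^{2(k-1)}} \;\lesssim\; \sum_{k=1}^\infty k^{4.5}(2/\eta)^{2k}.
\end{equation*}
This series converges precisely when $2/\eta<1$, i.e. $\eta>2$, to a constant depending only on $\eta$ (the polynomial factor $k^{4.5}$ does not affect the radius of convergence); substituting into Theorem \ref{thm::sample complexity} and choosing $M=\Theta(d^2 n)$ to kill the second term yields the \RFA{} bound $\tO(\sqrt{\rkhsbound/n}) = \tO(\sqrt{1/n})$. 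For \RFMLP{} the identical computation with the improved constant $\wt C_k = k^{3.5}[(N+2)/2]^{2k}$ from the remark following Proposition \ref{prop::sample complexity MLP}, together with $[(N+2)/2]^{2k}/\eta^{2(k-1)} = \eta^2[(N+2)/(2\eta)]^{2k}$, gives $\mlpbound \lesssim \sum_{k\ge1} k^{4.5}[(N+2)/(2\eta)]^{2k}$, and Proposition \ref{prop::sample complexity MLP} then produces the stated \RFMLP{} bound.

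The computation is essentially routine, so there is no deep obstacle; the points that require care are getting the power series of $\psi$ right (the leading factor of $z$ shifts every degree up by two and the odd-degree coefficients vanish, which is exactly what aligns the $\bx_i$-dependence with the permutation-invariant structure), and correctly matching the complexity-measure exponents. In particular, the decisive quantity is whether the effective ratio $(2/\eta)$ for \RFA{} versus $[(N+2)/(2\eta)]$ for \RFMLP{} lies below one: the former holds for all $\eta>2$, whereas the latter fails once $\eta\le(N+2)/2$, and this discrepancy is precisely the source of the qualitative separation between the two models.
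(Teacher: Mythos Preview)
Your proposal is correct and follows essentially the same route as the paper: expand $\psi(z)=z\arctan(z/\eta)$ as a power series, read off the tensors $\bA_k=c_k\bbeta^{\otimes k}$, plug into the formulas for $\rkhsbound$ (Eq.~\eqref{eqn:bfstar}) and $\mlpbound$ (the improved constant from the remark after Proposition~\ref{prop::sample complexity MLP}), and invoke Theorem~\ref{thm::sample complexity} and Proposition~\ref{prop::sample complexity MLP}. You are in fact slightly more careful than the paper about the domain of convergence and the bookkeeping of constants such as $(k-1)^{-2}$ and $\eta^2$, but the argument is the same.
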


\begin{proof}[Proof of Proposition \ref{prop:restatement of exp function of xi}]
Use the power series expansion of $\psi$, we have 
\[f_\star = \frac{1}{N} \sum_{i=1}^N \sum_{k=1}^\infty (-1)^k \frac{\<\bbeta,\bx_i\>^{2k}}{(2k-1)\eta^{2k-1}}.\]
Plug it into the formula of $\rkhsbound$ \eqref{eqn:bfstar} and $\mlpbound$ \eqref{eqn:better bound for mlp}, we get
\begin{align*}
\rkhsbound = \sum_{k=1}^\infty (2k)^{4.5} 4^{2k} \| \bbeta/\eta \|^{4k}_2 = \Theta\Big( \sum_{k=1}^\infty k^{4.5} (2/\eta)^{2k}\Big),
\end{align*}
and $\mlpbound = \Theta( \sum_{k=1}^\infty k^{4.5} [(N+2)/2\eta]^{2k})$.
Therefore, by Theorem \ref{thm::sample complexity} and Proposition \ref{prop::sample complexity MLP}, we get their excess risk. This proves Proposition \ref{prop:restatement of exp function of xi}.
\end{proof}

\begin{proposition}[Restatement of Example \ref{exp:general correlation functions}]
\label{prop:restatement of general correlation functions}
For
$f_{1, \star} = \frac{1}{N} \sum_{i=1}^N \<\bx_0,\bx_i\>^p$, the excess risk bound of \RFA{} (by Theorem~\ref{thm::sample complexity}) and \RFMLP{} scale as 
\[
\textstyle
\RFA{}: \tO\paren{\Poly(p) \sqrt{ (4d)^p / n} },~~~~~~~ \RFMLP{}: \tO\paren{\Poly(p) \sqrt{ [(N+2)d]^p / n} }.
\]
For $f_{2,\star} =  \frac{1}{N} \sum_{i=1}^N \cos(\<\bx_0,\bx_i\>)\<\bx_i^{\otimes p},\bG\>$ with $\lfros{\bG}=1$. Then the excess risk bound of \RFA{} and \RFMLP{} scale as 
\[\textstyle
\RFA{}: \tO\paren{\Poly(p d) \sqrt{ e^{4\sqrt{d}} 4^p / n}}, ~~~~~ \RFMLP{}: \tO\paren{\Poly(p N d) \sqrt{ e^{2(N+2)d} (N+2)^p /n}}.
\]
\end{proposition}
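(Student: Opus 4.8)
The plan is to read off both bounds as immediate consequences of the general sample-complexity results already in hand: Theorem~\ref{thm::sample complexity} for the \RFA{} model and Proposition~\ref{prop::sample complexity MLP} for the \RFMLP{} model. In each case, once we choose $M=\Theta(d^2 n)$ the term $\sqrt{d^2\delta^{-1}/M}$ is absorbed by $\sqrt{1/n}$, so the excess risk collapses to $\tO(\sqrt{\rkhsbound/n})$ for \RFA{} and $\tO(\sqrt{\mlpbound/n})$ for \RFMLP{}. Hence the whole task is to evaluate the two complexity measures on $f_{1,\star}$ and $f_{2,\star}$. Since both are correlation-weighted functions with $\bS=\id$, I would invoke the closed form from Example~\ref{exp:general correlation functions}, namely $\rkhsbound=\cO\big(\sum_k C_k\sum_{r+s=k} a_r^2\,\|\bS\|_{\Fr}^{2r}\,\|\bG_s\|_{\Fr}^2\big)$ with $C_k=k^{4.5}4^k$, and its \RFMLP{} analogue in which $C_k$ is replaced by the $(N+2)$-inflated constant $\wt C_k$ from Proposition~\ref{prop::sample complexity MLP}.

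For $f_{1,\star}=\frac1N\sum_i\langle\bx_0,\bx_i\rangle^p$ the tensor expansion has a single nonzero term: $F(t)=t^p$ gives $a_p=1$, and $G\equiv 1$ gives $\|\bG_0\|_{\Fr}=1$. The coefficient tensor representing $\langle\bx_0,\bx_i\rangle^p=\langle\bx_0^{\otimes p},\bx_i^{\otimes p}\rangle$ is the identity on $(\R^d)^{\otimes p}$, so its squared Frobenius norm is exactly $d^p$, equivalently $\|\bS\|_{\Fr}^{2p}=d^p$. Substituting into the two formulas yields $\rkhsbound=\Poly(p)\,(4d)^p$ and the corresponding $N$-dependent value for \RFMLP{}, which produce the claimed pair of bounds and the $(N/4)^p$ gap. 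For $f_{2,\star}=\frac1N\sum_i\cos(\langle\bx_0,\bx_i\rangle)\langle\bx_i^{\otimes p},\bG\rangle$ I would first Taylor-expand the cosine, so that $a_{2k}=(-1)^k/(2k)!$ (odd coefficients vanish) and only $s=p$ survives with $\|\bG\|_{\Fr}=1$. The correlation power $\langle\bx_0,\bx_i\rangle^{2k}$ again contributes $\|\id\|_{\Fr}^{2(2k)}=d^{2k}$, giving
\[
\rkhsbound=\cO\Big(4^p\sum_{k\ge 0}(2k+p)^{4.5}\,\frac{(4d)^{2k}}{((2k)!)^2}\Big).
\]
The decisive analytic step is to recognize the tail as a modified-Bessel series: after pulling the slowly varying factor $(2k+p)^{4.5}$ out at the dominant scale $2k\approx\sqrt d$, the sum matches $\sum_m (4d)^m/(m!)^2=I_0(4\sqrt d)=\Theta(e^{4\sqrt d}/\Poly(d))$, so $\rkhsbound=\Poly(pd)\,4^p e^{4\sqrt d}$, which is exactly the stated \RFA{} rate.

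The \RFMLP{} bounds would follow from the identical scheme applied to the \RFMLP{} kernel expansion (Proposition~\ref{prop::sample complexity MLP}), the only difference being that the $(N+2)^{-1}$-scaled random weights inflate the relevant constants: $4^{2k}d^{2k}$ is replaced by the $(N+2)$-weighted version, turning $e^{4\sqrt d}$ into $e^{\Theta((N+2)d)}$ and the $(4d)^p$ into $[(N+2)d]^p$. This is where I expect the real work to lie. The main obstacle is twofold. First, the Frobenius-norm bookkeeping for the symmetric coefficient tensors must be done with care, because the relevant symmetrizations differ between the two models (separately over the $\bx_0$- and $\bx_i$-blocks for \RFA{} versus over all of $\operatorname{vec}(\bx_{0:N})^{\otimes\ell}$ for \RFMLP{}), and the averaging $\frac1N\sum_i$ interacts with the block-orthogonality of coordinates in a way that controls the exact power of $(N+2)$; getting that power right is precisely what drives the claimed separation. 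Second, for $f_{2,\star}$ one must justify interchanging the infinite cosine series with the complexity functional and extract the sharp exponent $e^{\Theta(\sqrt d)}$ rather than a crude bound, which requires matching the $((2k)!)^2$ denominator against $(4d)^{2k}$ tightly enough to land on the Bessel asymptotics. The \RFMLP{} direction is the more delicate of the two, since its complexity constant is the sole carrier of the $N$-dependence.
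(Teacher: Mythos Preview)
Your plan is correct and essentially identical to the paper's proof: both reduce to computing $B(f_\star)$ and $B_{\rm MLP}(f_\star)$ via Theorem~\ref{thm::sample complexity} and Proposition~\ref{prop::sample complexity MLP}, handle $f_{1,\star}$ by reading off $\|\bI_d^{\otimes p}\|_{\Fr}^2=d^p$, and handle $f_{2,\star}$ by Taylor-expanding the cosine and summing $\sum_k (4d)^{2k}/((2k)!)^2$.

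Two small remarks. First, the paper bounds the series by the elementary inequality $\sum_k a_k^2 \le (\sum_k a_k)^2$ with $a_k=z^k/(2k)!$, giving $\cosh(\sqrt z)^2=\Theta(e^{2\sqrt z})$ at $z=4d$; your Bessel route reaches the same $e^{4\sqrt d}$, but note your sum $\sum_m(4d)^m/(m!)^2$ contains all indices while the one you need has only even $m$ --- this is harmless since you only want an upper bound, but it is worth stating. Second, you overestimate the difficulty of the \RFMLP{} side: there is no separate symmetrization or ``block-orthogonality'' argument to run. Once Proposition~\ref{prop::sample complexity MLP} supplies $\mlpbound=\sum_\ell \wt C_\ell\sum_{r+s=\ell}\|\mbf_{rs}\|_{\Fr}^2$, the \RFMLP{} computation is literally the \RFA{} one with $C_\ell$ replaced by $\wt C_\ell$, so the $(N+2)$-dependence drops out mechanically without any additional tensor bookkeeping.
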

\begin{proof}[Proof of Proposition \ref{prop:restatement of general correlation functions}]
For $f_{1,\star}$, direct calculation gives the value of $B(f_{1,\star})$ and the excess risk follows. 
For $f_{2,\star}$, using the Taylor expansion of $\cos(z)$, we get
\[
f_{2,\star} = \frac{1}{N} \sum_{i=1}^N  \sum_{k=0}^\infty (-1)^k \frac{\<\bx_0,\bx_i\>^{2k}}{(2k)!} \<\bx_i^{\otimes p},\bG\> = \frac{1}{N} \sum_{i=1}^N \sum_{k=0}^\infty \< (\bx_0 \bx_i^\top)^{\otimes 2k} \otimes \bx_i^{\otimes p} ,\frac{(-1)^k}{(2k)!} \bI_{d}^{\otimes 2k}\otimes\bG\>.
\]
Plug it into the formula of $\rkhsbound$ \eqref{eqn:bfstar} and $\mlpbound$ \eqref{eqn:better bound for mlp}, we get
\begin{align*}
    B(f_{2,\star}) = 4^p \Poly(p) \sum_{k=0}^\infty (2k)^{4.5} 4^{2k}\frac{d^{2k}}{((2k)!)^2} = 4^p \Poly(pd) \Theta\Big(\sum_{k=0}^\infty \frac{(4d)^{2k}}{((2k)!)^2}\Big).
\end{align*}
Note that for any $z>0$,
\begin{equation}\label{eqn:upper bound expansion arctan}
    \sum_{k=0}^\infty \frac{(z)^{2k}}{((2k)!)^2} \le  \bigbrac{\sum_{k=0}^\infty \frac{(\sqrt{z})^{2k}}{(2k)!}}^2 = \Theta\Big(e^{2\sqrt{z}}\Big).
\end{equation}
Plug $z=4d$ into \eqref{eqn:upper bound expansion arctan} for $B(f_{2,\star})$ gives the excess risk for \RFA~model. As for $B_{\rm MLP}(f_{2,\star})$, we have
\[
B_{\rm MLP}(f_{2,\star}) = \Poly(p) \sum_{k=0}^\infty (N+2)^p (4k)^{4.5} (N+2)^{4k}\frac{d^{2k}}{((2k)!)^2} = 4^p \Poly(pdN) \Theta\Big(\sum_{k=0}^\infty \frac{((N+2)\sqrt{d})^{4k}}{((2k)!)^2} \Big).
\]
Using similar argument, we get that $B_{\rm MLP}(f_{2,\star})$ is bounded by $\Theta(\Poly(pdN)4^p \exp(2{(N+2)\sqrt{d}}))$. Using Theorem \ref{thm::sample complexity} and Proposition \ref{prop::sample complexity MLP}, we can get the excess risk bound. This proves Proposition \ref{prop:restatement of general correlation functions}.
\end{proof}

\section{Proofs for Section \ref{section:bias attention}}
\label{sec::bias sample complexity proof}

We consider the empirical risk minimizer over the \BRFA~model \eqref{eqn:RF_attention_bias}, 
\begin{equation}
\textstyle \hat\bV = \argmin_{\bV \in \mathcal{V}_M} \hat{L}_D(\tfwmb(\cdot;\bV)), ~~~~~~~~~ \hat{L}_D(f) = \dn \sum_{j=1}^n \ell(f(\sequenceX{(j)}),y_j), \label{eqn:BRFA ERM fomula}
\end{equation}
where the constrained class $\mathcal{V}_M$ gives \eqref{equ::ERM constraint}, copied here for reader's convenience, 
\begin{equation}
\textstyle \cV_M = \left\{\bV = \set{\bv_m}_{m=1}^M : \,\, \sum_{m=1}^M \ltwo{\bv_m} \le \Bvone , \sum_{m=1}^M \ltwo{\bv_m}^2 \le \Bvtwo /M\right\}. \label{eqn:BRFA ERM constraint}
\end{equation}
Denote $\cG$ to be the set of all functions in the function class \eqref{eqn:bias rf hard}, i.e., 
\begin{equation}\label{eqn:target_function_class_G}
\cG = \Big\{  \fhard = \frac{1}{N} \sum_{i=1}^N F(\langle \bx_0,\bx_i \rangle) G(\bx_0,\bx_i):
F(t) = \sum_{k=0}^\infty a_k t^k, G = \<\tbx_i^{\otimes 3} \otimes \tbx_0^{\otimes 2}, \bA_\star\> \Big\}.
\end{equation}
We restate Theorem \ref{thm::bias sample complexity} in Theorem \ref{thm::sample complexity biased appendix} with detailed assumptions.

\begin{theorem}[Restatement of Theorem \ref{thm::bias sample complexity}]
\label{thm::sample complexity biased appendix}
Assume $M > \delta^{-1}$, $n > \log(dM)$ and let $L\in \Z_{\ge 1}$. Let $g_\star = \arg\min_{g \in \cG}\populoss{g}$ be the population risk minimizer within the target function class $\cG$ \eqref{eqn:target_function_class_G}.
Take $\Bvone = C \sqrt{B\paren{\fhard,L}}$ and $\Bvtwo = C B(\fhard,L) \delta^{-1}$ in \eqref{eqn:BRFA ERM constraint}, with $C$ being a constant. Let $\hat{f}^{\bW,\bW_0}_{M} = \tfwmb(\cdot;\hat\bV)$ be empirical risk minimizer given by \eqref{eqn:BRFA ERM fomula}. Then for any joint distribution $\sfP$, with probability at least $1-\delta$ over $\sets{\bW_m}_{m\in[M]}$ sampled according to \eqref{eqn:w-distribution} and $\sets{(\sequenceX{(j)},y_j)}_{j\in[n]}\sim_{iid} \sfP$, the excess risk is bounded by %
\begin{align*}
&~\populoss{\hat{f}^{\bW,\bW_0}_{M}} -  L_D(\fhard) \nonumber\\
\lesssim&~ \inf_{L}\Bigg\{  \sqrt{B(\fhard, L)} \Big[ \sqrt{\frac{{\log (dM) \log (nNM)} +  \log (\delta^{-1})}{{n}}} + \sqrt{\frac{(d^2 + \log M) \delta^{-1}}{M}} \Big] + \eps_L \linf{\fhard} \Bigg\},
\end{align*}
where $\epsilon_L = 1/[2^{L+1}(L+1)!]$ and
\begin{align}
\textstyle
    B(\fhard, L) = \lfro{\bA_\star}^2 \cdot 
( \sum_{k=0}^\infty | a_k | \cdot C_k )^2, ~~~~\text{\rm with } C_k = (2L+k)^{(k+3)/2} 8^{L+ k/2}. 
\label{eqn:bfstar for biased appendix}
\end{align}
\end{theorem}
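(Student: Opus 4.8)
The plan is to reduce everything to the generic random-feature machinery already built for Theorem~\ref{thm::sample complexity}, and then supply the one genuinely new ingredient: an RKHS approximation of $\fhard$, which—unlike the realizable target of Theorem~\ref{thm::sample complexity}—does not lie exactly in the relevant RKHS. First I would observe that the \BRFA~model~\eqref{eqn:RF_attention_bias} is an instance of the generic \RF~model~\eqref{eqn:finite RF} with activation $\bar\sigma(\bW,\bX)=\sigma(\langle\bW_0+\bW,\bX\rangle)$. Since $\sigma$ is $1$-Lipschitz and $\bW_0$ is a fixed bounded matrix, this $\bar\sigma$ meets the three conditions ($L_1,L_2,L_3$ bounds) of the section preamble, so Lemma~\ref{lem::rf and finite neuron} and Proposition~\ref{prop::empirical process} apply verbatim. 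Hence the decomposition \eqref{align::complexity line 1}--\eqref{align::complexity line 5} carries over, bounding the excess risk by a generalization term scaling with $\Bvone,\Bvtwo$, a finite-width term, and the distance from $\fhard$ to its best infinite-width approximant $\feasy$, together with the extra bias $L_D(\feasy)-L_D(\fhard)\le\linf{\feasy-\fhard}$ coming from $\fhard\notin\mc{H}_{K_{\BRFA}}$ and the Lipschitzness of $\ell$.

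Next I would compute the feature map of the infinite-width \BRFA~kernel. The pre-activation splits as $\langle\bW_0+\bW,\tbx_0\tbx_i^\top\rangle=\langle\bx_0,\bx_i\rangle+\langle\bW,\tbx_0\tbx_i^\top\rangle$, whose second summand is $\normal(0,1)$; thus $\bar\sigma$ is a \emph{shifted} ReLU with shift $c_i=\langle\bx_0,\bx_i\rangle$, and Lemma~\ref{clm::relu with hermite} gives its Hermite expansion. Substituting into $K_{\BRFA}(\bx_{0:N},\bx_{0:N}')=\tfrac1{N^2}\sum_{i,j}\E_{\bW}[\bar\sigma(\bW,\tbx_0\tbx_i^\top)\bar\sigma(\bW,\tbx_0'(\tbx_j')^\top)]\langle\tbx_i,\tbx_j'\rangle$ and applying the Hermite product identity (Lemma~\ref{lemma:Hermite inner product}) collapses the cross terms and exposes the feature map $\{\Psi_k\}_{k\ge2}$ of Section~\ref{sec:overview_technique}, carrying $\phi(\langle\bx_0,\bx_i\rangle)\He_{k-2}(\langle\bx_0,\bx_i\rangle)$ together with the tensor $\tbx_i^{\otimes k+1}\otimes\tbx_0^{\otimes k}$. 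Reading off the normalizations (the $(k-2)!$ from the Hermite inner product, the $4^{-k}$ from the variance) and invoking the feature-map equivalence~\eqref{eqn:feature map equi}, exactly as in Lemma~\ref{lem:RKHS_norm_target_function_Theorem1}, shows $\feasy$~\eqref{eqn:bias rf easy} lies in the RKHS with norm at most $B(\feasy)=\sum_{k\ge2}(k-2)!\,k^2 4^k\lfro{\bA_k}^2$.

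The crux is approximating $\fhard$ by a member of this RKHS, and here I expect the main obstacle. The feature map carries the factor $\phi(\langle\bx_0,\bx_i\rangle)$ absent from $\fhard$; to compensate I would multiply $F$ by the degree-$2L$ truncation $P_L(t)=\sqrt{2\pi}\sum_{m=0}^{L}t^{2m}/(2^m m!)$ of $1/\phi(t)=\sqrt{2\pi}e^{t^2/2}$, and let $\feasy$ represent $\tfrac1N\sum_i\phi(\langle\bx_0,\bx_i\rangle)\,F(\langle\bx_0,\bx_i\rangle)P_L(\langle\bx_0,\bx_i\rangle)\,G(\bx_0,\bx_i)$. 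Since the feature already supplies $\He_{k-2}(\langle\bx_0,\bx_i\rangle)$, I would Hermite-expand the scalar polynomial $F\cdot P_L=\sum_j b_j\He_j$ and set $\bA_{j+2}=b_j\,\bA_\star\otimes\be_{d+1}^{\otimes 2j}$, padding the degree-$(3,2)$ base tensor $\bA_\star$ with $2j$ copies of the selector $\be_{d+1}$ extracting the appended $1$-coordinate of $\tbx$; this is precisely why $G$ must have the degree-$(3,2)$ form matching the lowest feature $\Psi_2$. For the $L^\infty$ error, $|t|\le1$ and $e^{-t^2/2}\le1$ give $|1-\phi(t)P_L(t)|\lesssim\eps_L$ with $\eps_L=1/[2^{L+1}(L+1)!]$, so $\linf{\fhard-\feasy}\lesssim\eps_L\linf{\fhard}$. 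The hard part is the RKHS norm: because $1/\phi\notin L^2(\R,\phi)$, the untruncated expansion diverges, and the feature weight $(k-2)!\,4^k$ grows super-exponentially, forcing a delicate trade-off in $L$. Bounding the per-monomial contribution of $t^kP_L$ to $\sum_j j!(j+2)^2 4^{j+2}b_j^2$—via the Hermite expansion of monomials (Lemma~\ref{clm:Inverse explicit expression}) and Stirling estimates on $(2L+k)!/(L!)^2$—yields $C_k=(2L+k)^{(k+3)/2}8^{L+k/2}$, and a Cauchy--Schwarz split over $k$ gives $B(\feasy)\le B(\fhard,L)=\lfro{\bA_\star}^2(\sum_k|a_k|C_k)^2$.

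Finally I would set $\Bvone=C\sqrt{B(\fhard,L)}$ and $\Bvtwo=C\,B(\fhard,L)\,\delta^{-1}$, substitute the generalization bound (Proposition~\ref{prop::empirical process}), the finite-width error (Lemma~\ref{lem::rf and finite neuron}, with $\rkhsbound$ replaced by $B(\feasy)$), and the bias $\eps_L\linf{\fhard}$ into the decomposition, and take the infimum over $L\in\Z_{\ge1}$, obtaining the stated bound. The trade-off between the super-exponential growth of $C_k$ in $L$ and the doubly-factorial decay of $\eps_L$ is exactly what the final $\inf_L$ optimizes.
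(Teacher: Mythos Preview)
Your proposal is correct and follows essentially the same route as the paper: reduce \BRFA{} to the generic \RF{} framework so Lemma~\ref{lem::rf and finite neuron} and Proposition~\ref{prop::empirical process} apply, derive the kernel via the shifted-ReLU Hermite expansion (Lemma~\ref{clm::relu with hermite}) to obtain the feature map carrying $\phi(\langle\bx_0,\bx_i\rangle)\He_{k-2}(\langle\bx_0,\bx_i\rangle)$, then approximate $\fhard$ by multiplying in the degree-$2L$ Taylor truncation of $1/\phi$ and Hermite-expanding the resulting polynomial via Lemma~\ref{clm:Inverse explicit expression}. One minor remark: the passage from per-monomial RKHS norms $|a_k|\,C_k\lfro{\bA_\star}$ to the squared bound $(\sum_k|a_k|C_k)^2\lfro{\bA_\star}^2$ is the triangle inequality in the RKHS (adding the approximants $\bv_{k,L}$ and then squaring), not Cauchy--Schwarz; the paper does exactly this in its Lemma~\ref{lemma:bias approx}.
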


\subsection{Auxiliary results for the proof of Theorem \ref{thm::bias sample complexity}}\label{sec:pre bias sample complexity}

In this section, we give some auxiliary results used in the proof of Theorem \ref{thm::bias sample complexity}. The proof will be given in Section \ref{sec:prf bias sample complexity}. We define the biased transformer with infinite width (informally corresponding to Eq. \eqref{eqn:RF_attention_bias} with $M\to\infty$), given by
\begin{align}
    \fwz_{\bv}(\sequenceX{}) &= \frac{1}{N}\sum_{i = 1}^N \E_{\bW}\brac{ \sigma\paren{\<\bW +\bW_0, \tbX\>} \<\bv(\bW), \tbx_i\>} \label{eqn:infinite_bias_RF}\\
    &= \frac{1}{N}\sum_{i = 1}^N \E_{\bW}\brac{ \sigma\paren{\<\bW, \tbX\>+h_i} \<\bv(\bW), \tbx_i\>},  \nonumber
\end{align}
where $h_i \defeq \langle \bW_0,\tbX \rangle$ and $\bW$ is sampled according to \eqref{eqn:w-distribution}. Here we set $\bW_0 = [ \bId, \bzero_{d \times 1}; \bzero_{1 \times d}, 0]$. Then $h_i=\<\bx_0,\bx_i\>$.

We consider a class of target functions $\feasy: \mathcal{X} \to \mathbb{R}$ that takes form %
\begin{equation}
    \feasy(\sequenceX{}) = \frac{1}{N} \sum_{i=1}^N \sum_{\ell = 0}^\infty \< u_\ell\paren{\tbx_0,\tbx_i}, \bD_\ell \>, \label{equ::rf biased target function}
\end{equation}
for some coefficients $\sets{\bD_\ell \in \mathbb{R}^{(d+1)^{2\ell + 1}}}_{\ell \ge 0}$ and $$u_\ell\paren{\tbx_0, \tbx_i} = \left\{\begin{matrix}
    \brac{h_i \Phi\paren{h_i } +  \phi\paren{h_i }} \tbx_i & \paren{\ell=0}, \\
     \Phi\paren{h_i } \tbX \otimes \tbx_i & \paren{\ell=1}, \\
     \phi\paren{h_i } \He_{\ell-2}\paren{h_i } (\tbX)^{\otimes \ell} \otimes \tbx_i & \paren{\ell\geq 2}, \\
\end{matrix} \right.$$
where $\phi$ and $\Phi$ are the PDF and CDF of the standard Gaussian random variable, respectively. Lemma \ref{lemma:bias expansion} below provides a counterpart of Lemma \ref{lem:RKHS_norm_target_function_Theorem1} for the \BRFA~model and the target function \eqref{equ::rf biased target function}.
\begin{lemma}\label{lemma:bias expansion}
    Any function $\feasy$ of form~\eqref{equ::rf biased target function} can be expressed exactly as an infinite-head random feature attention model \eqref{eqn:infinite_bias_RF} %
\begin{equation}\label{equ::rf bias v}
            \feasy(\sequenceX{}) = \fwz_\bv(\sequenceX{}) = \frac{1}{N}\sum_{i = 1}^N \E_{\bW}\brac{ \sigma\paren{\<\bW + \bW_0, \tbX\>} \<\bv(\bW), \tbx_i\>},
    \end{equation}
    where the coefficients $\bv(\cdot)$ satisfy 
    \begin{equation*}
        \E_\bW\brac{\ltwo{\bv\paren{\bW}}^2} \le \sum_{\ell \ge 0} 4^\ell\ell! \lfro{\bD_\ell}^2.
    \end{equation*}
\end{lemma}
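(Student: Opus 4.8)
The plan is to mirror the proof of Lemma~\ref{lem:RKHS_norm_target_function_Theorem1}: first compute the reproducing kernel $K_{\BRFA}$ of the infinite-width biased model~\eqref{eqn:infinite_bias_RF}, read off an explicit Hermite feature map from its expansion, and then express $\feasy$ as an explicit linear functional of that feature map, so that its RKHS norm --- equivalently, the minimal $\E_\bW[\ltwos{\bv(\bW)}^2]$ by the feature-map equivalence~\eqref{eqn:feature map equi} and the identification of $\mc{F}$ with $\mc{H}_K$ --- can be read off directly. Throughout I view~\eqref{eqn:infinite_bias_RF} as a vector-valued random-feature model with feature $\bsigma(\sequenceX{};\bW) = \tfrac1N\sum_i\sigma(\<\bW+\bW_0,\tbX\>)\tbx_i\in\R^{d+1}$, so that $\fwz_\bv = \<\bv,\bsigma\>_{\mc{H}_\mc{W}}$ and $K_{\BRFA}(\sequenceX{},\sequenceXprime) = \E_\bW[\bsigma(\sequenceX{};\bW)^\top\bsigma(\sequenceXprime;\bW)]$.

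The single-term computation is the crux. For fixed $i,j$ the pair $Z=\<\bW,\tbX\>$, $Z'=\<\bW,\tbXp\>$ is jointly Gaussian with unit marginal variances (since $\lfro{\tbX}^2=4$ and the entrywise variance is $1/4$) and correlation $\rho_{ij}=\<\tbX,\tbXp\>/4$. Because $\<\bW_0,\tbX\> = \<\bx_0,\bx_i\> = h_i$ under the choice $\bW_0=[\bId,\bzero;\bzero,0]$, I expand $\sigma(Z+h_i)$ and $\sigma(Z'+h_j')$ by the shifted-ReLU Hermite formula of Lemma~\ref{clm::relu with hermite}, whose coefficients are $b_0(c)=c\Phi(c)+\phi(c)$, $b_1(c)=\Phi(c)$, and $b_n(c)=\tfrac{(-1)^n}{n!}\phi(c)\He_{n-2}(c)$ for $n\ge2$. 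Applying the Hermite orthogonality relation $\E[\He_m(Z)\He_n(Z')]=n!\rho_{ij}^n\mathbf{1}_{m=n}$ (Section~\ref{sec:Hermite}) and the identity $\rho_{ij}^n=4^{-n}\<\tbX^{\otimes n},(\tbXp)^{\otimes n}\>$, then multiplying by $\<\tbx_i,\tbx_j'\>$ and summing over $i,j$, yields
\begin{equation*}
K_{\BRFA}(\sequenceX{},\sequenceXprime) = \sum_{n\ge0}\big\<\Psi_n(\sequenceX{}),\Psi_n(\sequenceXprime)\big\>,\qquad \Psi_n(\sequenceX{}) = \frac{\sqrt{n!}}{2^{n}N}\sum_{i=1}^N b_n(h_i)\,\tbX^{\otimes n}\otimes\tbx_i .
\end{equation*}

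With $\{\Psi_n\}$ identified, I match each layer of $\feasy$ to it. Comparing $u_\ell$ with $b_\ell$ gives $u_0 = b_0(h_i)\tbx_i$, $u_1 = b_1(h_i)\tbX\otimes\tbx_i$, and $u_\ell = (-1)^\ell\ell!\,b_\ell(h_i)\tbX^{\otimes\ell}\otimes\tbx_i$ for $\ell\ge2$, so that $\tfrac1N\sum_i u_\ell = \gamma_\ell\, 2^\ell(\ell!)^{-1/2}\Psi_\ell$ with $\gamma_0=\gamma_1=1$ and $\gamma_\ell=(-1)^\ell\ell!$. Hence $\feasy = \<\bc,\Psi\>$ with $\bc_\ell = \gamma_\ell\, 2^\ell(\ell!)^{-1/2}\bD_\ell$, and by~\eqref{eqn:feature map equi} its RKHS norm obeys $\norm{\feasy}_{\mc{H}_K}^2\le\sum_\ell\ltwos{\bc_\ell}^2 = \sum_\ell \gamma_\ell^2\, 4^\ell(\ell!)^{-1}\lfro{\bD_\ell}^2$. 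Evaluating $\gamma_\ell^2\, 4^\ell/\ell!$ collapses to exactly $4^\ell\ell!$ in all three regimes, producing $\sum_\ell 4^\ell\ell!\lfro{\bD_\ell}^2$; applying~\eqref{eqn:feature map equi} once more to the \emph{original} feature map $\bsigma$ (which generates the same kernel) then produces a representer $\bv$ with $\E_\bW[\ltwos{\bv(\bW)}^2] = \norm{\feasy}_{\mc{H}_K}^2$ bounded by this quantity, which is the claim.

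The main obstacle is the bookkeeping in the kernel computation: one must carefully track the $4^{-n}$ arising from the variance normalization and the $\sqrt{n!}$ from Hermite orthogonality, and verify that the three piecewise definitions of $u_\ell$ all reduce to the single uniform constant $4^\ell\ell!$ after dividing out the feature-map normalization --- the $\ell\ge2$ case in particular hinges on the exact $1/\ell!$ factor in $b_\ell$ cancelling against $\gamma_\ell=(-1)^\ell\ell!$. A secondary point to confirm is that the shift inside $\sigma(\<\bW+\bW_0,\tbX\>)$ is exactly the scalar $h_i=\<\bx_0,\bx_i\>$, so that Lemma~\ref{clm::relu with hermite} applies verbatim; this uses the specific form of $\bW_0$ together with the unit-norm assumption on the tokens.
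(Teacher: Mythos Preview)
Your proposal is correct and follows essentially the same approach as the paper: compute the kernel via the Hermite expansion of the shifted ReLU from Lemma~\ref{clm::relu with hermite}, extract an explicit orthogonal feature map (your $\Psi_n$ differ from the paper's $\varphi_\ell$ only by the harmless sign $(-1)^\ell$), and invoke the feature-map equivalence~\eqref{eqn:feature map equi} to bound the RKHS norm, arriving at the same constant $4^\ell\ell!$. The bookkeeping you flag --- the $4^{-n}$ normalization, the $\sqrt{n!}$ from Hermite orthogonality, and the cancellation of $\gamma_\ell=(-1)^\ell\ell!$ against the $1/\ell!$ in $b_\ell$ --- is handled correctly.
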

Given Lemma \ref{lemma:bias expansion}, we can get a counterpart of Theorem \ref{thm:finite-width} for the \BRFA~model as Proposition \ref{prop:finite width biased} below. 
\begin{proposition}[Counterpart of Theorem \ref{thm:finite-width} for {\BRFA}]
\label{prop:finite width biased}
Suppose function $\feasy:\cX\to\R$ takes form~\eqref{equ::rf biased target function}. Then for any input distribution $\Px$ on $\cX$, with probability at least $1-\delta$ (over $\sets{\bW_m}_{m\in[M]}$ sampled from~\eqref{eqn:w-distribution}), there exists an $M$-head \BRFA~model (\ref{eqn:RF_attention_bias}) with coefficients $\bV = \sets{\bv_m}_{m\in[M]}\subseteq \R^{d+1}$ that approximates $\feasy$ in $L^2(\Px)$ up to error
\begin{align}
    \E_{\bx_{0:N}\sim \Px}\brac{ \paren{ \feasy(\bx_{0:N}) -\tfwm(\bx_{0:N}; \bV) }^2 } \le \cO\bigg(\frac{ (d^2 + \log M) B(\feasy) \delta^{-1}}{ M} \bigg). \label{eqn:approx Id}
\end{align}
In addition, the norms of the weight of this random-feature attention model are bounded as
\begin{align}
    \sum_{m=1}^M \ltwo{\bv_m} \le \cO \bigg( \sqrt{B(\feasy)} + \sqrt{\frac{ B(\feasy) \delta^{-1}}{M}} \bigg),~~~~~~ \sum_{m=1}^M \ltwo{\bv_m}^2 \le \cO \bigg( \frac{ B(\feasy) \delta^{-1}}{M} \bigg). \label{eqn:norm bound of v Id}
\end{align}
Here $B(\feasy)$ is defined alternatively as 
\begin{align}
\label{eqn:bfstar-2}
\textstyle B(\feasy) = \sum_{k=0}^\infty C_{k} \lfros{\bD_k}^2, ~~~~\text{with}~~~~~ C_k = k!4^k\vee 1.
\end{align}
\end{proposition}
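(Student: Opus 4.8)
The plan is to mirror the proof of Theorem~\ref{thm:finite-width} verbatim, with Lemma~\ref{lemma:bias expansion} playing the role that Lemma~\ref{lem:RKHS_norm_target_function_Theorem1} played there, and then feeding the resulting infinite-width representation into the generic finite-width sampling bound of Lemma~\ref{lem::rf and finite neuron}. Concretely, I would first invoke Lemma~\ref{lemma:bias expansion} to write any $\feasy$ of the form~\eqref{equ::rf biased target function} as the infinite-width biased model~\eqref{eqn:infinite_bias_RF} with a coefficient function $\bv(\cdot)$ obeying $\E_\bW[\ltwo{\bv(\bW)}^2] \le \sum_{\ell\ge 0} 4^\ell \ell!\, \lfro{\bD_\ell}^2$. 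Since $C_\ell = \ell!\,4^\ell \vee 1 = \ell!\,4^\ell$ for every $\ell \ge 0$ (because $\ell!\,4^\ell \ge 1$ always), this upper bound is \emph{exactly} $B(\feasy)$ as defined in~\eqref{eqn:bfstar-2}, so I obtain an infinite-width representation with $\E_\bW[\ltwo{\bv(\bW)}^2] \le B(\feasy)$, which I will take as $R^2$.

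The second step is to recognize the biased model~\eqref{eqn:infinite_bias_RF} as an instance of the generic infinite-width \RF{} model~\eqref{eqn:infinite RF} under the identification $\bar\sigma(\bW, \bX) = \sigma(\langle \bW_0 + \bW, \bX\rangle)$, so that Lemma~\ref{lem::rf and finite neuron} applies once I verify its three hypotheses with universal constants $(L_1, L_2, L_3)$. This is the one place that needs care, since I must confirm that adding the bias $\bW_0$ does not introduce a $d$-dependence. Lipschitzness in $\bW$ is immediate with $L_1 = \lfro{\bX} \le 2$, because ReLU is $1$-Lipschitz and the pre-activation is affine in $\bW$ with gradient $\bX$. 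The expectation bound and the value at $\bW = \bzero$ are the delicate ones: naively $\langle \bW_0, \bX\rangle$ could be as large as $\lfro{\bW_0}\lfro{\bX} = 2\sqrt{d}$ over all of $\cD$. The resolution is that Lemma~\ref{lem::rf and finite neuron} only ever evaluates $\bar\sigma$ at the rank-one inputs $\bX = \tbx_0\tbx_i^\top$, for which the unit-norm token assumption gives $\langle \bW_0, \tbx_0\tbx_i^\top\rangle = \langle \bx_0, \bx_i\rangle \in [-1,1]$. With this bounded shift, writing $Z = \langle \bW, \bX\rangle \sim \normal(0,1)$ yields $\E_\bW[\bar\sigma(\bW,\bX)] = \E[\sigma(\langle\bx_0,\bx_i\rangle + Z)] \le 1 + \E[\sigma(Z)]$, a universal $L_2$, and $\bar\sigma(\bzero, \bX) = \sigma(\langle\bx_0,\bx_i\rangle) \le 1 = L_3$, both independent of $d$.

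The final step is purely mechanical: apply Lemma~\ref{lem::rf and finite neuron} with $R^2 = B(\feasy)$ and the verified constants. Its conclusion~\eqref{equ::finite neuron} is precisely the claimed $L^2(\Px)$ approximation bound~\eqref{eqn:approx Id}, and~\eqref{equ::coeff for finite neuron} is precisely the pair of norm bounds~\eqref{eqn:norm bound of v Id}; the constant $C$ there depends only on $(L_1,L_2,L_3)$ and is therefore universal, matching the $\cO(\cdot)$ notation. I expect the only genuine obstacle to be the bookkeeping in the second step---verifying the regularity of the shifted-ReLU feature and, in particular, justifying that the quantification over $\cD$ in Lemma~\ref{lem::rf and finite neuron} can be safely restricted to the rank-one token inputs, so that the bias shift stays $O(1)$ rather than $O(\sqrt{d})$. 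All the substantive analytic work (the Hermite-based feature-map computation controlling $B(\feasy)$) has already been discharged inside Lemma~\ref{lemma:bias expansion}, so beyond this verification the proposition follows immediately.
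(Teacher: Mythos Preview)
Your proposal is correct and follows the same approach as the paper, which simply states that Lemma~\ref{lem::rf and finite neuron} is applicable to the \BRFA{} model and then combines it with Lemma~\ref{lemma:bias expansion}. Your added care in verifying the three regularity hypotheses for the shifted activation $\bar\sigma(\bW,\bX)=\sigma(\langle \bW_0+\bW,\bX\rangle)$---and in particular your observation that the proof of Lemma~\ref{lem::rf and finite neuron} only ever evaluates $\bar\sigma$ at rank-one inputs $\tbx_0\tbx_i^\top$, so that the bias contributes only $\langle\bx_0,\bx_i\rangle\in[-1,1]$ rather than $O(\sqrt{d})$---is a point the paper glosses over but that is indeed needed for the constants to be universal.
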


Furthermore, we could approximate the target function in the form ~\eqref{eqn:bias rf hard}  to any precision by a function that takes form  ~\eqref{equ::rf bias v} , which is discussed in Lemma \ref{lemma:bias approx}.
\begin{lemma}\label{lemma:bias approx}
    For any target function $\fhard$ in the form of ~\eqref{eqn:bias rf hard}, and for any precision $\eps_\ell \defeq \frac{1}{2^{\ell+1}(\ell+1)!}$, there exists a function $\fwz_\bv$ in the form of ~\eqref{equ::rf bias v} such that
    \begin{equation*}
        \linf{\fhard-\fwz_\bv} \le \linf{\fhard}\eps_\ell, 
    \end{equation*}
    and
    \begin{equation*}
        \E_\bW\brac{\ltwo{\bv\paren{\bW}}^2} \le \paren{\sum_{p=0}^\infty a_p  8^{\ell+\frac{p+3}{2}} (2\ell+p)^{\frac{p+3}{2}}}^2\lfro{\bA_\star}^2,
    \end{equation*}
    where the tensor $\bA_\star \in \R^{d^5}$ parameterizes $\fhard$. 
\end{lemma}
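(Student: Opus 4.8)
The plan is to reduce the approximation of $\fhard$ to a one-dimensional problem for the scalar profile $F$, exploiting that the feature map behind $\fwz_\bv$ (Lemma~\ref{lemma:bias expansion}, form~\eqref{equ::rf biased target function}) supplies exactly the building blocks $\phi(h_i)\He_{\ell-2}(h_i)(\tbX)^{\otimes\ell}\otimes\tbx_i$, with $h_i=\langle\bx_0,\bx_i\rangle$. Writing $\fhard=\frac1N\sum_i F(h_i)\,\langle\tbx_i^{\otimes 3}\otimes\tbx_0^{\otimes 2},\bA_\star\rangle$, I would keep the tensor $G=\langle\tbx_i^{\otimes3}\otimes\tbx_0^{\otimes2},\bA_\star\rangle$ common to every term and approximate only the scalar $F(h_i)$ by an expression $\phi(h_i)\sum_{\ell\ge2}c_\ell\He_{\ell-2}(h_i)$. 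The coefficient tensors are then $\bD_\ell=c_\ell\,\wt\bA_\star^{(\ell)}$, where $\wt\bA_\star^{(\ell)}$ lifts $\bA_\star$ by tensoring with the constant coordinate $e_{d+1}$ of $\tbx_0$ and of $\tbx_i$ an equal number $\ell-2$ of times. Since the last coordinate of each $\tbx$ equals $1$, this lift preserves the value of $G$, and since it tensors with a unit vector it preserves the Frobenius norm, so $\lfro{\bD_\ell}=|c_\ell|\lfro{\bA_\star}$. This is precisely where the degree-$(3,2)$ restriction on $G$ enters: the smallest available feature ($\ell=2$) carries $\tbx_0$-degree $2$ and $\tbx_i$-degree $3$, which $G$ matches exactly, while larger $\ell$ are reached by padding.

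Matching requires $\sum_{\ell\ge2}c_\ell\He_{\ell-2}(t)=F(t)/\phi(t)=\sqrt{2\pi}\,e^{t^2/2}F(t)$, but $e^{t^2/2}$ has no usable Hermite series, so I would truncate. Let $L$ denote the truncation level (the integer called $\ell$ in the lemma statement), set $T_L(t)=\sum_{j=0}^{L}t^{2j}/(2^jj!)$, the order-$2L$ Taylor truncation of $e^{t^2/2}$, and define the scalar approximant $\widetilde F_L(t)=\phi(t)\sqrt{2\pi}\,T_L(t)F(t)=e^{-t^2/2}T_L(t)F(t)$, so that $\fwz_\bv=\frac1N\sum_i\widetilde F_L(h_i)G$ has form~\eqref{equ::rf biased target function} and is thus representable as~\eqref{equ::rf bias v} by Lemma~\ref{lemma:bias expansion}. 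Because all tokens are unit-norm we have $|h_i|\le1$, and the truncation tail obeys $|e^{t^2/2}-T_L(t)|=\sum_{j>L}t^{2j}/(2^jj!)\le\sum_{j>L}1/(2^jj!)\le2\eps_L$ on $[-1,1]$, the leading term being $\eps_L=1/[2^{L+1}(L+1)!]$. Combined with $e^{-t^2/2}\le1$ this yields the pointwise \emph{multiplicative} bound $|F(t)-\widetilde F_L(t)|\le\eps_L|F(t)|$, which propagates through the common factor $G$ to give $\linf{\fhard-\fwz_\bv}\le\eps_L\linf{\fhard}$ (here one uses that the token-level error is multiplicative, so the per-token errors aggregate to at most $\eps_L$ times the sup of the target).

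The principal work is the norm bound. By Lemma~\ref{lemma:bias expansion}, $\E_\bW[\ltwo{\bv(\bW)}^2]\le\sum_\ell4^\ell\ell!\lfro{\bD_\ell}^2=\lfro{\bA_\star}^2\sum_\ell4^\ell\ell!\,c_\ell^2$, so I must control $\big(\sum_\ell4^\ell\ell!c_\ell^2\big)^{1/2}$. I would read off $c_\ell$ by expanding $\sqrt{2\pi}\,T_L(t)F(t)=\sqrt{2\pi}\sum_{p,j}\frac{a_p}{2^jj!}t^{p+2j}$ into Hermite polynomials through Lemma~\ref{clm:Inverse explicit expression}, $t^n=n!\sum_m\He_{n-2m}(t)/(2^mm!(n-2m)!)$, and collecting the coefficient of $\He_{\ell-2}$. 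Minkowski's inequality in the weighted $\ell^2$-norm then gives $\big(\sum_\ell4^\ell\ell!c_\ell^2\big)^{1/2}\le\sum_p|a_p|\big(\sum_\ell4^\ell\ell!(c_\ell^{(p)})^2\big)^{1/2}$, with $c_\ell^{(p)}$ the contribution of the single monomial $t^p$. The per-$p$ estimate is the crux: one must simultaneously track the factorial in the weight $4^\ell\ell!$, the Taylor factor $1/(2^jj!)$ with $j\le L$, and the inverse-Hermite coefficients, over Hermite degrees reaching $2L+p$, and show each per-$p$ term is at most $8^{L+(p+3)/2}(2L+p)^{(p+3)/2}$; squaring and summing then reproduces $B(\fhard,L)$ of~\eqref{eqn:bfstar for biased appendix} up to an absolute constant. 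I expect this bookkeeping of nested factorials and powers—not any conceptual step—to be the main obstacle, the delicate point being that truncating at $j\le L$ is exactly what keeps the weighted sum finite, since the untruncated $e^{t^2/2}$ would force $\sum_\ell4^\ell\ell!c_\ell^2$ to diverge.
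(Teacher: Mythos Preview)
Your proposal is correct and follows essentially the same route as the paper: both truncate $e^{t^2/2}$ to its degree-$2L$ Taylor polynomial (your $T_L$ is the paper's $\psi_\ell$), multiply by $F$ and the Gaussian factor, expand the resulting polynomial in Hermite polynomials via Lemma~\ref{clm:Inverse explicit expression}, and then lift $\bA_\star$ by tensoring with $e_{d+1}^{\otimes 2(\ell-2)}$ to place each term into the form~\eqref{equ::rf biased target function} so that Lemma~\ref{lemma:bias expansion} applies. Your Minkowski step is exactly the paper's ``approximate each $t^p$ separately and add'' argument, and the factorial bookkeeping you flag as the main obstacle is precisely where the paper spends its effort, bounding the coefficients $c_{\ell,k,p_0}$ of $\psi_\ell(t)t^p$ and then the weighted sum $\sum_k 4^{2k+r+2}(2k+r+2)!\,c_{\ell,k,p_0}^2$ to obtain the per-$p$ bound $(2\ell+p)^{(p+3)/2}8^{\ell+(p+3)/2}$; up to minor constants (your $2\eps_L$ versus the paper's $e\cdot\eps_L$) the $L^\infty$ error argument is also the same.
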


\subsubsection{Proof of auxiliary lemmas}\label{prf:bias lemmas}
\begin{proof}[Proof of Lemma \ref{lemma:bias expansion}]~

The function \eqref{equ::rf bias v} can be interpreted as a linear function of the feature map
 $$\Psi (\sequenceX{})=\frac{1}{N}\sum_{i =1}^N\barsig\paren{\langle \bW, \tbX \rangle +h_i} \tbx_i. $$
So the kernel w.r.t. the feature map takes the form that
\begin{align*}
&~K\paren{\sequenceX{},\sequenceX{\prime}} \\
    = &~\E_{\bW}\brac{\langle \Psi (\sequenceX{}), \Psi (\sequenceX{\prime}) \rangle} \\
    =&~\frac{1}{N^2}\sum_{1\leq i,j \leq N} \E_{\bW}\brac{\barsig\paren{\langle \bW,\tbX\rangle+h_i}\barsig\paren{\langle\bW,\tbx_0^\prime\tbx_{j}^{\prime\top}\rangle+h_j^\prime} }\langle \tbx_i, \tbx_j'\rangle,
\end{align*}
where $h_i=\<\bx_0,\bx_i\>$ and $h_j'=\<\bx'_0,\bx'_j\>$. Similar to the proof of Lemma \ref{lem:RKHS_norm_target_function_Theorem1},  we also consider a single component of the equation above first, which has the form of
\begin{equation}\label{eq:Gaussian plus identity single comp}
    \E_{\bW}\brac{\barsig\paren{\langle \bW,\tbX\rangle +h_i}\barsig\paren{\langle\bW,\tbx_0^\prime\tbx_{j}^{\prime\top}\rangle+h_j^\prime} }\langle \tbx_i, \tbx_j'\rangle.
\end{equation}
We expand $\barsig\paren{\langle \bW,\tbX \rangle + h_i}$ by Hermite polynomials in the space $L^2(\R, \phi)$ using Lemma \ref{clm::relu with hermite},
\begin{align*}
    &~\E_{\bW}\brac{\barsig\paren{\langle \bW,\tbX \rangle +  h_i}\He_\ell\paren{{\< \bW,\tbX \>}}} \\
    = &~ \E_{z\sim\normal(0,1)}\brac{ \barsig\paren{z +  h_i }\He_\ell(z)} \\
    = &~ \left\{
    \begin{matrix}
    h_i \Phi\paren{h_i } +  \phi(h_i ) & \paren{\ell=0}, \\
      \Phi\paren{h_i } & \paren{\ell=1}, \\
    (-1)^\ell  \phi\paren{h_i } \He_{\ell-2}\paren{h_i } & \paren{\ell\geq 2}. \\
    \end{matrix}
    \right.
\end{align*}
Therefore, we have
\begin{align*}
    \barsig\paren{\langle \bW,\tbX \rangle + h_i} = &~h_i \Phi\paren{h_i } +  \phi\paren{h_i } +   \Phi\paren{h_i } \He_1\paren{\<\bW,\tbX\>} \\
     &~ +  \sum_{\ell=2}^\infty \frac{(-1)^\ell}{\ell!}  \phi\paren{h_i } \He_{\ell-2}\paren{h_i } \He_\ell\paren{\<\bW,\tbX\>}. 
\end{align*}

Using Lemma \ref{lemma:Hermite inner product}, we obtain the expansion: 
\begin{align*}
&~ \E_{\bW}\brac{\barsig\paren{\langle\bW,\tbX\rangle+h_i}\barsig\paren{\langle\bW,\tbx_0^\prime\tbx_j^{\prime\top}\rangle+h'_j} }\langle \tbx_i, \tbx_j'\rangle \\
     = &~  \brac{h_i \Phi\paren{h_i } +  \phi\paren{h_i }}\brac{h'_j \Phi\paren{h'_j} + \phi\paren{h'_j}} \langle \tbx_i, \tbx_j'\rangle \\
    &~ +   2^{-2} \Phi\paren{h_i } \Phi\paren{h'_j} \<\tbX,\tbx_0^\prime\tbx_j^{\prime\top}\> \langle \tbx_i, \tbx_j'\rangle
    \\ 
    &~ + \sum_{\ell=2}^\infty\left\{ \frac{1}{\ell!}  2^{-2\ell} \phi\paren{h_i} \phi\paren{h'_j} \He_{\ell-2}\paren{h_i} \He_{\ell-2}\paren{h'_j}  \<\tbX,\tbx_0^\prime\tbx_j^{\prime\top}\>^{\ell} \right\} \langle \tbx_i, \tbx_j'\rangle\\
    = &~  \sum_{\ell=0}^\infty \< \varphi_\ell\paren{\tbx_0,\tbx_i} , \varphi_\ell(\tbx'_0,\tbx'_j)\>,
\end{align*}
where  \begin{equation*}
\varphi_\ell\paren{\tbx_0,\tbx_i}=\sqrt{\frac{1}{\ell!}} 2^{-\ell}u_\ell\paren{\tbx_0,\tbx_i} = \left\{\begin{matrix}
    \brac{h_i \Phi\paren{h_i } +  \phi\paren{h_i }} \tbx_i & \paren{\ell=0}, \\
      2^{-1} \Phi\paren{h_i } \tbX \otimes \tbx_i & \paren{\ell=1}, \\
   \sqrt{\frac{1}{\ell!}} 2^{-\ell} \phi\paren{h_i } \He_{\ell-2}\paren{h_i } (\tbX)^{\otimes \ell} \otimes \tbx_i & \paren{\ell\geq 2}. \\
\end{matrix} \right.
\end{equation*}
Then by taking summation with respect to $i$ and $j$,  we derive the expansion of the kernel: 
\begin{equation*}
K\paren{\sequenceX{},\sequenceX{\prime}}=\sum_{\ell=0}^\infty \Big\langle \frac{1}{N}\sum_{i=1}^N\varphi_\ell\paren{\bx_0,\bx_i} , \frac{1}{N}\sum_{j=1}^N\varphi_\ell(\bx_0^\prime,\bx_j^\prime)\Big\rangle.
\end{equation*}
Then for the target function that takes the form in \eqref{equ::rf biased target function}, we have the RKHS norm of $\feasy$ bounded by: %
\[\|\feasy\|_{\mc{H}_{K}}^2 \le \sum_{\ell=0}^\infty \Big(2^\ell\sqrt{\ell!}\Big)^2 \lfros{\bD_\ell}^2 = \sum_{\ell=0}^\infty 4^\ell \ell! \lfro{\bD_\ell}^2 \]
by the feature equivalence property \eqref{eqn:feature map equi}. Thus, there exists coefficients $\bv(\bW)$ such that
    \begin{equation*}
            \fwz_\bv(\sequenceX{}) = \frac{1}{N}\sum_{i = 1}^N \E_{\bW}\brac{ \sigma\paren{\<\bW + \bW_0, \tbX\>} \<\bv(\bW), \tbx_i\>}
    \end{equation*}
    and
    \begin{equation*}
        \E_\bW\brac{\ltwo{\bv\paren{\bW}}^2} \le \sum_{\ell \ge 0} 4^\ell \ell! \lfro{\bD_\ell}^2,
    \end{equation*}
which proves Lemma \ref{lemma:bias expansion}.
\end{proof}

\begin{proof}[Proof of Proposition~\ref{prop:finite width biased}]~

Note that Lemma~\ref{lem::rf and finite neuron} is also applicable to the model \BRFA. Combining it with Lemma~\ref{lemma:bias expansion} proves Proposition~\ref{prop:finite width biased}.
\end{proof}

\begin{proof}[Proof of Lemma~\ref{lemma:bias approx}]~

    Firstly, we consider approximating $x^p$ using the function class $\sets{\phi(x)\He_\ell(x)}_{n\ge 0}$, which is equivalent to approximating $x^p\phi^{-1}(x)$ using Hermite polynomials. We take $p=0$ first and compute the Hermite expansion of the  $2\ell$-th-order term in the Taylor expansion of $\phi^{-1}(x)=e^{x^2/2}$, i.e. $\psi_\ell(x)=\sum_{n=0}^\ell\frac{x^{2n}}{2^n n!}$. Using Lemma \ref{clm:Inverse explicit expression}, we have that:
\begin{align*}
    \psi_\ell(x) =&~\sum_{n=0}^\ell\frac{x^{2n}}{2^n n!} = \sum_{n=0}^\ell \frac{(2n)!}{2^n n!}\sum_{m=0}^n \frac{\He_{2n-2m}(x)}{2^m m! (2n-2m)!}\\
    =&~\sum_{k=0}^\ell \He_{2k}(x) \sum_{m=0}^{\ell -k} \frac{(2m+2k)!}{2^{m+k}(m+k)!}\cdot \frac{1}{ 2^m m! (2k)!} \\
    \eqdef&~ \sum_{k=0}^\ell c_{\ell,k} \He_{2k}(x),
\end{align*}
where $c_{\ell,k} =\frac{2^k}{ (2k)!}\sum_{m=0}^{\ell -k} \frac{(2m+2k)!}{2^{2m+2k}(m+k)!m!}$.
Similarly, for any $p\ge 0$, define $p_0=\lfloor \frac{p}{2} \rfloor$ with $r = p-2p_0$. We have that 
\begin{align*}
    \psi_\ell(x)x^p = &~\sum_{n=0}^\ell\frac{x^{2n+2p_0+r}}{2^n n!} = \sum_{n=0}^\ell \frac{(2n+2p_0+r)!}{2^n n!}\sum_{m=0}^{n+p_0} \frac{\He_{2n+2p_0+r-2m}(x)}{2^m m! (2n+2p_0+r-2m)!}\\
    = &~ \sum_{k=0}^{\ell+p_0} \He_{2k+r}(x) \sum_{m=\max(p_0-k,0)}^{\ell +p_0 -k} \frac{(2m+2k+r)!}{2^{m+k-p_0}(m+k-p_0)!}\cdot \frac{1}{ 2^m m! (2k+r)!} \\
    \eqdef &~ \sum_{k=0}^{\ell+p_0} c_{\ell,k,p_0} \He_{2k+r}(x),
\end{align*}
where $c_{\ell,k,p_0} = \sum_{m=\max(p_0-k,0)}^{\ell +p_0 -k} \frac{(2m+2k+r)!}{2^{m+k-p_0}(m+k-p_0)!}\cdot \frac{1}{ 2^m m! (2k+r)!}$.
Then we can bound $c_{\ell,k,p_0}$ as follows:
\begin{align*}
     c_{\ell,k,p_0} & =\sum_{m=\max(p_0-k,0)}^{\ell +p_0 -k} \frac{(2m+2k+r)!}{2^{m+k-p_0}(m+k-p_0)!}\cdot \frac{1}{ 2^m m! (2k+r)!} \\
     & \leq \sum_{m=0}^{\ell +p_0 -k} 2^{p_0}\frac{(m+k)!}{(m+k-p_0)!} \frac{(2m+2k+r)!}{2^{m+k}(m+k)!}\cdot \frac{1}{ 2^m m! (2k+r)!} \\
     & \leq 2^{p_0}(\ell+p_0)^{p_0}(2l+p)^r \sum_{m=0}^{\ell +p_0 -k} \frac{(2m+2k)!}{2^{m+k}(m+k)!}\cdot \frac{1}{ 2^m m! (2k)!} \leq (2\ell+p)^{p_0+r} c_{\ell+p_0,k}.
\end{align*}
Next, we give an upper bound of $c_{\ell,k}$:
\begin{align*}
    c_{\ell,k} & =\frac{2^k}{ (2k)!}\sum_{m=0}^{\ell -k} \frac{(2m+2k)!}{2^{2m+2k}(m+k)!m!} \\
    &= \frac{2^k}{ (2k)!}\sum_{m=0}^{\ell -k} \frac{(2m+2k)!}{2^{2m+2k}(m+k)!(m+k)!} \cdot \frac{(m+k)!}{m!} \\
    & \leq \frac{2^k}{ (2k)!}\sum_{m=0}^{\ell -k} \sqrt{\frac{1}{m+k}} \cdot \frac{(m+k)!}{m!} \\
    & = \frac{2^k}{ (2k)!}\sum_{m=0}^{\ell -k} \sqrt{m+k} \cdot \frac{(m+k-1)!}{m!} \\
    & \leq \frac{2^k}{ (2k)!}\sqrt{\ell} \sum_{m=0}^{\ell -k} \frac{(m+k-1)!}{m!} \\
    & =    \frac{2^k}{ (2k)!}\sqrt{\ell} \frac{\ell !}{k(\ell-k)!}.
\end{align*}
Here we use the inequality that
\[
\sqrt{\frac{2}{\pi (2n+1)}} \le \frac{(2n)!}{2^{2n}n!n!} \le \sqrt{\frac{1}{2n}}, \forall n\ge 1.
\]
and we will use it again in the following proof.
Since the function 
$$\frac{1}{N} \phi(\<\bx_0,\bx_i\>)\psi_\ell(\<\bx_0,\bx_i\>)\sum_{i=1}^N \<\bx_0,\bx_i\>^p \<\tbx_i^{\otimes 3} \otimes \tbx_0^{\otimes 2}, \bA_\star \> $$
can be written as
\begin{equation*}
    \frac{1}{N} \sum_{i=1}^N \sum_{k=0}^{\ell+p_0}\< u_{2k+r+2}\paren{\tbx_0,\tbx_i},  c_{\ell,k,p_0}\bA_\star\otimes \be^{\otimes (4k+2r)} \>,
\end{equation*}
where $\be=\diag(0,0,...,1)$. By Lemma~\ref{lemma:bias expansion}, there exists $\bv_{p,\ell}(\bW)$ s.t. 
\begin{equation*}
    \fwz_{\bv_{p,\ell}}(\bx_{0:N})= \phi(\<\bx_0,\bx_i\>)\psi_\ell(\<\bx_0,\bx_i\>)\frac{1}{N} \sum_{i=1}^N \<\bx_0,\bx_i\>^p\<\tbx_i^{\otimes 3} \otimes \tbx_0^{\otimes 2}, \bA_\star \>
\end{equation*}
with
\begin{align*}
    \E_\bW\brac{\ltwo{\bv_{p,\ell}\paren{\bW}}^2} &\le \lfro{\bA_\star}^2\sum_{k=0}^{\ell+p_0} 4^{2k+r+2}(2k+r+2)!  c_{\ell,k,p_0}^2.
\end{align*}
Notice that
\begin{align*}
    &~\sum_{k=0}^{\ell+p_0} 4^{2k+r+2}(2k+r+2)! c_{\ell,k,p_0}^2 \\
    \le&~ \sum_{k=0}^{\ell+p_0}  4^{2k+r+2}(2k+r+2)!\brac{(2\ell+p)^{p_0+r} c_{\ell+p_0,k}}^2 \\
    \le &~(2\ell+p)^{2p_0+2r} \sum_{k=0}^{\ell+p_0}4^{2k+r+2} (2k+r+2)! \brac{\frac{2^k}{ (2k)!}\sqrt{\ell+p_0} \frac{(\ell+p_0) !}{k(\ell+p_0-k)!}}^2\\
    \le &~ 4(2\ell+p)^{2p_0+2r}(2\ell+2p_0+r+2) \sum_{k=0}^{\ell_p}4^{2k+2} (2k+2)! \brac{\frac{2^k}{ (2k)!}\sqrt{\ell_p} \frac{(\ell_p) !}{k(\ell_p)!}}^2,
\end{align*}
where $\ell_{p}=\ell+p_0$, and by the inequality that 
\begin{align*}
    &~\sum_{k=0}^{\ell_p}4^{2k+2} (2k+2)! \brac{\frac{2^k}{ (2k)!}\sqrt{\ell_p} \frac{(\ell_p) !}{k(\ell_p)!}}^2 \\
    = &~ \sum_{k=0}^{\ell_p} 2^{4k + 4}\ell_p \frac{(2k+2)(2k+1)}{k^2} \frac{2^{2k}}{ (2k)!}\brac{\frac{\ell_p !}{(\ell_p-k)!}}^2 \\
    \le &~ \sum_{k=0}^{\ell_p} 2^{4k + 4}\ell_p \frac{(2k+2)(2k+1)}{k^2} \sqrt{\frac{\pi}{2} (2k+1)} \brac{\frac{\ell_p !}{k!(\ell_p-k)!}}^2 \\
     \le &~ 8 \times  2^{4\ell_p + 4}\ell_p^{\frac{3}{2}} \sum_{k=0}^\ell \brac{\frac{\ell_p !}{k!(\ell_p-k)!}}^2 \\
     = &~ 4\sqrt{2} \times 2^{4\ell_p + 4}\ell_p^{\frac{3}{2}}  \binom{2\ell_p}{\ell_p} \\
     \le &~4\sqrt{2} \times 
 2^{4}\ell_p \cdot (64)^{\ell_p},
\end{align*}
we obtain an upper bound:
\begin{align*}   \E_\bW\brac{\ltwo{\bv_{p,\ell}\paren{\bW}}^2} &\le (2\ell+p)^{2p_0+2r}\lfro{\bA_\star}^2 \sum_{k=0}^{\ell+p_0} 4^{2k+r+2}(2k+r+2)!  c_{\ell+p_0,k}^2\\ 
&\le \lfro{\bA_\star}^2 \times 4(2\ell+p)^{2p_0+2r}(2\ell+2p_0+r+2) \times 4\sqrt{2} \times 
 2^{4}\ell_p \cdot (64)^{\ell_p}\\
 &\le \lfro{\bA_\star}^2(2\ell+p)^{p+r+1}8^{2\ell+2p_0+3}.
\end{align*} 
Finally, we consider the target function \eqref{eqn:bias rf hard}
\begin{equation*}
    \fhard(\bx_{0 : N}) = \frac{1}{N} \sum_{i = 1}^N F(\<\bx_0, \bx_i\>) G(\bx_0, \bx_i), \quad F(t) = \sum_{p=0}^\infty a_p t^p, \quad G(\bx_0, \bx_i) = \<\tbx_i^{\otimes 3} \otimes \tbx_0^{\otimes 2}, \bA_\star\>. 
\end{equation*}
By approximating $\frac{1}{N} \sum_{i = 1}^N a_p\<\bx_0, \bx_i\>^p G(\bx_0, \bx_i)$ separately and adding the approximation functions together, we obtain a $\bv_{\ell}$ s.t.
$\fwz_{\bv_{\ell}}(\bx_{0 : N})= 
\phi(\<\bx_0,\bx_i\>)\psi_\ell(\<\bx_0,\bx_i\>)\fhard(\bx_{0 : N})$ and
\[
\E_\bW\brac{\ltwo{\bv_{p,\ell}\paren{\bW}}^2} \le  \paren{\sum_{p=0}^\infty a_p  8^{\ell+p/2+3/2} (2\ell+p)^{\frac{p+3}{2}}}^2\lfro{\bA_\star}^2=8^3\biasbound
\]
and $\linf{\fhard-\fwz_{\bv_{\ell}}}=\linf{(\phi\psi_\ell-1)\fhard}\leq\linf{\phi\psi_\ell-1}\linf{\fhard}\leq \frac{e}{2^{\ell+1}(\ell+1)!}\linf{\fhard}=\linf{\fhard}\epsilon_\ell$.
This finishes the proof of Lemma \ref{lemma:bias approx}.
\end{proof}

\subsection{Proof of Theorem \ref{thm::sample complexity biased appendix}}\label{sec:prf bias sample complexity}
\begin{proof}
For any $L>0$, using Lemma \ref{lemma:bias approx}, we can find a function $\fwz_{\bv_L}$ such that $\linf{\fhard-\fwz_{\bv_L}} \le \linf{\fhard} \eps_L$ with $B(\fwz_{\bv_L}) \le 8^3B(\fhard,L)$. Follow the same manner as the proof of Theorem \ref{thm::sample complexity}. We can get that 
\begin{align*}
\populoss{\hat{f}^{\bW,\bW_0}_{M}} - &L_D(\fwz_{\bv_L}) \\
    \lesssim &~    \sqrt{B(\fhard, L)} \Big[ \sqrt{\frac{{\log (dM) \log (nNM)} +  \log (\delta^{-1})}{{n}}} + \sqrt{\frac{(d^2 + \log M) \delta^{-1}}{M}} \Big].
\end{align*}
Therefore, we have that
\begin{align}
    &~ \populoss{\hat{f}^{\bW,\bW_0}_{M}} -  L_D(\fhard) \nonumber\\
    = &~ \populoss{\hat{f}^{\bW,\bW_0}_{M}} -L_D(\fwz_{\bv_L})+L_D(\fwz_{\bv_L})-  L_D(\fhard) \nonumber\\
    \lesssim &~    \sqrt{B(\fhard, L)} \Big[ \sqrt{\frac{{\log (dM) \log (nNM)} +  \log (\delta^{-1})}{{n}}} + \sqrt{\frac{(d^2 + \log M) \delta^{-1}}{M}} \Big] + \eps_L \linf{\fhard}.
\end{align}
Taking infimum over $L$ proves Theorem \ref{thm::bias sample complexity}.
\end{proof}

\subsection{Proof of Examples in Section \ref{section:bias attention}}\label{sec:proof_examples_bias}
\begin{proposition}[Restatement of Example \ref{exmp::low degree polynomials}]
\label{prop:restate exp low degree polynomials biased}
Consider the target function 
\[g_\star = \frac{1}{N} \sum_{i=1}^N \langle \bx_i^{\otimes 3} \otimes \bx_0^{\otimes 2},\bA\rangle.\]
It has norm bound $B(g_\star,L) = \lfro{\bA}^2L^38^{2L}$. So for any $\eta>0$, if we take $n\ge \exp(\exp(\Theta(1/\eta)))$, $L=\Theta((1 + \log\log n)^{-1}\log n)$, and $M = \Theta( d^2 n)$, the excess risk will scale as $\tO(\sqrt{\| \bA \|_{\Fr}^2/n^{1-\eta}})$. 
\end{proposition}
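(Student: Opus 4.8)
The plan is to obtain this as a direct corollary of Theorem~\ref{thm::sample complexity biased appendix} (equivalently Theorem~\ref{thm::bias sample complexity}), which already packages the estimation/approximation tradeoff for the \BRFA{} model; the only genuine work is verifying that $g_\star$ falls into the class \eqref{eqn:bias rf hard}, reading off $B(g_\star,L)$, and then optimizing over the truncation level $L$. First I would realize $g_\star$ as an instance of the target class \eqref{eqn:bias rf hard}: taking $F(t)\equiv 1$ (so that $a_0=1$ and $a_k=0$ for all $k\ge 1$) and letting $\bA_\star\in\R^{(d+1)^5}$ be the zero-padding of $\bA\in\R^{d^5}$ into the $\tbx$-coordinates, we have $\<\tbx_i^{\otimes 3}\otimes\tbx_0^{\otimes 2},\bA_\star\>=\<\bx_i^{\otimes 3}\otimes\bx_0^{\otimes 2},\bA\>$ with $\|\bA_\star\|_{\Fr}=\|\bA\|_{\Fr}$, so $\fhard$ equals $g_\star$ for this choice. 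Plugging into \eqref{eqn:bfstar for biased}, only the $k=0$ term survives in $\sum_k|a_k|C_k=C_0=(2L)^{3/2}8^L$, giving $B(g_\star,L)=\|\bA\|_{\Fr}^2(2L)^3 8^{2L}=\Theta(\|\bA\|_{\Fr}^2 L^3 8^{2L})$, as claimed. I would also record the crude bound $\linf{\fhard}\le\|\bA\|_{\Fr}$, which follows from Cauchy--Schwarz and the unit-norm assumption $\|\bx_i\|=1$ (hence $\|\bx_i^{\otimes 3}\otimes\bx_0^{\otimes 2}\|_{\Fr}=1$).

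Next I would feed these quantities into Theorem~\ref{thm::sample complexity biased appendix}. With $M=\Theta(d^2 n)$, the heads term obeys $\sqrt{d^2\delta^{-1}/M}=\Theta(\sqrt{1/n})$ (with $\delta$ held constant inside $\tO$), so the bound collapses to
\[
\populoss{\hat f^{\bW,\bW_0}_M}-L_D(g_\star)
\ \lesssim\ \tO\!\left(\|\bA\|_{\Fr}\,\frac{L^{3/2}8^{L}}{\sqrt n}\right)+\frac{\|\bA\|_{\Fr}}{2^{L+1}(L+1)!}.
\]
It then remains to choose $L$ so as to balance the polynomially-growing estimation factor $8^L$ against the super-exponentially-decaying approximation factor $1/[2^{L+1}(L+1)!]$.

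Finally I would carry out the balancing with $L=\Theta\big((1+\log\log n)^{-1}\log n\big)$. For the approximation term, Stirling gives $\log\!\big(2^{L+1}(L+1)!\big)=\Theta(L\log L)=\Theta(\log n)$, so this term is at most $n^{-\Omega(1)}$ and can be pushed below $n^{-(1-\eta)/2}$. For the estimation term, $8^L=\exp(\Theta(\log n/\log\log n))=n^{\Theta(1/\log\log n)}=n^{o(1)}$; the double-exponential requirement $n\ge\exp(\exp(\Theta(1/\eta)))$ is precisely what forces $\Theta(1/\log\log n)\le\eta/2$, so that $8^L\le n^{\eta/2}$ and the estimation term is $\tO(\|\bA\|_{\Fr}\,n^{-(1-\eta)/2})$ (the $L^{3/2}$ prefactor being polylogarithmic and absorbed into $\tO$). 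Combining the two bounds yields the stated $\tO(\sqrt{\|\bA\|_{\Fr}^2/n^{1-\eta}})$. The main obstacle is exactly this final tuning: the two error sources demand opposite things from $L$, and one must check that the crossover $L\log L\sim\log n$ both keeps $8^L$ sub-polynomial in $n$ and that the resulting blow-up $n^{\Theta(1/\log\log n)}$ is genuinely dominated by the margin $n^{\eta/2}$ precisely when $n$ exceeds the double-exponential threshold; everything else is a routine substitution into the already-proved theorem.
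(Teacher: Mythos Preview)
Your proposal is correct and follows essentially the same route as the paper: realize $g_\star$ in the class \eqref{eqn:bias rf hard} with $F\equiv 1$, read off $B(g_\star,L)=\Theta(\|\bA\|_{\Fr}^2 L^3 8^{2L})$, substitute into Theorem~\ref{thm::sample complexity biased appendix} with $M=\Theta(d^2 n)$, and balance the estimation term $8^L/\sqrt n$ against the approximation term $\eps_L$. The only cosmetic difference is that the paper parametrizes $L=r\log n$ and solves explicitly for the balancing value $r_\star=\Theta((1+\log\log n)^{-1})$, whereas you argue directly that the prescribed $L$ makes $8^L=n^{\Theta(1/\log\log n)}$ and $(L+1)!\,2^{L+1}=n^{\Theta(1)}$, then invoke the double-exponential threshold on $n$ to absorb the $n^{\Theta(1/\log\log n)}$ factor into $n^{\eta/2}$; both arrive at the same conclusion.
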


\begin{proof}[Proof of Proposition \ref{prop:restate exp low degree polynomials biased}]
The value of $B(g_\star,L)$ can be obtained by definition and by direct calculation. As for the second part of the proposition, take $L=r\log n$, where $r>0$ is a parameter that is to be chosen to minimize the excess risk. Eq. \eqref{eqn:bias_RF sample complexity line 2} becomes
\begin{align}
     \tO( \lfro{\bA} r^{3/2} n^{3r - 1/2} + n^{-r\log(2/e)-r\log r-r\log\log n}), \label{eqn::bias_RF polynomial sample complexity}
\end{align}
where $\tO$ hides all the logarithm factors and constants of $n,d$, and $M$. 
To minimize the excess risk, we need to make the two terms in \eqref{eqn::bias_RF polynomial sample complexity} have the same scale. So we set 
$3r - 1/2 = -r\log(2/e)-r\log r-r\log\log n.$ Denoting the solution as $r_\star$, we get that
\[
r_\star = \frac{1/2 - r_\star\log r_\star}{\log(2e^2)+\log\log n}.
\]
So this gives $r_\star < (1/2+1/e)(\log(2e^2)+\log\log n)^{-1}$. Assume $r_\star < 1$ (otherwise $3 r_\star>1/2$ and the excess risk is meaningless), then $r_\star > (2\log(2e^2)+2\log\log n)^{-1}$. Therefore, we get that $r_\star = C(1+\log\log n)^{-1}$, with $C$ being a constant. As a result, when choosing $L=r_\star \log n=\Theta((1 + \log\log n)^{-1}\log n)$, the excess risk Eq. \eqref{eqn::bias_RF polynomial sample complexity} scales as 
$ \tO( \lfro{\bA}n^{3r_\star - \frac{1}{2}}) = \tO( \lfro{\bA} n^{C/(1+\log\log n) - \frac{1}{2}}).$
As a result, let $n>\exp(\exp(C/\eta-1))$ where $C$ is a constant, we get an excess risk that scales as $\tO(\sqrt{\lfros{\bA}^2/n^{1-\eta}})$.
This finishes the proof of Proposition \ref{prop:restate exp low degree polynomials biased}.
\end{proof}

\begin{proposition}[Restatement of Example \ref{exp:correlation functions biased}]~
\label{prop:restate exp correlation functinos biased}
Consider the target function 
    \[ \fhard = \frac{1}{N} \sum_{i=1}^N \<\bx_0,\bx_i\>^p \<\bbeta,\bx_i\> ,~~~ \bbeta \in \S^{d-1}.\]
    It has $B(\fhard,L) = (2L+p)^{p+3} 8^{2L+p}$.
    For any $\eta>0$, choosing the same parameters $(n, L, M)$ as Example~\ref{exmp::low degree polynomials}, the excess risk bound scales as $\tO(\sqrt{{ (\log n + p)^{(p+3)}8^p}/{n^{1-\eta}}})$.

    Furthermore, to reach an accuracy of $0.01$, the \BRFA~model requires $n_\star = \tO((8p+48)^{p+3})$, whereas the \RFA~model requires $n_\star = \tO((4d)^p)$. 
\end{proposition}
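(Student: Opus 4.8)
The plan is to deduce the statement entirely from the master excess-risk bound of Theorem~\ref{thm::sample complexity biased appendix} (displayed as~\eqref{eqn:bias_RF sample complexity line 2}), so that the only function-specific work is to identify the correct representation of $\fhard$ in the form~\eqref{eqn:bias rf hard}, read off $B(\fhard,L)$ from~\eqref{eqn:bfstar for biased}, and then repeat the $L$-optimization already carried out for Proposition~\ref{prop:restate exp low degree polynomials biased}.

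First I would compute $B(\fhard,L)$. Writing $\fhard=\frac1N\sum_i F(\<\bx_0,\bx_i\>)\,G(\bx_0,\bx_i)$ with $F(t)=t^p$ (so $a_p=1$ and $a_k=0$ for $k\neq p$) and $G(\bx_0,\bx_i)=\<\bbeta,\bx_i\>$, I embed $G$ into the required degree-$(3,2)$ form $\<\tbx_i^{\otimes 3}\otimes\tbx_0^{\otimes 2},\bA_\star\>$ by exploiting the constant last coordinate of $\tbx_i=[\bx_i;1]$ and $\tbx_0=[\bx_0;1]$: let $\bA_\star$ place $[\bbeta;0]$ in one $\tbx_i$-slot and the last-coordinate selector $\be_{d+1}=[\bzero;1]$ in each of the four remaining slots, so that $\<\tbx_i^{\otimes 3}\otimes\tbx_0^{\otimes 2},\bA_\star\>=\<\bbeta,\bx_i\>$ and $\lfro{\bA_\star}^2=\lfro{\bbeta}^2=1$. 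Since only $a_p$ survives, $\sum_k|a_k|C_k=C_p=(2L+p)^{(p+3)/2}8^{L+p/2}$, and~\eqref{eqn:bfstar for biased} yields $B(\fhard,L)=\lfro{\bA_\star}^2 C_p^2=(2L+p)^{p+3}8^{2L+p}$, as claimed. This step is a direct calculation.

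Next I would substitute this $B(\fhard,L)$ into~\eqref{eqn:bias_RF sample complexity line 2}. With $M=\Theta(d^2n)$ the variance term $\sqrt{d^2\delta^{-1}/M}$ is of the same order as $\sqrt{1/n}$, and since all tokens are unit vectors $\linf{\fhard}\le 1$; the bound therefore reduces to $\tO(\inf_L[\sqrt{B(\fhard,L)/n}+\eps_L])$. Taking $L=r\log n$ exactly as in the proof of Proposition~\ref{prop:restate exp low degree polynomials biased} (base-$2$ convention, so that $8^L=n^{3r}$), the estimation term is $\sqrt{B(\fhard,L)/n}=(2L+p)^{(p+3)/2}8^{L+p/2}n^{-1/2}=\Theta\big((\log n+p)^{(p+3)/2}8^{p/2}\,n^{3r-1/2}\big)$, while the approximation term $\eps_L=1/[2^{L+1}(L+1)!]$ scales as $n^{-r\log(2/e)-r\log r-r\log\log n}$ and is superpolynomially small. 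Equating the two $n$-exponents reproduces the optimizer $r_\star=\Theta((1+\log\log n)^{-1})$; then $n\ge\exp(\exp(\Theta(1/\eta)))$ forces $3r_\star\le\eta/2$, so $n^{3r_\star-1/2}\le n^{-(1-\eta)/2}$ and the excess risk collapses to $\tO\big(\sqrt{(\log n+p)^{(p+3)}8^{p}/n^{1-\eta}}\big)$, with the $p$-dependent and polylogarithmic prefactors carried through unchanged.

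Finally I would read off the two sample-complexity thresholds needed to reach accuracy $0.01$. For {\BRFA} I fix $L$ at the absolute constant $L=3$, which already makes $\eps_3=1/384<0.005$; then $B(\fhard,3)=(p+6)^{p+3}8^{p+6}=8^{3}(8p+48)^{p+3}=\Theta((8p+48)^{p+3})$, and requiring $\sqrt{B(\fhard,3)/n}\le 0.005$ gives $n_\star=\tO((8p+48)^{p+3})$. For {\RFA} I instead invoke Theorem~\ref{thm::sample complexity} with the complexity measure $\rkhsbound$ of~\eqref{eqn:bfstar}, computed as in Example~\ref{exp:general correlation functions}: taking $\bS=\bI_d$, $a_p=1$, and $\bG_1=\bbeta$, the only surviving term is $r=p,\,s=1,\,k=p+1$, contributing $\lfro{\bI_d}^{2p}\lfro{\bbeta}^2=d^p$, so $\rkhsbound=\tO(\Poly(p)(4d)^p)$ and hence $n_\star=\tO((4d)^p)$. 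I expect the main obstacle to be the $L$-optimization of the third paragraph: one must simultaneously track the $n^{3r}$ growth coming from $8^{L}$, the polylogarithmic prefactor $(2L+p)^{(p+3)/2}$, and the factorial decay encoded in $\eps_L$, and verify that the threshold $n\ge\exp(\exp(\Theta(1/\eta)))$ is exactly what pushes the effective exponent below $(1-\eta)/2$. Everything else is bookkeeping that reuses the machinery already developed for Proposition~\ref{prop:restate exp low degree polynomials biased} and Example~\ref{exp:general correlation functions}.
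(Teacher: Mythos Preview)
Your proposal is correct and follows essentially the same approach as the paper's proof: both compute $B(\fhard,L)$ by direct substitution into~\eqref{eqn:bfstar for biased}, reuse the $L$-optimization from Proposition~\ref{prop:restate exp low degree polynomials biased} to obtain the $\tO(\sqrt{1/n^{1-\eta}})$ rate, fix $L=3$ to handle the $0.01$ accuracy threshold for \BRFA, and appeal to Theorem~\ref{thm::sample complexity} with the calculation of Example~\ref{exp:general correlation functions} for the \RFA\ sample size. Your write-up is in fact more explicit than the paper's (in particular the embedding of $\<\bbeta,\bx_i\>$ into the degree-$(3,2)$ tensor form and the identification of $r=p$, $s=1$ in the \RFA\ bound), but the logical structure is identical.
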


\begin{proof}[Proof of Proposition \ref{prop:restate exp correlation functinos biased}]
    The value of $B(\fhard,L)$ can be obtained by direct calculation, and we use the same method as the proof of Proposition \ref{prop:restate exp low degree polynomials biased} to get the $\tO(\sqrt{1/n^{1-\eta}})$ excess risk. To reach an accuracy of $0.01$, we can set $L=3$, note that $\eps_L \linf{\fhard} < 0.006$. Therefore we just need to choose $n_\star$ s.t. $n > \tO((6+p)^{p+3}8^{6+p})$, and this gives $n_\star = \tO((8p+48)^{p+3})$ for the \BRFA~model. Direct calculation using Theorem \ref{thm::sample complexity} gives the value of $n_\star$ for the model \RFA.  This finishes the proof of Proposition \ref{prop:restate exp correlation functinos biased}.
\end{proof}

\begin{proposition}[Restatement of Example \ref{exp:general correlation functions biased}]
\label{prop:restate exp general correlation functions biased}
Consider the target function that has the form 
    \[ \fhard = \frac{1}{N} \sum_{i=1}^N \cos(\<\bx_0,\bx_i\>) \<\bx_i^{\otimes 3},\bG\>. \]
    It has $B(\fhard, L) = \Theta((8e)^{2L})$.
    For any $\eta>0$, choosing the same parameters as Example \ref{exmp::low degree polynomials}, the excess risk bound scales as $\tO(\sqrt{1/{n^{1-\eta}}})$.

    Furthermore, to reach an accuracy of $0.01$. The \BRFA~model requires $n_\star = \tO(1)$, whereas the \RFA~model requires $n_\star = \tO(\Poly(d)\exp(\sqrt{d}))$.
\end{proposition}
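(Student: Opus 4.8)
The plan is to invoke Theorem~\ref{thm::sample complexity biased appendix} directly, after casting $\fhard$ into the admissible form \eqref{eqn:bias rf hard}. Here $F(t)=\cos(t)=\sum_{j\ge 0}\frac{(-1)^j}{(2j)!}t^{2j}$, so the nonzero Taylor coefficients are $a_{2j}=(-1)^j/(2j)!$ (and $a_{2j+1}=0$), while $G(\bx_0,\bx_i)=\<\bx_i^{\otimes 3},\bG\>$ is a degree-$(3,0)$ polynomial that embeds into the required degree-$(3,2)$ form $\<\tbx_i^{\otimes 3}\otimes\tbx_0^{\otimes 2},\bA_\star\>$ by padding with the constant last coordinate of $\tbx_0=[\bx_0;1]$; this gives $\lfro{\bA_\star}=\lfro{\bG}\le 1$. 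Since the tokens have unit norm and $|\cos|\le 1$, I also record $\linf{\fhard}\le\lfro{\bG}\le 1$, which is what makes the approximation term $\eps_L\linf{\fhard}$ harmless.

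First I would estimate the complexity $B(\fhard,L)$ from \eqref{eqn:bfstar for biased appendix}. Substituting the cosine coefficients,
\[
\sum_{k\ge 0}|a_k|C_k=\sum_{j\ge 0}\frac{1}{(2j)!}\,(2L+2j)^{(2j+3)/2}\,8^{L+j}=8^L\sum_{j\ge 0}\frac{8^j(2L+2j)^{j+3/2}}{(2j)!}.
\]
The task is to upper bound this series. I would factor out the scalar $8^L$ and estimate each tail term by Stirling's bound $(2j)!\ge(2j/e)^{2j}$, which turns the summand into a polynomial factor in $(L,j)$ times $\big(\frac{4(L+j)e^2}{j^2}\big)^{j}$; this exponent, viewed as a function of $j$, peaks at $j=\Theta(\sqrt L)$ with maximal value $e^{\Theta(\sqrt L)}$, so the entire series is $\Theta(\mathrm{poly}(L))$ times a subexponential-in-$L$ factor, hence bounded by $(8e)^L$ up to polynomial factors. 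Squaring and using $\lfro{\bA_\star}^2\le1$ establishes the upper bound $B(\fhard,L)=\tO((8e)^{2L})$ recorded in the statement, which is all the excess-risk argument requires. The main obstacle is precisely this series estimate: the weights $(2L+2j)^{j+3/2}$ couple the ``radius'' $L$ to the summation index $j$, so one cannot simply read off a closed form like $\cosh$, and locating the peak at $j\asymp\sqrt L$ via Stirling is what confirms the clean exponential-in-$L$ bound.

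With $\sqrt{B(\fhard,L)}=\tO((8e)^{L})$ in hand, the excess-risk bound follows by the same $L=r\log n$ balancing used in Proposition~\ref{prop:restate exp low degree polynomials biased}. Plugging $M=\Theta(d^2 n)$ so that $\sqrt{(d^2+\log M)\delta^{-1}/M}\lesssim\sqrt{1/n}$, Theorem~\ref{thm::sample complexity biased appendix} gives excess risk $\tO\big(\inf_L[(8e)^L/\sqrt n+\eps_L]\big)$ with $\eps_L=1/[2^{L+1}(L+1)!]$. Since $(8e)^L=n^{\Theta(r)}$ grows only exponentially in $L$ while $\eps_L=n^{-\Theta(r\log\log n)}$ decays superexponentially, choosing $L=\Theta((1+\log\log n)^{-1}\log n)$ balances the two terms; then for any fixed $\eta>0$ and all $n\ge\exp(\exp(\Theta(1/\eta)))$ the growing exponent $\Theta(1/\log\log n)$ is forced below $\eta/2$, yielding the claimed $\tO(\sqrt{1/n^{1-\eta}})$.

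Finally, for the sample-size comparison I would take $L$ to be a constant (e.g.\ $L=3$, exactly as in Proposition~\ref{prop:restate exp correlation functinos biased}), so that $\eps_L\linf{\fhard}<0.01/2$ and $B(\fhard,L)=\tO(1)$; crucially this constant is \emph{independent of the dimension} $d$, so reaching accuracy $0.01$ needs only $n_\star=\tO(1)$ for {\BRFA}. For the {\RFA} side, $\fhard$ is exactly $f_{2,\star}$ of Example~\ref{exp:general correlation functions} with $p=3$, whose complexity $\rkhsbound$ carries the factor $e^{4\sqrt d}$; applying Theorem~\ref{thm::sample complexity} then gives excess risk $\tO(\Poly(d)\sqrt{e^{4\sqrt d}/n})$, so reaching accuracy $0.01$ requires $n_\star=\tO(\Poly(d)\exp(\sqrt d))$. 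Comparing the two quantities establishes the exponential-in-$\sqrt d$ separation between {\BRFA} and {\RFA}.
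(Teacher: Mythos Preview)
Your proposal is correct and tracks the paper's argument closely: expand $\cos$, bound $B(\fhard,L)$, then reuse the $L\asymp\log n/\log\log n$ balancing and the constant-$L$ trick from Propositions~\ref{prop:restate exp low degree polynomials biased} and~\ref{prop:restate exp correlation functinos biased}, and finally read off the {\RFA} sample size from Example~\ref{exp:general correlation functions}. The one substantive difference is how you bound the series $\sum_{j\ge0}8^j(2L+2j)^{j+3/2}/(2j)!$: you invoke Stirling and a peak analysis at $j\asymp\sqrt L$ to show the sum is $e^{\Theta(\sqrt L)}\,\mathrm{poly}(L)\le e^L$, whereas the paper uses the one-line decoupling $(2L+2k)^{k+3/2}\le(2k+3)^{k+3/2}e^L$ (which follows from $\log(1+x)\le x$ applied to $(1+\frac{2L-3}{2k+3})^{k+3/2}$) to factor the sum immediately as $e^L$ times the $L$-independent convergent series $\sum_k(2k+3)^{k+3/2}8^k/(2k)!$. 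Both routes work; the paper's bypasses what you flagged as the ``main obstacle'' in one stroke, while your Stirling bound is actually sharper (subexponential in $L$), though that extra precision is not used downstream.
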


\begin{proof}[Proof of Proposition \ref{prop:restate exp general correlation functions biased}]
    We use the expansion of the $\cos$ function,
    $$\fhard = \frac{1}{N} \sum_{i=1}^N \sum_{k=0}^\infty \frac{(-1)^k}{(2k)!} \< \bx_0,\bx_i\>^{2k} \<\bx_i^{\otimes 3},\bG\>. $$
    Then use the Eq. \eqref{eqn:bias_RF sample complexity line 2}, we get that 
    \[B(\fhard,L) = \sum_{k=0}^\infty \frac{(2L+2k)^{k+3/2}8^{L+k}}{(2k)!} \le \sum_{k} \frac{(2k+3)^{k+3/2} 8^k}{(2k)!} (8e)^L = \Theta((8e)^L).\]
    This gives the formula for $B(\fhard,L)$. To reach an accuracy of 0.01, we set $L=3$. Then $B(\fhard,L) = \Theta(1)$. So \BRFA~needs $n_\star = \tO(1)$. And Theorem \ref{thm::sample complexity} and Example \ref{exp:general correlation functions} show that \RFA~model would require $n_\star = \tO(\Poly(d)\exp(\sqrt{d}))$. This finishes the proof of Proposition \ref{prop:restate exp general correlation functions biased}.
\end{proof}

\section{Further experiments}\label{sec::further experiments}

In addition to Section \ref{sec:experiments}, we perform further simulations to examine our theory upon the approximation power of \RFA~\eqref{eqn:RF_attention}, \BRFA~\eqref{eqn:RF_attention_bias}, and \RFMLP~\eqref{eqn::MLP model}. Besides the target function \eqref{eqn:experiment rf_hard}, we consider two additional target functions of the form 
\begin{align}
 \textstyle   f_{3,p}(\bx_{0:N})  =&~ \textstyle  \<\bbeta,\bx_0\>^p, ~~~~~~~~~~~~~~~~~~~~~~~~ p\in \N, ~~~~~~ \bbeta\in \S^{d-1}, \label{eqn:experiment example 1}\\
 \textstyle    f_{4,\gamma}(\bx_{0:N})=&~ \textstyle  \frac{1}{N}\sum_{i=1}^N \<\bx_0,\bS\bx_i\>^3\<\bbeta,\bx_i\>, ~~~ \bbeta\in \S^{d-1},~~~ \bS=\bZ+\gamma I_d \label{eqn:experiment example 3}.
\end{align}
The target function \eqref{eqn:experiment example 1} is a specific instance of Example \ref{exp:functions of x0}, and the target function \eqref{eqn:experiment example 3} is a specific instance of Example \ref{exp:general correlation functions}. In \eqref{eqn:experiment example 3}, we sample $Z_{ij} \simiid \normal(0, 1/d)$ for $(i,j)\in [d]^2$ and vary $\gamma$ in the experiment. Other experimental settings are the same as in Section \ref{sec:experiments}.

Figure \ref{fig:example 1} demonstrates the effect of sequence length $N$ on the performance of three {\RF} models when fitting the target function $f_{3,2}$ \eqref{eqn:experiment example 1}, which solely depends on $x_0$. We observe that a larger sequence length $N$ leads to a larger separation of the test error between {\RFMLP} and the other two random-feature attention models. This result aligns with our sample complexity analysis detailed in Example \ref{exp:functions of x0}, where the sample complexity bound of {\RFMLP} for learning average of functions of $\bx_i$ is found to be $\cO((N/4)^p)$ times greater than that of {\RFA}. 

Figure \ref{fig:example 3,5} demonstrates the performance comparison between {\RFA}~ and {\BRFA}~ under different choices of the token dimension $d$. As we can see from the left panel of Figure \ref{fig:example 3,5}, in the case of $d=4$, {\RFA} outperforms {\BRFA} for large sample size, which may result from that {\BRFA} has slower convergence rate with respect to the sample size $n$ as we state in Example \ref{exp:correlation functions biased}. In the middle and the right panel (i.e., when $d$ is larger), {\BRFA} exceeds {\RFA}, and the largest separation lies in the case of $d=32$. This result is consistent with our analyses that {\BRFA} saves a \Poly{(d)} factor in the sample complexity for $d \gg p$. 

Figure \ref{fig:example 3} demonstrates that {\BRFA} has no advantage in approximating the target function $f_{4,\gamma}$ for $\gamma=0$ (left panel; i.e., $(S_{ij} \simiid \normal(0, 1/d)$, for $(i,j)\in [d]^2$). However, as $\gamma$ increases, {\BRFA} outperforms {\RFA} and their separation increases with a larger $\gamma$ (middle and right panel). Notice that $\lim_{\gamma \to \infty}\<\bx_0,\bS\bx_i\>/\gamma= \<\bx_0,\bx_i\>$. So this result also conforms to our analysis that {\BRFA} is adept at approximating functions of correlations as in Example \ref{exp:correlation functions biased}.

In conclusion, {\RFA} and {\BRFA} have similar performance in fitting functions without correlation structure, such as \eqref{eqn:experiment rf_easy} and \eqref{eqn:experiment example 1}, and \RFA~ may behave even better in some cases. However, {\BRFA} is presumably more powerful than {\RFA} in approximating functions of correlations.

\begin{figure}
\hspace{-2.4cm}
\includegraphics[width=1.28\linewidth,bb= 0 0 1728 360]{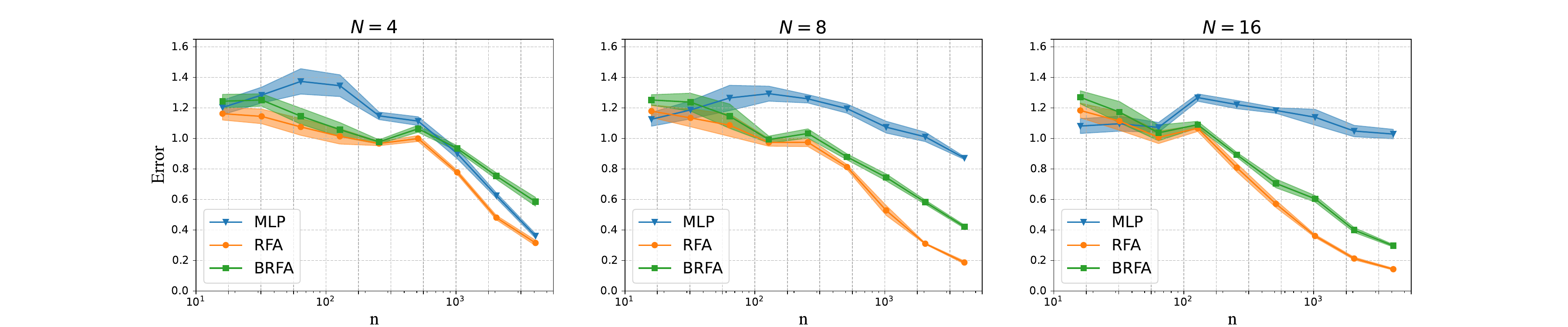}
\vspace{-0.3cm}
\caption{Test error of three \RF~models for learning $f_{3, 2}$ (\ref{eqn:experiment example 1}). We fix $d= 16$ while varying $N = 4$ (left), $8$ (middle), and $16$ (right). The other settings are the same as in Figure \ref{fig:Allfigs}.}

\label{fig:example 1}
\end{figure}

\begin{figure}
\hspace{-2.4cm}
\includegraphics[width=1.28\linewidth,bb= 0 0 1728 360]{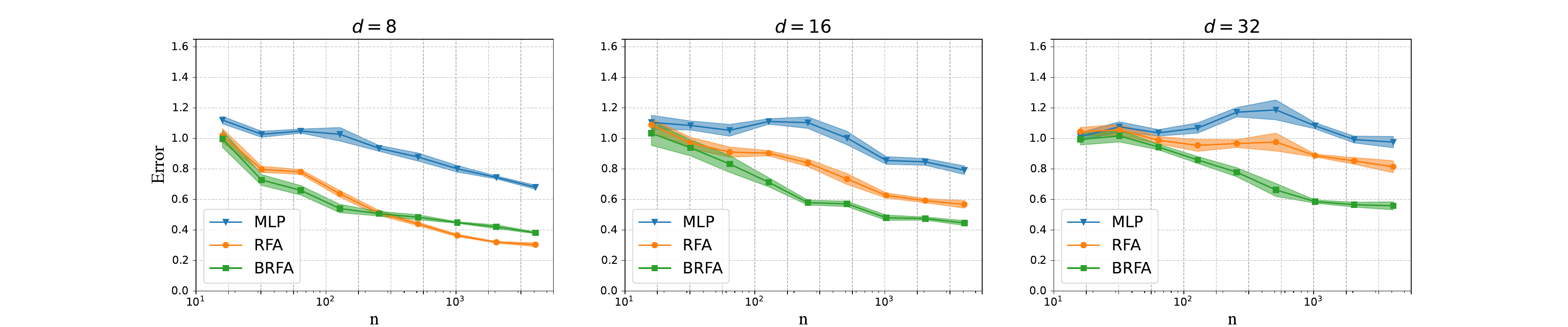}
\vspace{-0.3cm}
\caption{Test error of three \RF~models for learning $f_{2, 3}$ (\ref{eqn:experiment rf_hard}). We fix $N= 16$ while vary $d = 8$ (left), $16$ (middle), and $32$ (right). For a fair comparison among {\RFA}, {\BRFA}, and {\RFMLP}, the number of heads of {\RFMLP} is taken to be $M_{\RFMLP} = 9000$ (left), $17000$ (middle), and $33000$ (right). The other settings are the same as in Figure \ref{fig:Allfigs}.}
\label{fig:example 3,5}
\end{figure}

\begin{figure}
\hspace{-2.4cm}
\includegraphics[width=1.28\linewidth,bb= 0 0 1728 360]{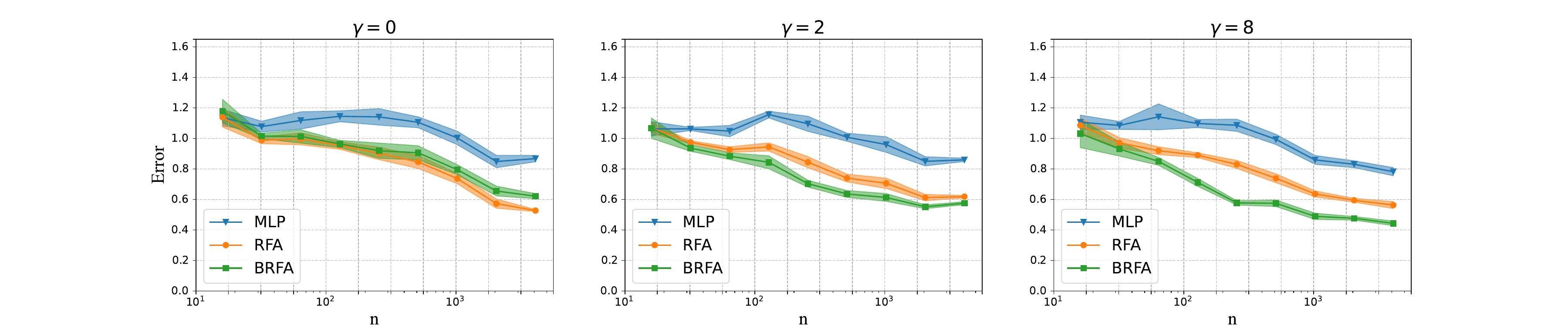}
\vspace{-0.3cm}
\caption{Test error of three \RF~models for learning $f_{4,\gamma}$ (\ref{eqn:experiment example 3}). We choose $\gamma=0$ (left), $2$ (middle), and $8$ (right). The other settings are the same as in Figure \ref{fig:Allfigs}.}

\label{fig:example 3}
\end{figure}

\subsection{Weight matrices in BERT}
\label{app:bert}

As noted in Section \ref{section:bias attention}, we plot the query-key matrices (weight matrices) of the BERT-Base model~\citep{devlin2018bert}\footnote{Downloaded from \url{https://huggingface.co/bert-base-uncased}.} and show that many query-key matrices are diagonally dominated. The BERT-Base model has 12 attention layers with 12 heads in each layer. Denote the query matrix in the $i$-th head of $j$-th layer as $Q_{ij} \in \R^{768\times 64}$ and the key matrix as $K_{ij} \in \R^{768\times 64}$. We compute $W_{ij} = \sqrt{768}\cdot Q_{ij} K_{ij}^\top$ for $i,j$ from 1 to 12, and then take the absolute value for each entry of the weight matrices $W_{ij}$. 
Figure \ref{fig:bert visualization} shows the heat maps of weight matrices of the 2nd, 5th, 8th, and 11th layers, where all matrices are clipped to the top-left $32\times 32$ block. As we can see, a lot of weight matrices are diagonally dominated. We remark that a very recent and concurrent paper \citep{trockman2023mimetic} observed similar phenomena for the ViT-Tiny model. They further show that diagonally dominated weight initialization of self-attention layers allows training transformers faster and obtaining higher final accuracies on CIFAR-10 and ImageNet datasets.

\end{document}